\DeclareFontFamily{U}{rsfs}{\skewchar\font127 }
\DeclareFontShape{U}{rsfs}{m}{n}{%
   <-6> rsfs5
   <6-8> rsfs7
   <8-> rsfs10
}{}
\newtheorem{theorem}{Theorem}
\newtheorem{lemma}{Lemma}
\newtheorem{proposition}{Proposition}
\newtheoremstyle{remboldstyle}
  {}{}{\itshape}{}{\bfseries}{.}{.5em}{{\thmname{#1 }}{\thmnumber{#2}}{\thmnote{ (#3)}}}
\theoremstyle{remboldstyle}
\theoremstyle{definition}
\theoremstyle{definition}
\let\oldproofname=\proofname
\renewcommand{\proofname}{\rm\textbf{\oldproofname}}
\newenvironment{dedication}
{\vspace{6ex}\begin{quotation}\begin{center}\begin{em}}
{\par\end{em}\end{center}\end{quotation}}
\def\Eqlfill@{\arrowfill@\Relbar\Relbar\Relbar}
\newcommand{\extendEql}[1][]{\ext@arrow 0359\Eqlfill@{#1}}
\newcommand{\Rmnum}[1]{\expandafter\@slowromancap\romannumeral #1@}
\newcommand{\HRule}{\rule{\linewidth}{0.5mm}}
\newcommand{\argmax}{\mathrm{arg}\displaystyle\max}
\newcommand{\maxmax}{\displaystyle\max}
\newcommand{\argmin}{\mathrm{arg}\displaystyle\min}
\newcolumntype{.}{D{.}{.}{-1}}
\numberwithin{equation}{chapter}
\newlength{\struthd}
\newcolumntype{M}[1]{>{\centering\arraybackslash}m{#1}}
\newcolumntype{N}{@{}m{0pt}@{}}
\DeclareFontFamily{OT1}{pzc}{}
\DeclareFontShape{OT1}{pzc}{m}{it}{<-> s * [1.10] pzcmi7t}{}
\DeclareMathAlphabet{\mathpzc}{OT1}{pzc}{m}{it}
\begin{document}

\pagenumbering{roman}

\fontfamily{phv}
\selectfont
\begin{titlepage}

\centering
\includegraphics[width=0.4\textwidth]{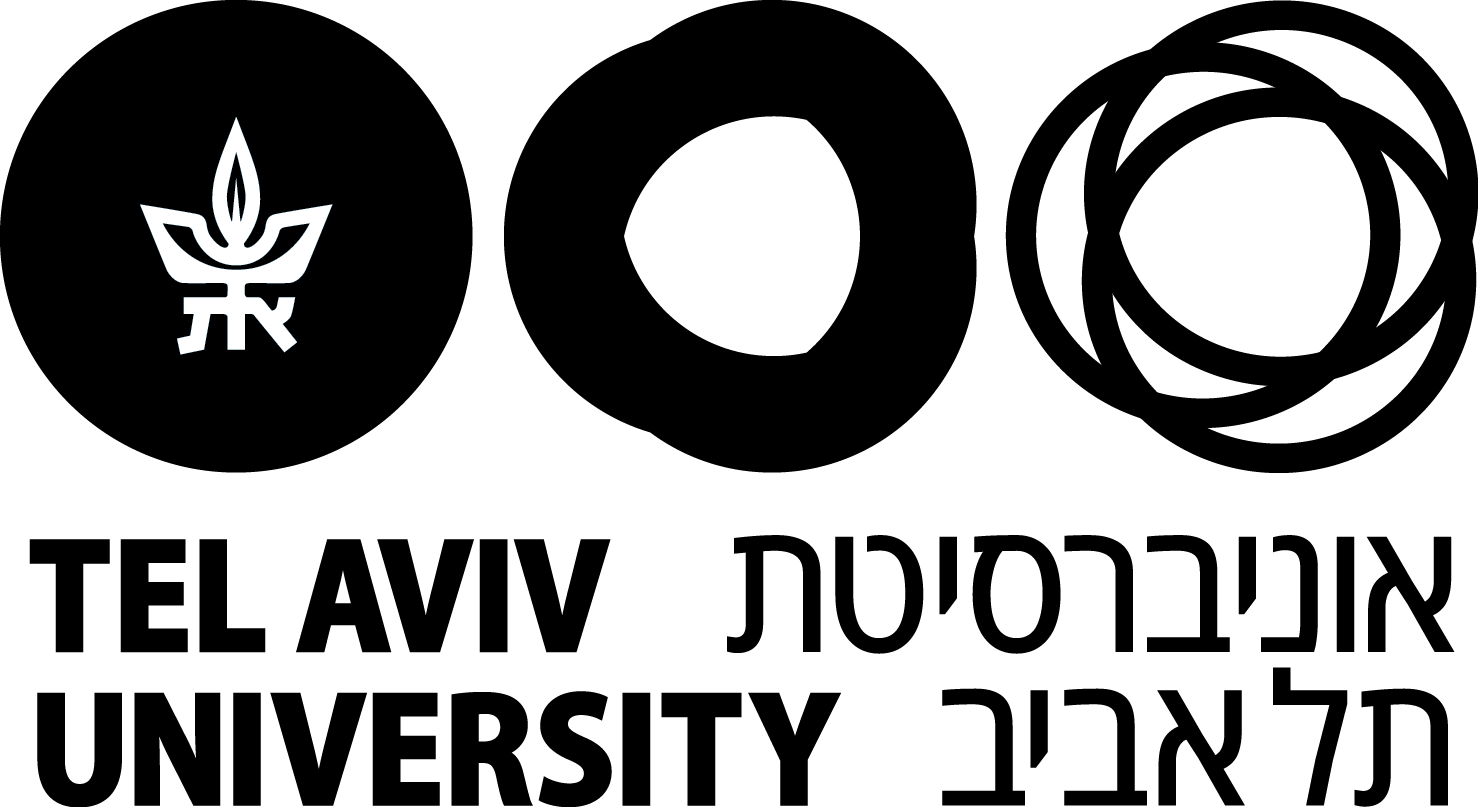}\\[0.5em]
%{\Large Tel-Aviv University}\\
\vspace{1.1em}
{\large
The Raymond and Beverly Sackler Faculty of Exact Sciences\\
School of Mathematical Sciences\\
Department of Statistics and Operations Research
}\\
\vspace{.2in}
\HRule\\
\vspace{0.2em}
\textbf{\LARGE PhD Thesis: \\ Generalized Independent Components Analysis\\[0.5em] Over Finite Alphabets}
\vspace{0.3em}
\HRule\\
\vspace{0.4in}
{\large by}\\[.75em]

\textbf{\LARGE Amichai Painsky}\\
\vspace{.35in}
{\large THESIS SUBMITTED TO THE SENATE OF TEL-AVIV UNIVERSITY}\\
{\large in partial fulfillment of the requirements for the degree of}\\
{\large ``DOCTOR OF PHILOSOPHY''}\\
\vspace{.4in}
{\Large Under the supervision of \\[0.5em] Prof. Saharon Rosset and Prof. Meir Feder}\\
\vspace{.10in}
{\Large September 01, 2016}
\end{titlepage}

\newpage
\thispagestyle{empty}
\mbox{}

\newgeometry{left=2.5cm, right=2.5cm, top = 3cm}
\chapter*{\centering \begin{Large}Abstract\end{Large}}
\addcontentsline{toc}{chapter}{Abstract}
\vspace{-1em}
\begin{quotation}
\begin{center}
{\bf\large Generalized Independent Components 
Analysis Over Finite Alphabets}\\[.5em]
{\large by Amichai Painsky}
\end{center}
\vspace{1em}
\noindent Independent component analysis (ICA) is a statistical method for transforming an observable multi-dimensional random vector into components that are as statistically independent as possible from each other. Usually the ICA framework assumes a model according to which the observations are generated (such as a linear transformation with additive noise). ICA over finite fields is a special case of ICA in which both the observations and the independent components are over a finite alphabet. \\

\noindent In this thesis we consider a formulation of the finite-field case in which an observation vector is decomposed to its independent components (as much as possible) with no prior assumption on the way it was generated. This generalization is also known as Barlow's minimal redundancy representation \citep{barlow1989finding} and is considered an open problem. We propose several theorems and show that this hard problem can be accurately solved with a branch and bound search tree algorithm, or tightly approximated with a series of linear problems \citep{painsky2015generalized}. Moreover, we show that there exists a simple transformation (namely, order permutation) which provides a greedy yet very effective approximation of the optimal solution \citep{painsky2016largetrans}. We further show that while not every random vector can be efficiently decomposed into independent components, the vast majority of vectors do decompose very well (that is, within a small constant cost), as the dimension increases. In addition, we show that we may practically achieve this favorable constant cost with a complexity that is asymptotically linear in the alphabet size. Our contribution provides the first efficient set of solutions to Barlow's problem with theoretical and computational guarantees.\\

\noindent The minimal redundancy representation (also known as factorial coding \citep{schmidhuber1992learning}) has many applications, mainly in the fields of neural networks and deep learning \citep{becker1996unsupervised, obradovic1996information,choi2000factorial, bartlett2002face, martiriggiano2005face, bartlett2007information,schmidhuber2011fast, schmidhuber2015deep}. In our work we show that the generalized ICA also applies to multiple disciplines in source coding \citep{painsky2016largetrans}. A special attention is given to large alphabet source coding \citep{painsky2015Universal, painsky2016largetrans,painsky2016Simple}. We propose a conceptual framework in which a large alphabet memoryless source is decomposed into multiple sources with with a much smaller alphabet size that are ``as independent as possible". This way we slightly increase the average code-word length as the decomposed sources are not perfectly independent, but at the same time significantly reduce the overhead redundancy resulted by the large alphabet of the observed source. Our suggested method is applicable for a variety of large alphabet source coding setups. 
\end{quotation}

%\newpage
%\thispagestyle{empty}
%\mbox{}

\clearpage

\begin{dedication}
To my father, Moti Painsky, who encouraged me to earn my B.Sc. and get a job
\end{dedication}

\newpage
\thispagestyle{empty}
\mbox{}

\chapter*{\centering \begin{Large}Acknowledgements\end{Large}}
\addcontentsline{toc}{chapter}{Acknowledgements}
\begin{quotation}
\noindent First and foremost I would like to express my gratitude and love to my wife Noga and my new born daughter Ofri, who simply make me happy every single day. Thank you for taking part in this journey with me.\\

\noindent I wish to express my utmost and deepest appreciation to my advisers, Prof. Saharon Rosset and Prof. Meir Feder, from whom I learned so much, in so many levels. Coming from different disciplines and backgrounds, Meir and Saharon inspired me to dream high, but at the same time stay accurate and rigorous. This collaboration with two extraordinary experts has led to a fascinating research with some meaningful contributions. On a personal level, it was a privilege to work with such exceptional individuals. Looking back five years ago, when I moved back to Israel to pursue my Ph.D. in Tel Aviv University, I could not dream it up any better.  \\

\noindent In the course of my studies I had the opportunity to meet and learn from so many distinguished individuals. Prof. Felix Abramovich, to whom I owe most of my formal statistical education. Prof. David Burshtein, who gave me the opportunity to teach the undergraduate Digital to Signal Processing class for the past four years. Prof. Uri Erez, who assigned me as a chief of Teaching Assistants in his Random Signals and Noise class. Dr. Ofer Shayevitz, who (unintentionally) led me to pursue my Ph.D. in Israel, on top of my offers abroad. Dr. Ronny Luss, who introduced me to Saharon when I moved back to Israel and guided my first steps in Optimization.  My fellow faculty members and graduate students from both the Statistics and Electrical Engineering departments, Dr. Ofir Harari, Dr.  Shlomi Lifshits, Dr. David Golan, Aya Vituri, Shachar Kaufman, Keren Levinstein, Omer Weissbord, Prof. Rami Zamir, Dr. Yuval Kochman, Dr. Zachi Tamo, Dr. Yair Yona, Dr. Anatoly Khina, Dr. Or Ordentlich, Dr. Ronen Dar, Assaf Ben-Yishai, Eli Haim, Elad Domanovitz, Nir Elkayam, Uri Hadar, Naor Huri, Nir Hadas, Lital Yodla and Svetlana Reznikov.\\

\noindent Finally I would like to thank my parents, Moti and Alicia Painsky, for their endless love, support and caring. It has been a constant struggle for the past decade, explaining what it is that I do for living. Yet it seems like you are quite content with the results. 
  
\end{quotation}

\restoregeometry
\clearpage

\tableofcontents
%\newpage
%\thispagestyle{empty}
%\mbox{}
%\thispagestyle{empty}

\pagenumbering{arabic}
\setcounter{page}{0}
%\mainmatter
\chapter{Introduction}\label{intro}
\graphicspath{{Introduction_Figures//}}
Independent Component Analysis (ICA) addresses the recovery of unobserved statistically independent source signals from their observed mixtures, without full prior knowledge of the mixing function or the statistics of the source signals. The classical Independent Components Analysis framework usually assumes linear combinations of the independent sources over the field of real valued numbers $\mathbb{R}$ \citep{hyvarinen2004independent}. A special variant of the ICA problem is when the sources, the mixing model and the observed signals are over a finite field.\\

\noindent Several types of generative mixing models can be assumed when working over GF(P), such as modulu additive operations, OR operations (over the binary field) and others. Existing solutions to ICA mainly differ in their assumptions of the generative mixing model, the prior distribution of the mixing matrix (if such exists) and the noise model. The common assumption to these solutions is that there exist statistically independent source signals which are mixed according to some known generative model (linear, XOR, etc.). \\

\noindent In this work we drop this assumption and consider a generalized approach which is applied to a random vector and decomposes it into independent components (as much as possible) with no prior assumption on the way it was generated. This problem was first introduced by \cite{barlow1989finding} and is considered a long--standing open problem.\\

\noindent In Chapter \ref{overview} we review previous work on ICA over finite alphabets. This includes two major lines of work. We first review the line of work initiated by \cite{yeredor2007ica}. In this work, Yeredor focuses on linear transformations where the assumptions are that the unknown sources are statistically independent and are linearly mixed (over GF(P)). Under these constraints, he proved that the there exists a unique transformation matrix to recover the independent signals (up to permutation ambiguity). This work was later extended to larger alphabet sizes \citep{yeredor2011independent} and different generative modeling assumptions \citep{vsingliar2006noisy,wood2012non,streich2009multi,nguyen2011binary}. In a second line of work, \cite{barlow1989finding} suggest to decompose the observed signals ``as much as possible", with no assumption on the generative model. Barlow et al. claim that such decomposition would capture and remove the redundancy of the data. However, they do not propose any direct method, and this hard problem is still considered open, despite later attempts  \citep{atick1990towards,schmidhuber1992learning,becker1996unsupervised}.\\

\noindent In Chapter \ref{Generalized_BICA} we present three different combinatorical approaches for independent decomposition of a given random vector, based on our published paper \citep{painsky2015generalized}. In the first, we assume that the underlying components are completely independent. This leads to a  simple yet highly sensitive algorithm which is not robust when dealing with real data. Our second approach drops the assumption of statistically independent components and strives to achieve ``as much independence as possible" (as rigorously defined in Section \ref{problem_formulation}) through a branch-and-bound algorithm. However, this approach is very difficult to analyze, both in terms of its accuracy and its computational burden. Then, we introduce a piece-wise linear approximation approach, which tightly bounds our objective from above. This method shows how to decompose any given random vector to its ``as statistically independent as possible" components with a computational burden that is competitive with any known benchmarks.\

\noindent In Chapter \ref{Order_Permutation} we present an additional, yet simpler approach to the generalized ICA problem, namely, 
\textit{order permutation}. Here, we suggest to represent the $i^{th}$ least probable realization of a given random vector with the number $i$ \citep{painsky2016largetrans}. Despite its simplicity, this method holds some favorable theoretical properties. We show that on the average (where the average is taken over all possible distribution functions of a given alphabet size), the order permutation is only a small constant away from full statistical independence, even as the dimension increases. In fact, this result provides a theoretical guarantee on the ``best we can wish for", when trying to decompose any random vector (on the average). In addition, we show that we may practically achieve the average accuracy of the order permutation with a complexity that is asymptotically linear in the alphabet size.\\

\noindent In Chapter \ref{BICA_Vs_Linear} we focus on the binary case and compare our suggested approaches with linear binary ICA (BICA). Although several linear BICA methods were presented in the past years \citep{attux2011immune,silva2014michigan,silva2014cobica}, they all lack theoretical guarantees on how well they perform. Therefore, we begin this section by introducing a novel lower bound on the generalized BICA problem over linear transformations. In addition, we present a simple heuristic which empirically outperforms all currently known methods. Finally, we show that the simple order permutation (presented in the previous section) outperforms the linear lower bound quite substantially, as the alphabet size increases.\\

\noindent Chapter \ref{Sequential_ICA} discusses a different aspect of the generalized ICA problem, in which we limit ourselves to sequential processing \citep{painsky2013memoryless}. In other words, we assume that the components of a given vector (or process) are presented to us one after the other, and our goal is to represent it as a process with statistically independent components (memoryless), in a no-regret manner.  In this chapter we present a non-linear method to generate such memoryless process from any given process under varying objectives and constraints. We differentiate between lossless and lossy methods, closed form and algorithmic solutions and discuss the properties and uniqueness of our suggested methods. In addition, we show that this problem is closely related to the multi-marginal optimal transportation problem \citep{monge1781memoire,kantorovich1942translocation,pass2011uniqueness}.\\

\noindent In Chapter \ref{Application_to_Data_Compression} we apply our methodology to multiple data compression problems. Here, we propose a conceptual framework in which a large alphabet memoryless source is decomposed into multiple ``as independent as possible" sources with a much smaller alphabet size \citep{painsky2015Universal,painsky2016largetrans,painsky2016Simple}. This way we slightly increase the average code-word length as the compressed symbols are no longer perfectly independent, but at the same time significantly reduce the redundancy resulted by the large alphabet of the observed source. Our proposed algorithm, based on our solutions to the Barlow's problem, shows to efficiently find the ideal trade-off so that the overall compression size is minimal. We demonstrate our suggested approach in a variety of lossless and lossy source coding problems. This includes the classical lossless compression, universal compression and high-dimensional vector quantization. In each of these setups, our suggested approach outperforms most commonly used methods. Moreover, our proposed framework is significantly easier to implement in most of these cases.\\

\noindent This thesis provides a comprehensive overview of the following publications \citep{painsky2013memoryless, painsky2014generalized, painsky2015Universal,painsky2016Binary, painsky2015generalized, painsky2016Simple} and a currently under--review manuscript \citep{painsky2016largetrans}.

\chapter{Overview of Related Work}\label{overview}
\graphicspath{{Overview_Figures//}}

In his work from $1989$, \cite{barlow1989finding} presented a \textit{minimally redundant  representation} scheme for binary data. He claimed that a good representation should capture and remove the redundancy of the data. This leads to a \textit{factorial representation/ encoding} in which the components are as mutually independent of each other as possible. Barlow suggested that such representation may be achieved through \textit{minimum entropy encoding}: an invertible transformation (i.e., with no information loss) which minimizes the sum of marginal entropies (as later presented in (\ref{eq:sum_ent_min})). Barlow's representation is also known as Factorial representation or Factorial coding.\\

\noindent Factorial representations have several advantages. The probability of the occurrence of any realization can be simply computed as the product of the probabilities of the individual components that represent it (assuming such decomposition exists). In addition, any method of finding factorial codes automatically implements \textit{Occam's razor} which prefers simpler models over more complex ones, where simplicity is defined as the number of parameters necessary to represent the joint distribution of the data.
In the context of supervised learning, independent features can also make later learning easier; if the input units to a supervised learning network are uncorrelated, then the Hessian of its error function is diagonal, allowing accelerated learning abilities \citep{becker1988improving}.
There exists a large body of work which demonstrates the use of factorial codes in learning problems. This mainly includes Neural Networks \citep{becker1996unsupervised, obradovic1996information} with application to facial recognition \citep{choi2000factorial, bartlett2002face, martiriggiano2005face, bartlett2007information}  and more recently, Deep Learning \citep{schmidhuber2011fast, schmidhuber2015deep}.  \\

\noindent Unfortunately Barlow did not suggest any direct method for finding factorial codes. Later, \cite{atick1990towards} proposed a cost function for Barlow's principle for linear systems, which minimize the redundancy of the data subject to a minimal information loss constraint. This is closely related to Plumbey's objective function  \citep{plumbley1993efficient}, which minimizes the information loss subject to a fixed redundancy constraint. \cite{schmidhuber1992learning} then introduced several ways of approximating Barlow's minimum redundancy principle in the non--linear case. This naturally implies much stronger results of statistical independence. However, Schmidhuber's scheme is rather complex, and appears to be subject to local minima \citep{becker1996unsupervised}. To our best knowledge, the problem of finding minimal redundant codes, or factorial codes, is still considered an open problem. In this work we present what appears to be the first efficient set of methods for minimizing Barlow's redundancy criterion, with theoretical and computational complexity guarantees. \\

\noindent In a second line of work, we may consider our contribution as a generalization of the BICA problem. In his pioneering BICA work, \cite{yeredor2007ica} assumed linear XOR mixtures and investigated the identifiability problem. A deflation algorithm is proposed for source separation based on entropy minimization. Yeredor assumes the number of independent sources is known and the mixing matrix is a $d$-by-$d$ invertible matrix. Under these constraints, he proves that the XOR model is invertible and there exists a unique transformation matrix to recover the independent components up to permutation ambiguity. \cite{yeredor2011independent} then extended his work to cover the ICA problem over Galois fields of any prime number. His ideas were further analyzed and improved by \cite{gutch2012ica}. \\

%In \cite{kaban2008factorisation} , the problem of factorization and de-noising of binary data due to independent continuous sources is considered. The sources are assumed to be continuous, and conditioned on the latent variables, the observations follow the independent Bernoulli likelihood model with mean vectors taking the form of a linear mixture of the latent variables. A variational EM solution is devised to infer the mixing coefficients. 

\noindent \cite{vsingliar2006noisy} introduced a noise-OR model for dependency among observable random variables using $d$ (known) latent factors. A variational inference algorithm is developed. In the noise-OR model, the probabilistic dependency between observable vectors and latent vectors is modeled via the noise-OR conditional distribution. \cite{wood2012non} considered the case where the observations are generated from a noise-OR generative model. The prior of the mixing matrix is modeled as the Indian buffet process \citep{griffiths2005infinite}. Reversible jump Markov chain Monte Carlo and Gibbs sampler techniques are applied to determine the mixing matrix. \cite{streich2009multi} studied the BICA problem  where the observations are either drawn from a signal following OR mixtures or from a noise component. The key assumption made in that work is that the observations are conditionally independent given the model parameters (as opposed to the latent variables). This greatly reduces the computational complexity and makes the scheme amenable to a objective descent-based optimization solution. However, this assumption is in general invalid. \cite{nguyen2011binary} considered OR mixtures and propose a deterministic iterative algorithm to determine the distribution of the latent random variables and the mixing matrix. \\
    
\noindent There also exists a large body of work on blind deconvolution with binary sources in the context of wireless communication \citep{diamantaras2006blind,yuanqing2003blind} and some literature on Boolean/binary factor analysis (BFA) which is also related to this topic \citep{belohlavek2010discovery}.

\newpage
\thispagestyle{empty}
\mbox{}
\thispagestyle{empty}

\chapter[Generalized ICA - Combinatorical Approach]{Generalized Independent Component Analysis - Combinatorical Approach}\label{Generalized_BICA}
\graphicspath{{Generalized_BICA_Figures//}}
\noindent The material in this Chapter is partly covered in \citep {painsky2015generalized}.

\section{Notation}

\noindent Throughout the following chapters we use the following standard notation: underlines denote vector quantities, where their respective components are written without underlines but with index. For example, the components of the $d$-dimensional vector $\underline{X}$ are $X_1, X_2, \dots X_d$.
Random variables are denoted with capital letters while their realizations are denoted with the respective lower-case letters. $P_{\underline{X}}\left(\b{x}\right)  \triangleq P(X_1 = x_1, X_2= x_2, \dots) $ is the probability function of  $\underline{X}$ while $H\left(\underline{X}\right)$ is the entropy of $\underline{X}$. This means  $H\left(\underline{X}\right)=-\sum_{\b{x}} P_{\underline{X}}\left(\b{x}\right) \log{P_{\underline{X}}\left(\b{x}\right)}$ where the $\log{}$ function denotes a logarithm of base $2$ and $\lim_{x \to 0} x\log{(x)} = 0$. Further, we denote the binary entropy of the Bernoulli parameter $p$ as $h_b (p)=-p\log{p}-(1-p)\log{(1-p)}$.

\section{Problem Formulation}
\label{problem_formulation}
\noindent Suppose we are given a random vector $\underline{X}\sim P_{\b{x}}\left(\b{x}\right)$ of dimension $d$ and alphabet size $q$ for each of its components. We are interested in an invertible, not necessarily linear, transformation $\underline{Y}=g(\underline{X})$ such that $\underline{Y}$  is of the same dimension and alphabet size, $g:\{1,\dots,q\}^d \rightarrow \{1,\dots,q\}^d$. In addition we would like the components of $\underline{Y}$ to be as "statistically independent as possible".\\

\noindent The common ICA setup is not limited to invertible transformations (hence $\underline{Y}$ and  $\underline{X}$ may be of different dimensions). However, in our work we focus on this setup as we would like $\underline{Y}=g(\underline{X})$ to be ``lossless" in the sense that we do not lose any information. Further motivation to this setup is discussed in \citep{barlow1989finding, schmidhuber1992learning} and throughout Chapter \ref{Application_to_Data_Compression}. \\   

\noindent Notice that an invertible transformation of a vector $\underline{X}$, where the components $\{X_i\}_{i=1}^d$ are over a finite alphabet of size $q$, is actually a one-to-one mapping (i.e., permutation) of its $q^d$ words. For example, if $\underline{X}$ is over a binary alphabet and is of dimension $d$, then there are $2^d!$ possible permutations of its words. \\

\noindent To quantify the statistical independence among the components of the vector $\underline{Y}$ we use the well-known total correlation measure, which was first introduced by \cite{watanabe1960information} as a multivariate generalization of the mutual information,
\begin{equation}\label{eq:min_criterion}
C(\underline{Y})={\displaystyle \sum_{j=1}^{d}{H(Y_j)}-H(\underline{Y})}.
\end{equation}
This measure can also be viewed as the cost of coding the vector $\underline{Y}$ component-wise, as if its components were statistically independent, compared to its true entropy. Notice that the total correlation is non-negative and equals zero iff the components of $\underline{Y}$ are mutually independent. Therefore, ``as statistically independent as possible" may be quantified by minimizing $C(\underline{Y})$. The total correlation measure was considered as an objective for minimal redundency representation by \cite{barlow1989finding}. It is also not new to finite field ICA problems, as demonstrated by \cite{attux2011immune}. Moreover, we show that it is specifically adequate to our applications, as described in Chapter \ref{Application_to_Data_Compression}. Note that total correlation is also the Kullback-Leibler divergence between the joint probability and the product of its marginals \citep{comon1994independent}.

Since we define $\underline{Y}$ to be an invertible transformation of $\underline{X}$ we have $H(\underline{Y})=H(\underline{X})$ and our minimization objective is
\begin{equation}
{\displaystyle \sum_{j=1}^{d}{H(Y_j)} \rightarrow min.}
\label{eq:sum_ent_min}
\end{equation}\\
\noindent In the following sections we focus on the binary case. The probability function of the vector  $\underline{X}$  is therefore defined by $P(X_1,\ldots,X_d)$ over $m=2^d$ possible words and our objective function is simply
\begin{equation}
{\displaystyle \sum_{j=1}^{d}{h_b(P(Y_j=0))} \rightarrow min.}
\label{eq:sum_ent_min_binary}
\end{equation}
We notice that $P(Y_j=0)$ is the sum of probabilities of all words whose $j^{th}$ bit equals $0$. We further notice that the optimal transformation is not unique. For example, we can always invert the  $j^{th}$ bit of all words, or shuffle the bits, to achieve the same minimum.\\

\iffalse
\noindent The following theorem shows that our minimization objective in (\ref{eq:sum_ent_min_binary}), can be viewed from a different perspective. The proof of this theorem is left for Appendix \ref{proof_of_Dkl_thm}. 

\begin{theorem}
\label{Dkl_thm}
Let $\mathbb{Q}$ be the group of joint probability distributions which can be expressed as a product of marginal bit-wise probability distributions (i.e. distributions with statistically independent components). 

Let $\underline{X} \sim \underline{p}$ be a binary vector of a dimension $d$. Let $\underline{p}^*$ be the probability distribution of $\underline{Y}=g(\underline{X})$ which minimizes (\ref{eq:sum_ent_min_binary}) under invertible permutations. Then, $\underline{p}^*$ also satisfies
$$ \underline{p}^* = \argmin_{\underline{q} \in \mathbb{Q}} D_{kl}(\underline{p}||\underline{q}) $$ 
where $D_{kl}(\underline{p}||\underline{q}) $ is the Kullback-Leibler divergence of $\underline{p}$ and $\underline{q}$, $D_{kl}(\underline{p}||\underline{q})=\sum_i p_i\log{\frac{p_i}{q_i}}$.
\end{theorem}
\noindent This means that our objective (\ref{eq:sum_ent_min_binary}) can also be viewed as finding a probability distribution $\underline{p}^*$, whose components are ``independent" (in the sense that it is composed of statistically independent marginals), and minimizes the $D_{kl}(\underline{p}||\underline{q})$ for all $\underline{q} \in \mathbb{Q}$.  \\  
\fi

\noindent Any approach which exploits the full statistical description of the joint probability distribution
of $\underline{X}$  would require going over all $2^d$ possible words at least once. Therefore, a computational load of at least $O(2^d)$ seems inevitable. Still, this is significantly smaller (and often realistically far more affordable) than $O(2^d!)$, required by brute-force search over all possible permutations. 
Indeed, the complexity of the currently known binary ICA (and factorial codes) algorithms falls within this range. The AMERICA algorithm \citep{yeredor2011independent}, which assumes a XOR mixture, has a complexity of $O(d^2\cdot 2^d)$. The MEXICO algorithm, which is an enhanced version of AMERICA, achieves a complexity of $O(2^d)$ under some restrictive assumptions on the mixing matrix. In \citep{nguyen2011binary} the assumption is that the data was generated over OR mixtures and the asymptotic complexity is $O(d \cdot 2^d)$. There also exist other heuristic methods which avoid an exhaustive search, such as \citep{attux2011immune} for BICA or \citep{schmidhuber1992learning} for factorial codes. These methods, however, do not guarantee convergence to the global optimal solution.  \\

\noindent Looking at the BICA framework, we notice two fundamental a-priori assumptions:
\begin{enumerate}
\item	The vector $\underline{X}$ is a mixture of independent components and there exists an inverse transformation which decomposes these components.
\item	The generative model (linear, XOR field, etc.) of the mixture function is known.

\end{enumerate}
In this work we drop these assumptions and solve the ICA problem over finite alphabets with no prior assumption on the vector $\underline{X}$. 
As a first step towards this goal, let us drop Assumption $2$ and keep Assumption $1$, stating that underlying independent components do exist. The following combinatorial algorithm proves to solve this problem, over the binary alphabet, in $O(d \cdot 2^d)$ computations.

\section{Generalized BICA with Underlying Independent Components}
\label{BICA with Underlying Independent Components}
\noindent In this section we assume that underlying independent components exist. In other words, we assume there exists a permutation $\underline{Y}=g(\underline{X})$  such that the vector $\underline{Y}$ is statistically independent $P(Y_1,\ldots,Y_d)=\prod_{i=1}^{d}{P(Y_j)}$. Denote the marginal probability of the $j^{th}$ bit equals $0$ as $\pi_j=P(Y_j=0)$. Notice that by possibly inverting bits we may assume every $\pi_j$ is at most $\frac{1}{2}$ and by reordering we may have,
without loss of generality, that $\pi_d \leq  \pi_{d-1} \leq  \cdots  \leq  \pi_1 \leq 1/2$. In addition, we assume a non-degenerate setup where $\pi_d>0$. For simplicity of presentation, we first analyze the case where $\pi_d <  \pi_{d-1} <  \cdots  <  \pi_1 \leq 1/2$. This is easily generalized to the case where several $\pi_j$ may equal, as discussed later in this section. \\

%Denote the given $N=2^n$ probabilities $P(X_1,\ldots,X_n)$ as  $p_i$ for every  $i=1\ldots N$, and then sort them in an ascending order such that $p_1\leq p_2 \leq \cdots \leq p_N$.
\noindent Denote the $m=2^d$ probabilities of $P(\underline{Y}=\underline{y} )$
as $p_1, p_2, \ldots , p_m$, assumed to be ordered so that $p_1 \leq p_2 \leq \cdots \leq 
p_m$.
We first notice that the probability of the all-zeros word, $P(Y_d=0, Y_{d-1}=0, \ldots ,Y_1=0)=\prod_{j=1}^{d}{\pi_j}$  is the smallest possible probability since all parameters are not greater than 0.5. Therefore we have $p_1=\prod_{j=1}^{d}{\pi_j}$.\\

\noindent Since $\pi_1$ is the largest parameter of all $\pi_j$, the second smallest probability is just $P(Y_d=0,\ldots,Y_2=0,Y_1=1)=\pi_d \cdot \pi_{d-1} \cdot \ldots \cdot \pi_2 \cdot (1-\pi_1 )=p_2$. Therefore we can recover $\pi_1$ from $\frac{1-\pi_1}{\pi_1} =\frac{p_2}{p_1}$,  leading to $\pi_1=\frac{p_1}{p_1+p_2}$.
We can further identify the third smallest probability as $p_3=\pi_d \cdot \pi_{d-1} \cdot \ldots \cdot \pi_3 \cdot (1-\pi_2) \cdot \pi_1$. This leads to $\pi_2=\frac{p_1}{p_1+p_3}$.\\

\noindent However, as we get to $p_4$ we notice we can no longer uniquely identify its components; it may either equal $\pi_d \cdot \pi_{d-1} \cdot \ldots \cdot \pi_3 \cdot (1-\pi_2) \cdot (1-\pi_1)$ or $\pi_d \cdot \pi_{d-1} \cdot \ldots \cdot (1-\pi_3 )\cdot \pi_2 \cdot \pi_1$. This ambiguity is easily resolved since we can compute the value of $\pi_d \cdot \pi_{d-1} \cdot \ldots \cdot \pi_3 \cdot (1-\pi_2) \cdot (1-\pi_1)$ from the parameters we already found and compare it with $p_4$. Specifically, If $\pi_d \cdot \pi_{d-1} \cdot \ldots \cdot \pi_3 \cdot (1-\pi_2) \cdot (1-\pi_1) \neq p_4$ then we necessarily have $\pi_d \cdot \pi_{d-1} \cdot \ldots \cdot (1-\pi_3 )\cdot \pi_2 \cdot \pi_1 = p_4$ from which we can recover $\pi_3$. Otherwise $\pi_d \cdot \pi_{d-1} \cdot \ldots \cdot (1-\pi_3 )\cdot \pi_2 \cdot \pi_1 = p_5$ from which we can again recover $\pi_3$ and proceed to the next parameter.\\

\noindent Let us generalize this approach. Denote $\Lambda_k$ as a set of probabilities of all words whose $(k+1)^{th}, \dots, d^{th}$ bits are all zero.
\begin{theorem} 
\label{theorem1_Generalized_BICA}
Let $i$ be an arbitrary index in $\{1,2,\dots,m\}$.
Assume we are given that $p_i$, the $i^{th}$ smallest probability in a given set of probabilities, satisfies the following decomposition 
\begin{equation}\nonumber
p_i=\pi_d \cdot \pi_{d-1} \cdot \ldots \cdot \pi_{k+1} \cdot (1-\pi_k) \cdot \pi_{k-1}  \cdot \ldots \cdot ∙\pi_1.
\end{equation}
 Further assume the values of $\Lambda_{k-1}$ are all given in a sorted manner. Then the complexity of finding the value of $\pi_d \cdot \pi_{d-1} \cdot \ldots \cdot \pi_{k+2} \cdot (1-\pi_{k+1}) \cdot \pi_k \cdot \ldots \cdot ∙\pi_1$, and calculating and sorting the values of $\Lambda_k$  is $O(2^k)$.
\end{theorem}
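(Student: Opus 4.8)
The plan is to track what needs to be computed at the $i$-th step of the recovery procedure and to bound the arithmetic done at that step. At the stage described, we know $\pi_1,\dots,\pi_k$ (the parameter $\pi_k$ has just been recovered from $p_i$) and we are given $\Lambda_{k-1}$, the set of $2^{k-1}$ probabilities of words whose bits $k,k+1,\dots,d$ are all zero — equivalently, all products of the form $\pi_d\cdots\pi_{k}\cdot\prod_{j<k}\theta_j$ with each $\theta_j\in\{\pi_j,1-\pi_j\}$ — held in sorted order. The first task is to identify the value $v:=\pi_d\cdots\pi_{k+2}\cdot(1-\pi_{k+1})\cdot\pi_k\cdots\pi_1$. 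The key observation is that this equals $(1-\pi_{k+1})/\pi_{k+1}$ times $p_1=\prod_{j=1}^d\pi_j$, so once $\pi_{k+1}$ itself is known the value is immediate; but to locate $\pi_{k+1}$ we must disambiguate the next unprocessed $p_\ell$ exactly as in the worked $p_4$ example, by computing the candidate ``all bits $\le k$ free, bit $k+1$ zero'' product from already-known parameters and comparing. I would note that this candidate product is itself an element of $\Lambda_{k-1}$ scaled by $\pi_k\cdots\pi_1$-type factors already available, so forming it is $O(k)$ multiplications, hence $O(2^k)$.

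Second, and this is where the bulk of the cost lies, I would show how to obtain $\Lambda_k$ sorted from $\Lambda_{k-1}$ sorted. By definition $\Lambda_k$ is the set of probabilities of words whose bits $k+1,\dots,d$ are zero, so it splits into two halves: the words with bit $k$ equal to $0$, whose probabilities are exactly $\pi_k\cdot\Lambda_{k-1}$ (each element of $\Lambda_{k-1}$ multiplied by the scalar $\pi_k$), and the words with bit $k$ equal to $1$, whose probabilities are $\frac{1-\pi_k}{\pi_k}$ times the corresponding element of the first half. Multiplying a sorted list by a positive scalar preserves order, so each half is produced sorted in $O(2^{k-1})$ time. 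It then remains to merge two sorted lists of length $2^{k-1}$ into one sorted list of length $2^k$, which is the standard linear-time merge, $O(2^k)$. Summing the three contributions — disambiguation/identification $O(2^k)$, scaling to form the two halves $O(2^k)$, merging $O(2^k)$ — gives the claimed $O(2^k)$ total.

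The main obstacle I anticipate is not the arithmetic but making the bookkeeping airtight: one must be careful that ``the next unprocessed $p_\ell$'' after $p_i$ is indeed $p_{i+1}$ or $p_{i+2}$ (the ambiguity the example resolves), that the candidate product used for the comparison is genuinely reconstructible from $\{\pi_1,\dots,\pi_k\}$ and $\Lambda_{k-1}$ without a fresh pass over all $2^d$ probabilities, and that the monotonicity claim ``multiplying a sorted list by $\pi_k<\tfrac12$ and by $(1-\pi_k)/\pi_k>1$ yields two lists whose merge is $\Lambda_k$'' correctly accounts for the strict ordering $\pi_d<\cdots<\pi_1$ assumed here (so that no ties complicate the merge). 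Once those are pinned down, the complexity bound follows by adding the $O(2^k)$ contributions of the scaling step, the merge step, and the constant-number-of $O(k)$ reconstructions.
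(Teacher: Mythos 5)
Your overall plan mirrors the paper's (maintain $\Lambda_k$ sorted by scaling a sorted list by a positive scalar and doing a linear merge), and the complexity accounting for that part is fine, but two points in the proposal are genuinely wrong or unresolved. First, the decomposition of $\Lambda_k$ is off: with your own (and the paper's) definition, an element of $\Lambda_{k-1}$ already contains the factor $\pi_k$ (it is of the form $\pi_d\cdots\pi_k\prod_{j<k}\theta_j$), so the ``bit $k$ equal to $0$'' half of $\Lambda_k$ is $\Lambda_{k-1}$ itself, \emph{not} $\pi_k\cdot\Lambda_{k-1}$. The paper forms $\Lambda_k$ by merging $\Lambda_{k-1}$ (unchanged) with $\bar{\Lambda}_{k-1}=\frac{1-\pi_k}{\pi_k}\Lambda_{k-1}$; your construction produces $\pi_k\Lambda_{k-1}\cup(1-\pi_k)\Lambda_{k-1}=\pi_k\Lambda_k$, a set that is not a subset of the actual probabilities $\{p_\ell\}$, which breaks the very comparison with the master list that the next step relies on. The complexity argument survives, but the ``calculating $\Lambda_k$'' claim of the theorem does not.

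Second, the identification of $v=\pi_d\cdots\pi_{k+2}(1-\pi_{k+1})\pi_k\cdots\pi_1$ is left circular: writing $v=\frac{1-\pi_{k+1}}{\pi_{k+1}}p_1$ is of no help because $\pi_{k+1}$ is precisely the unknown that will be recovered \emph{from} $v$. Moreover, the ``next unprocessed $p_\ell$'' need not be $p_{i+1}$ or $p_{i+2}$: arbitrarily many elements of $\Lambda_k$ (words whose ones lie only in bits $1,\dots,k$) can sit between $p_i$ and $v$ in the sorted order, so resolving a single two-way ambiguity as in the $p_4$ example does not generalize. The paper's resolution — which is the point of requiring $\Lambda_{k-1}$ (hence $\Lambda_k$) sorted — is that $v$ is the smallest probability in the master sorted list that does not appear in $\Lambda_k$, found as the first index at which the two sorted lists differ; this costs $O(\log 2^k)=O(k)$ by binary search (even a linear scan stays within $O(2^k)$). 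You flag this issue yourself as something to ``pin down,'' but since it is the crux enabled by the theorem's hypotheses, leaving it open is a genuine gap rather than bookkeeping.
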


\begin{proof} 
Since the values of  $\Lambda_{k-1}$ and $\pi_k$ are given we can calculate the values which are still missing to know $\Lambda_k$ entirely by simply multiplying each element of $\Lambda_{k-1}$ by $\frac{1-\pi_k}{\pi_k}$.  Denote this set of values as $\bar{\Lambda}_{k-1}$. Since we assume the set $\Lambda_{k-1}$ is sorted then $\bar{\Lambda}_{k-1}$ is also sorted and the size of each set is $2^{k-1}$. Therefore, the complexity of sorting $\Lambda_{k}$ is the complexity of merging two sorted lists, which is $O(2^k)$.\\

\noindent In order to find the value of $\pi_d \cdot \pi_{d-1} \cdot \ldots \cdot \pi_{k+2} \cdot (1-\pi_{k+1}) \cdot \pi_k \cdot \ldots \cdot ∙\pi_1$ we need to go over all the values which are larger than $p_i$ and are not in $\Lambda_k$. However, since both the list of all $m$ probabilities and the set $\Lambda_k$ are sorted we can perform a binary search to find the smallest entry for which the lists differ. The complexity of such search is  $O(\log⁡{(2^k)})=O(k)$ which is smaller than $O(2^k)$. Therefore, the overall complexity is $O(2^k)$ 
\end{proof}

\noindent Our algorithm is based on this theorem. We initialize the values $p_1, p_2$ and $\Lambda_{1}$, and   
for each step $k=3 \ldots d$ we calculate $\pi_d \cdot \pi_{d-1} \cdot \ldots \cdot \pi_{k+1} \cdot (1-\pi_{k}) \cdot \pi_{k-1} \cdot \ldots \cdot ∙\pi_1$ and $\Lambda_{k-1}$. \\

\noindent The complexity of our suggested algorithm is therefore $\sum_{k=1}^{d}{O(2^k)}=O(d\cdot2^d)$. 
However, we notice that by means of the well-known quicksort algorithm \citep{Hoare:1961:AQ:366622.366644}, the complexity of our preprocessing sorting phase is $O(m\log{(m)})=O(d\cdot2^d)$.
Therefore, in order to find the optimal permutation we need $O(d\cdot2^d)$ for sorting the given probability list and $O(2^d)$ for extracting the parameters of $P(X_1, \ldots ,X_d)$.\\

\noindent Let us now drop the assumption that the values of $\pi_j$'s are non-equal. That is,  $\pi_d \leq  \pi_{d-1} \leq  \cdots  \leq  \pi_1 \leq 1/2$. It may be easily verified that both
Theorem \ref{theorem1_Generalized_BICA} and our suggested algorithm still hold, with the difference that instead of choosing the \textit{single} smallest entry in which the probability lists differ, we may choose \textit{one of the} (possibly many) smallest entries. This means that instead of recovering the unique value of $\pi_k$ at the $k^{th}$ iteration (as the values of $\pi_j$'s are assumed non-equal), we recover the $k^{th}$ smallest value in the list  $\pi_d \leq  \pi_{d-1} \leq  \cdots  \leq  \pi_1 \leq 1/2$.   \\

\noindent Notice that this algorithm is combinatorial in its essence and is not robust when dealing with real data. In other words, the performance of this algorithm strongly depends on the accuracy of $P(X_1, \ldots ,X_d)$ and does not necessarily converge towards the optimal solution when applied on estimated probabilities.  \\

\section{Generalized BICA via Search Tree Based Algorithm}
\label{Generalized BICA via Search Tree Based Algorithm}

We now turn to the general form of our problem (\ref{eq:min_criterion}) with no further assumption on the vector $\underline{X}$.

\noindent We denote $\Pi_j$= \{all words whose $j^{th}$ bit equals 0\}. In other words, $\Pi_j$ is the set of words that ``contribute" to $P(Y_j=0)$. We further denote the set of $\Pi_j$ that each word is a member of as $\Gamma_i$ for all $i=1\ldots m$ words. For example, the all zeros word $\{00\ldots0\}$ is a member of all $\Pi_j$ hence $\Gamma_1=\{ \Pi_1,\ldots,\Pi_d\}$. We define the optimal permutation as the permutation of the $m$ words that achieves the minimum of $C(\underline{Y})$ such that $\pi_j \leq 1/2$ for every $j$.\\ %Notice that this representation can always be met by inverting the $i^{th}$ bit of all the words in $\Pi_i$ for every $\pi_i$ which is greater than 1/2 so that the entropy is maintained. %We denote the optimal $\pi_i$ as $\pi_i^*$. Therefore our goal is to minimize $C(\underline{Y})$, as much as we can, under permutations of $\{p_i\}_{i=1}^{N}$.

\noindent Let us denote the binary representation of the $i^{th}$ word with $\underline{y}(i)$.
Looking at the $m$ words of the vector $\underline{Y}$ we say that a word $\underline{y}(i)$ is majorant to $\underline{y}(l)$ $(\underline{y}(i) \succeq \underline{y}(l))$ if  $\Gamma_l \subset \Gamma_i$. 
In other words, $\underline{y}(i)$ is majorant to $\underline{y}(l)$ iff for every bit in $\underline{y}(l)$ that equals zeros,  the same bit equals zero in $\underline{y}(i)$.  
In the same manner a word $\underline{y}(i)$ is minorant to $\underline{y}(l)$  $(\underline{y}(i)\preceq \underline{y}(l))$ if  $\Gamma_i \subset \Gamma_l$, that is iff for every bit in $\underline{y}(i)$ that equals zeros, the same bit equals zero in $\underline{y}(l)$. For example, the all zeros word $\{00\ldots0\}$ is majorant to all the words, while the all ones word $\{11\ldots1\}$ is minorant to all the word as none of its bits equals zeros. \\

\noindent We say that $\underline{y}(i)$ is a largest minorant to $\underline{y}(l)$ if there is no other word that is minorant to $\underline{y}(l)$ and majorant to $\underline{y}(i)$. We also say that there is a partial order between $\underline{y}(i)$  and $\underline{y}(l)$ if one is majorant or minorant to the other.  \\

\begin{theorem}  \label{th:partial_order}
The optimal solution must satisfy $P(\underline{y}(i))\geq P(\underline{y}(l))$ for all $\underline{y}(i)\preceq \underline{y}(l)$.
\end{theorem}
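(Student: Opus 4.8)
The plan is a one‑step exchange argument by contradiction. Since $\underline{Y}=g(\underline{X})$ is invertible we have $H(\underline{Y})=H(\underline{X})$, so minimizing $C(\underline{Y})$ is equivalent to minimizing $\sum_{j=1}^{d}h_b(\pi_j)$ over all permutations of the $m=2^d$ words subject to $\pi_j\leq 1/2$ for every $j$ (here $\pi_j=P(Y_j=0)$ is the total mass of the words whose $j^{th}$ bit is $0$). Suppose an optimal assignment violates the claim: there are distinct words with $\underline{y}(i)\preceq \underline{y}(l)$ yet $P(\underline{y}(i))<P(\underline{y}(l))$. Write $a=P(\underline{y}(i))$ and $b=P(\underline{y}(l))$, so $a<b$.

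I would then consider the modified transformation obtained by swapping the probability masses attached to $\underline{y}(i)$ and $\underline{y}(l)$ (formally, post‑composing $g$ with the transposition of their two preimages); this is again a permutation of the words, hence a feasible invertible transformation with the same joint entropy. The key step is to track how each marginal $\pi_j$ changes. On a coordinate $j$ where $\underline{y}(i)$ and $\underline{y}(l)$ agree, both words contribute to $\pi_j$ or neither does, so the swap leaves $\pi_j$ unchanged. Let $D$ be the set of coordinates where the two words disagree. Because $\underline{y}(i)\preceq \underline{y}(l)$, i.e. $\Gamma_i\subset\Gamma_l$, every coordinate in $D$ is one where $\underline{y}(l)$ has a $0$ and $\underline{y}(i)$ has a $1$; hence for $j\in D$ the word $\underline{y}(l)$ contributes to $\pi_j$ while $\underline{y}(i)$ does not, and after the swap $\pi_j$ decreases by exactly $b-a>0$. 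Since the two words are distinct, $\Gamma_i\subsetneq\Gamma_l$ and $D\neq\varnothing$.

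To finish, I would note that the new marginals satisfy $\pi_j^{\mathrm{new}}\leq\pi_j\leq 1/2$ for all $j$ — strictly smaller on $D$, equal off $D$ — so the modified transformation is still feasible (no bit needs to be re‑flipped). On $[0,1/2]$ the binary entropy $h_b$ is strictly increasing, since its derivative $\log\frac{1-p}{p}$ is positive on $[0,1/2)$; therefore $h_b(\pi_j^{\mathrm{new}})<h_b(\pi_j)$ for every $j\in D$ and $h_b(\pi_j^{\mathrm{new}})=h_b(\pi_j)$ otherwise. Summing over $j$ gives $\sum_j h_b(\pi_j^{\mathrm{new}})<\sum_j h_b(\pi_j)$, contradicting optimality. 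Hence no violating pair exists, which is the assertion.

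The only points that need care — and the place I would expect the bulk of the (short) write‑up to go — are verifying that the swap never pushes any marginal above $1/2$ (which holds automatically because every affected marginal only decreases, so the feasibility/bit‑flipping issue does not arise), and making explicit that under $\preceq$ all disagreement coordinates point the same way (the minorant carries the $1$'s), which is exactly the content of the definition $\Gamma_i\subset\Gamma_l$. Everything else is the routine monotonicity of $h_b$ and the observation that transpositions of word‑labels are permutations.
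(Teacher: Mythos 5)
Your proof is correct and follows essentially the same one-step exchange argument as the paper: assume a violating pair $\underline{y}(i)\preceq\underline{y}(l)$ with $P(\underline{y}(i))<P(\underline{y}(l))$, swap their probability masses, and note that the affected marginals only decrease, so the objective strictly drops by monotonicity of $h_b$ on $[0,\tfrac{1}{2}]$, contradicting optimality. If anything, your write-up is slightly more careful than the paper's, which picks a single $\Pi_{j^*}\in\Gamma_l\setminus\Gamma_i$ and asserts that only that summand changes, whereas you correctly track the full set of disagreement coordinates, observe each corresponding $\pi_j$ decreases by the same positive amount, and verify that feasibility ($\pi_j\le\tfrac{1}{2}$) is preserved.
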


\begin{proof} 
Assume there exists $\underline{y}(i) \preceq \underline{y}(l), i \neq l$ such that $P(\underline{y}(i))<P(\underline{y}(l))$ which achieves the lowest (optimal) $C(\underline{Y})$. Since $\underline{y}(i) \preceq \underline{y}(l)$ then, by definition, $\Gamma_i \subset \Gamma_l$.  This means there exists $\Pi_{j^*}$ which satisfies $\Pi_{j^*} \in \Gamma_l \setminus \Gamma_i$. Let us now exchange (swap) the words $\underline{y}(i)$ and $\underline{y}(l)$. Notice that this swapping only modifies $\Pi_{j^*}$ but leaves all other $\Pi_j$'s untouched. Therefore this swap leads to a lower $C (\underline{Y})$ as the sum in (\ref{eq:sum_ent_min_binary}) remains untouched apart from its ${j^*}^{th}$ summand which is lower than before. This contradicts the optimality assumption
\end{proof}

\noindent We are now ready to present our algorithm. As a preceding step let us sort the probability vector $\underline{p}$ (of \underline{X}) such that $p_i \leq p_{i+1}$.
As described above, the all zeros word is majorant to all words and the all ones word is minorant to all words. Hence, the smallest probability $p_1$ and the largest probability $p_m$ are allocated to them respectively, as Theorem \ref{th:partial_order} suggests. We now look at all words that are largest minorants to the all zeros word.\\

\noindent Theorem \ref{th:partial_order} guarantees that $p_2$ must be allocated to one of them. We shall therefore examine all of them. This leads to a search tree structure in which every node corresponds to an examined allocation of $p_i$. In other words, for every allocation of $p_i$ we shall further examine the allocation of $p_{i+1}$ to each of the largest minorants that are still not allocated.  This process ends once all possible allocations are examined.\\

\noindent The following example (Figure \ref{fig:exhaustive}) demonstrates our suggested algorithm with $d=3$. The algorithm is initiated with the allocation of $p_1$ to the all zeros word. In order to illustrate the largest minorants to $\{000\}$ we use the chart of the partial order at the bottom left of Figure \ref{fig:exhaustive}. As visualized in the chart, every set $\Pi_j$ is encircled by a different shape (e.g. ellipses, rectangles) and the largest minorants to $\{000\}$ are $\{001\}$, $\{010\}$ and $\{100\}$. As we choose to investigate the allocation of $p_2$ to $\{001\}$ we notice that remaining largest minorants, of all the words that are still not allocated, are $\{010\}$ and $\{100\}$. We then investigate the allocation of $p_3$ to $\{010\}$, for example, and continue until all $p_i$ are allocated.  
\begin{figure}[h!]
\centering
\includegraphics[width = 0.7\textwidth,bb= 50 469 550 715,clip]{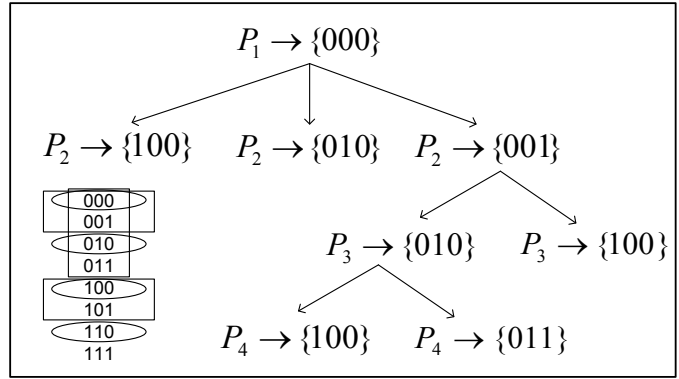}
\caption{Search tree based algorithm with $d=3$}
\label{fig:exhaustive}
\end{figure}

\noindent This search tree structure can be further improved by introducing a depth-first branch and bound enhancement. This means that before we examine a new branch in the tree we bound the minimal objective it can achieve (through allocation of the smallest unallocated probability to all of its unallocated words for example). \\ 

\noindent The asymptotic computational complexity of this branch and bound search tree is quite involved to analyze. However, there are several cases where a simple solution exists (for example, for $d=2$ it is easy to show that the solution is to allocate all four probabilities in ascending order).\\

\section{Generalized BICA via Piecewise Linear Relaxation Algorithm}
\label{The Relaxed Generalized BICA}

In this section we present a different approach which bounds the optimal solution from above as tightly as we want in $O(d^k \cdot 2^d)$ operations, where $k$ defines how tight the bound is. Throughout this section we assume that $k$ is a fixed value, for complexity analysis purposes.  \\

\noindent Let us first notice that the problem we are dealing with (\ref{eq:sum_ent_min_binary}) is a concave minimization problem over a discrete permutation set which is hard. However, let us assume for the moment that instead of our ``true" objective (\ref{eq:sum_ent_min_binary}) we have a simpler linear objective function. That is, 
\begin{equation} \label{eq:linear}
{\displaystyle L(\underline{Y})=\sum_{j=1}^{d}{a_j \pi_j+b_j}=\sum_{i=1}^{m}{c_i P(\underline{Y}=\underline{y}(i))}+d_0}
\end{equation}
where the coefficients $a_j,b_j,c_i,d_0$ correspond to different slopes and intersects that are later defined. Notice that the last equality changes the summation from over $d$ components to a summation over all $m=2^d$ words (this change of summation is further discussed in Section \ref{matrix-vector}).\\

\noindent In order to minimize this objective over the $m=2^d$ given probabilities $\underline{p}$ we simply sort these probabilities in a descending order and allocate them such that the largest probability goes with the smallest coefficient $c_i$ and so on. Assuming both the coefficients and the probabilities are known and sorted in advance, the complexity of this procedure is linear in $m$.\\

\noindent We now turn to the generalized binary ICA problem as defined in (\ref{eq:sum_ent_min_binary}). Since our objective is concave we would first like to bound it from above with a piecewise linear function which contains $k$ pieces, as shown in Figure \ref{fig:piecewise linear}. In this work we do not discuss the construction of such upper-bounding piecewise linear function, nor tuning methods for the value of $k$, and assume this function is given for any fixed $k$. Notice that the problem of approximating concave curves with piecewise linear functions is very well studied (for example, by \cite{gavrilovic1975optimal}) and may easily be modified to the upper bound case.  We show that solving the piecewise linear problem approximates the solution to (\ref{eq:sum_ent_min_binary}) as closely as we want, in significantly lower complexity.\\ 

\begin{figure}[h]
\centering
\includegraphics[width = 0.80\textwidth,bb= 50 140 780 580,clip]{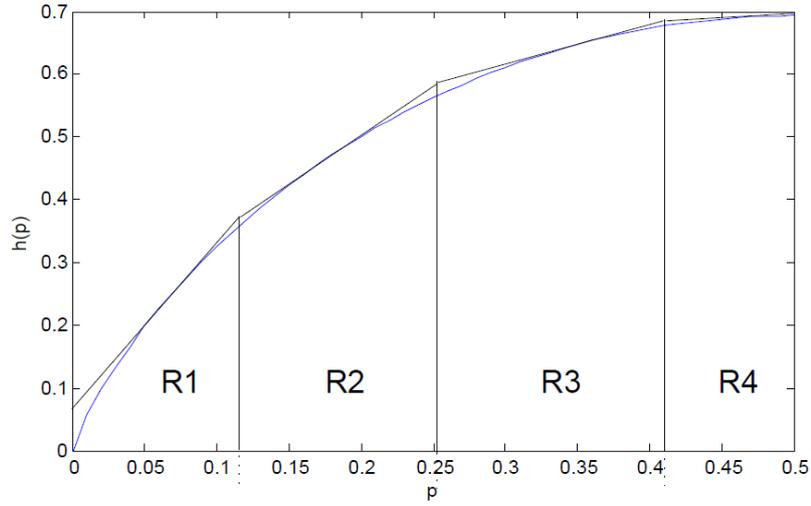}
\caption{piecewise linear ($k=4$) relaxation to the binary entropy}
\label{fig:piecewise linear}
\end{figure}

\noindent From this point on we shall drop the previous assumption that $\pi_d \leq  \pi_{d-1} \leq  \cdots  \leq  \pi_1$, for simplicity of presentation. 
First, we notice that all $\pi_j's$ are equivalent (in the sense that we can always interchange them and achieve the same result). This means we can find the optimal solution to the piecewise linear problem by going over all possible combinations of “placing” the $d$ variables $\pi_j$ in the $k$ different regions of the piecewise linear function. For each of these combinations we need to solve a linear problem (such as in (\ref{eq:linear}), where the minimization is with respect to allocation of the  $m$ given probabilities $\underline{p}$) with additional constraints on the ranges of each $\pi_j$. For example, assume $d=3$ and the optimal solution is such that two $\pi_j's$ (e.g. $\pi_1$ and $\pi_2$) are at the first region, $R_1$, and $\pi_3$ is at the second region, $R_2$. Then, we need to solve the following constrained linear problem,
\begin{equation} \label{eq:linear_program}
\begin{aligned}
& {\text{minimize}}
& &  a_1 \cdot (\pi_1+\pi_2)+2b_1+a_2\cdot \pi_3 +b_2 \\
& \text{subject to}
& & \pi_1, \pi_2 \in R_1 , \pi_3 \in R_2
\end{aligned}
\end{equation}
where the minimization is over the allocation of the given $\left\{p_i\right\}_{i=1}^{m}$, which determine the corresponding $\pi_j$'s, as demonstrated in (\ref{eq:linear}). This problem is again hard. However, if we attempt to solve it without the constraints we notice the following:

\begin{enumerate}

\item	If the collection of $\pi_j's$ which define the optimal solution to the unconstrained linear problem happens to meet the constraints then it is obviously the optimal solution with the constraints.
\item	If the collection of $\pi_j's$ of the optimal solution do not meet the constraints (say, $\pi_2 \in R_2$) then, due to the concavity of the entropy function, there exists a different combination with a different constrained linear problem (again, over the allocation of the $m$ given probabilities $\underline{p}$), 
\begin{equation*} \label{eq:linear_program}
\begin{aligned}
& {\text{minimize}}
& &  a_1\pi_1+b_1+a_2(\pi_2+\pi_3)+2b_2 \\
& \text{subject to}
& & \pi_1 \in R_1  \;  \pi_2,\pi_3 \in R_2
\end{aligned}
\end{equation*}

in which this set of $\pi_j's$ necessarily achieves a lower minimum (since $a_2 x+b_2<a_1 x+b_1$ $\forall x \in R_2$).

\end{enumerate}
Therefore, in order to find the optimal solution to the piecewise linear problem, all we need to do is to go over all possible combinations of placing the $\pi_j's$ in the $k$ different regions, and for each combination solve an unconstrained linear problem (which is solved in a linear time in $m$). If the solution does not meet the constraint then it means that the assumption that the optimal $\pi_j$ reside within this combination's regions is false. Otherwise, if the solution does meet the constraint, it is considered as a candidate for the global optimal solution. \\

\noindent The number of combinations is equivalent to the number of ways of placing $d$ identical balls in $k$ boxes, which is for a fixed $k$, 
\begin{align}
\left(\begin{array}{c} 
d+k-1\\n\end{array}\right)=\left(\begin{array}{c} 
d+k-1\\k-1\end{array}\right)\leq
 \frac{(d+k-1)^{k-1}}{(k-1)!} = O(d^k).
\end{align}
Assuming the coefficients are all known and sorted in advance, for any fixed $k$ the overall asympthotic complexity of our suggested algorithm, as $d \rightarrow \infty$, is simply $O(d^k \cdot 2^d)$.   \\

\subsection{The Relaxed Generalized BICA as a single matrix-vector multiplication}
\label{matrix-vector}
It is important to notice that even though the asymptotic complexity of our approximation algorithm is $O(d^k\cdot2^d)$, it takes a few seconds to run an entire experiment on a standard personal computer for as much as $d=10$, for example. The reason is that the $2^d$ factor refers to the complexity of sorting a vector and multiplying two vectors,  
operations which are computationally efficient on most available software. Moreover, if we assume that the coefficients in (\ref{eq:linear}) are already calculated, sorted and stored in advance, we can place them in a matrix form $A$ and multiply the matrix with the (sorted) vector $\underline{p}$. The minimum of this product is exactly the solution to the linear approximation problem. Therefore, the practical complexity of the approximation algorithm drops to a single multiplication of a ($d^k \times 2^d$) matrix with a ($2^d \times 1$) vector.\\

\noindent Let us extend the analysis of this matrix-vector multiplication approach. 
Each row of the matrix $A$ corresponds to a single coefficient vector to be sorted and multiplied with the sorted probability vector $\underline{p}$. 
Each of these coefficient vectors correspond to one possible way of placing $d$ components in $k$ different regions of the piecewise linear function. Specifically, in each row, each of the $d$ components is assumed to reside in one of the $k$ regions, hence it is assigned a slope $a_j$ as indicated in (\ref{eq:linear}). For each row, our goal is to minimize $L(\underline{Y})$. Since this minimization is solved over the vector $\underline{p}$ we would like to change the summation accordingly. To do so, each entry of the coefficient vector (denoted as $c_i$ in (\ref{eq:linear})) is calculated by summing all the slopes that correspond to each $\pi_j$. For example, let us assume $d=3$ where $\pi_1,\pi_2 \in R_1$, with a corresponding slope $a_1$ and intercept $b_1$, while the $\pi_3 \in R_2$ with $a_2$ and $b_2$. We use the following mapping: $P(\underline{Y}=000)=p_1, P(\underline{Y}=001)=p_2, \dots , P(\underline{Y}=111)=p_8$. Therefore

\begin{equation}
\begin{array}{c} 
\pi_1=P(Y_1=0)=p_1+p_2+p_3+p_4\\
\pi_2=P(Y_2=0)=p_1+p_2+p_5+p_6\\
\pi_3=P(Y_3=0)=p_1+p_3+p_5+p_7\\
\end{array}.
\end{equation}
The corresponding coefficients $c_i$ are then the sum of rows of the following matrix

\begin{equation}
A=\left(
\begin{array}{ccc} 
a_1&a_1& a_2\\
a_1& a_1& 0\\
a_1& 0& a_2\\
a_1& 0& 0\\
0& a_1& a_2\\
0& a_1& 0\\
0& 0& a_2\\
0& 0& 0\\
\end{array}\right).
\end{equation}
This leads to a minimization problem 
\begin{align}
 &  L(\underline{Y})=\sum_{j=1}^{d}{a_j \pi_j+b_j}=a_1(\pi_1+\pi_2)+a_2\pi_3+2b_1+b_2= \\[0.5em] \nonumber
& (2a_1+a_2)p_1+2a_1p_2+(a_1+a_2)p_3+a_1p_4+(a_1+a_2)p_5+a_1p_6+a_2p_7+2b_1+b_2 \nonumber
\end{align}
where the coefficients of $p_i$ are simply the sum of the $i^{th}$ row in the matrix $A$.\\
%to be solved over a permutation set $\{p_i\}_{i=1}^{N}$, as discussed above.

\noindent Now let us assume that $d$ is greater than $k$ (which is usually the case). It is easy to see that many of the coefficients $c_i$ are actually identical in this case. Precisely, let us denote by $l_v$ the number of assignments for the $v^{th}$ region, where $v=\{1\dots k\}$. Then, the number of unique $c_i$ coefficients is simply 
\begin{equation*}
\prod_{v=1}^{k}(l_v+1)-1
\end{equation*}
subject to $\sum_{v=1}^{k}{l_v}=d$.
Since we are interested in the worst case (of all rows of the matrix $A$), we need to find the non-identical coefficients. This is obtained when $l_v$ is as "uniform" as possible. Therefore we can bound the number of non-identical coefficients from above by temporarily dropping the assumption that $l_v$'s are integers and letting $l_v=\frac{d}{k}$ so that
\begin{equation}
 \max{\prod_{v=1}^{k}(l_v+1)} \leq \left(\frac{d}{k}+1\right)^{k}=O(d^k).
 \end{equation}
This means that instead of sorting the $2^d$ coefficients for each row of the matrix $A$, we only need to sort $O(d^k)$ coefficients. \\

\noindent Now, let us further assume that the data is generated from some known parametric model. In this case, some probabilities $p_i$ may also be identical, so that the probability vector $\underline{p}$ may also not require $O(d \cdot 2^d)$ operations to be sorted. For example, if we assume a block independent structure, such that $d$ components (bits) of the data are generated from $\frac{d}{r}$ independent and identically distributed blocks of of size $r$, then it can be shown that the probability vector $\underline{p}$ contains at most
\begin{equation}
 \left(\begin{array}{c} 
\frac{d}{r}+2^r-1\\\frac{d}{r}\end{array}\right)=O\left({\left(\frac{d}{r}\right)}^{2^r}\right)
\end{equation}
non-identical elements $p_i$. Another example is a first order stationary symmetric  Markov model. In this case there only exists a quadratic number, $ d\cdot(d-1)+2=O(d^2) $, of non-identical probabilities in $\underline{p}$ (see Appendix \ref{unique_values_markov_model}).\\

\noindent This means that applying our relaxed generalized BICA on such datasets may only require $O(d^k)$ operations for the matrix $A$ and a polynomial number of operations (in $d$) for the vector $\underline{p}$; hence our algorithm is reduced to run in a polynomial time in $d$.\\ 

\noindent Notice that this derivation only considers the number of non-identical elements to be sorted through a quicksort algorithm. However, we also require the degree of each element (the number of times it appears) to eventually multiply the matrix $A$ with the vector $\underline{p}$. This, however, may be analytically derived through the same combinatorical considerations described above. \\

\subsection{Relaxed Generalized BICA Illustration and Experiments}

In order to validate our approximation algorithm we conduct several experiments. In the first experiment we illustrate the convergence of our suggested scheme as $k$ increases. We arbitrarily choose a probability distribution with $d=10$ statistically independent components and mix its components in a non-linear fashion. We apply the approximation algorithm on this probability distribution with different values of $k$ and compare the approximated minimum entropy we achieve (that is, the result of the upper-bound piecewise linear cost function) with the entropy of the vector. In addition, we apply the estimated parameters $\pi_j$ on the true objective (\ref{eq:sum_ent_min_binary}), to obtain an even closer approximation. Figure \ref{fig:experiment} demonstrates the results we achieve, showing the convergence of the approximated entropy towards the real entropy as the number of linear pieces increases.  As we repeat this experiment several times (that is, arbitrarily choose a probability distributions and examine our approach for every single value of $k$), we notice that the estimated parameters are equal to the independent parameters for $k$ as small as $4$, on the average.\\
     
\begin{figure}[h]
\centering
\includegraphics[width = 0.7\textwidth,bb= 50 210 560 575,clip]{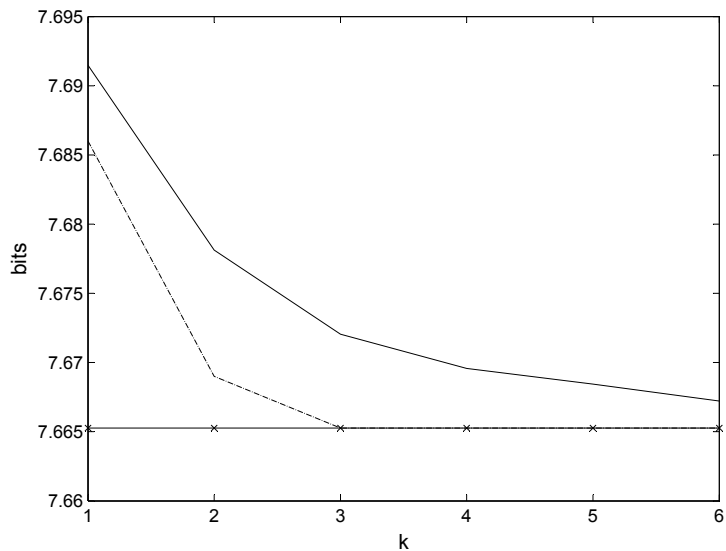}
\caption{Piecewise linear approximation (solid line), entropy according to the estimated parameters (dashed-dot line) and the real entropy (horizontal line with the X's),  for a vector size $d=10$ and different $k$ linear pieces}
\label{fig:experiment}
\end{figure}
%In the second experiment we generate a probability distribution of size $n=6$ and compare our approximation algorithm with an exhaustive search algorithm. As before, our approximation algorithm was able to find the optimal permutation that minimizes (\ref{eq:min_criterion}) quite quickly, in $k=3$ pieces on average.

\noindent We further illustrate the use of the BICA tool by the following example on ASCII code. The ASCII code is a common standardized eight bit representation of western letters, numbers and symbols. We gather statistics on the frequency of each character, based on approximately 183 million words that appeared in the New York Times magazine \citep{jones2004case}. We then apply the BICA (with $k=8$, which is empirically sufficient) on this estimated probability distribution, to find a new eight bit representation of characters, such that the bits are "as statistically independent" as possible. We find that the entropy of the joint probability distribution is $4.8289$ bits, the sum of marginal entropies using ASCII representation is $5.5284$ bits and the sum of marginal entropies after applying BICA is just $4.8532$ bits. This means that there exists a different eight bit representation of characters which allows nearly full statistical independence of bits. Moreover, through this representation one can encode each of the eight bit separately without losing more than $0.025$ bits, compared to encoding the eight bits altogether.\\

\section{Generalized ICA Over Finite Alphabets} 
\label{GICA over FF}
\subsection{Piecewise Linear Relaxation Algorithm - Exhaustive Search}
\label{Exhaustive Search}
Let us extend the notation of the previous sections, denoting the number of components as $d$ and the alphabet size as $q$. We would like to minimize $\sum_{j=1}^{d}{H(Y_j)}$ where $Y_j$ is over an alphabet size $q$.
We first notice that we need $q-1$ parameters to describe the multinomial distribution of $Y_j$ such that all of the parameters are not greater than $\frac{1}{2}$. Therefore, we can bound from above the marginal entropy with a piecewise linear function in the range $[0,\frac{1}{2}]$, for each of the parameters of $Y_j$.  
We refer to a ($q-1$)-tuple of regions as \textit{cell}.
As in previous sections we consider $k$, the number of linear pieces, to be fixed. Notice however, that as $q$ and $d$ increase, $k$ needs also to take greater values in order to maintain the same level of accuracy. As mentioned above, in this work we do not discuss methods to determine the value of $k$ for given $q$ and $d$, and empirically evaluate it.\\ 

\noindent Let us denote the number of cells to be visited in our approximation algorithm (Section \ref{The Relaxed Generalized BICA}) as $C$. Since each parameter is approximated by $k$ linear pieces and there are $q-1$ parameters, $C$ equals at most $k^{q-1}$. In this case too, the parameters are exchangeable (in the sense that the entropy of a multinomial random variable with parameters $\{p_1,p_2,p_3\}$ is equal to the entropy of a multinomial random variable with parameters $\{p_2,p_1,p_3\}$, for example). Therefore, we do not need to visit all $k^{q-1}$ cells, but only a unique subset which disregards permutation of parameters. In other words, the number of cells to be visited is bounded from above by the number of ways of choosing $q-1$ elements (the parameters) out of $k$ elements (the number of pieces in each parameter) with repetition and without order. Notice this upper-bound (as opposed to full equality) is a result of not every combination being a feasible solution, as the sum of parameters may exceed $1$. Assuming $k$ is fixed and as $q \rightarrow \infty$ this equals 
\begin{align}
\label{q^k}
\left(\begin{array}{c} 
q-1+k-1\\q-1\end{array}\right)=&\left(\begin{array}{c} q-1+k-1\\k-1\end{array}\right)\leq
\frac{(q-1+k-1)^{k-1}}{(k-1)!}=O(q^k).
\end{align}
Therefore, the number of cells we are to visit is simply $C=\min\left(k^{q-1},O(q^k)\right)$. For sufficiently large $q$ it follows that $C=O\left(q^k\right)$.
As in the binary case we would like to examine all combinations of $d$ entropy values in $C$ cells. The number of iterations to calculate all possibilities is equal to the number of ways of placing $d$ identical balls in $C$ boxes, which is
\begin{equation}
\label{d^C}
\left(\begin{array}{c} 
d+C-1\\d\end{array}\right)=O\left(d^C\right).
\end{equation}
In addition, in each iteration we need to solve a linear problem which takes a linear complexity in $q^d$. Therefore, the overall complexity of our suggested algorithm is $O\left(d^{C}\cdot q^d \right)$.\\

\noindent We notice however that for a simple case where only two components are mixed $(d=2)$, we can calculate (\ref{d^C}) explicitly
\begin{equation}
\left(\begin{array}{c} 
2+C-1\\2\end{array}\right)=\frac{C(C+1)}{2}.
\end{equation}
Putting this together with (\ref{q^k}), leads to an overall complexity which is polynomial in $q$, for a fixed $k$,
\begin{equation}
\left(\frac{q^k(q^k+1)}{2}q^2\right)=O\left(q^{2k+2}\right).
\end{equation}\\
\noindent Either way, the computational complexity of our suggested algorithm may result in an excessive runtime, to a point of in-feasibility, in the case of too many components or an alphabet size which is too large. This necessitates a heuristic improvement to reduce the runtime of our approach.

\subsection{Piecewise Linear Relaxation Algorithm - Objective Descent Search}
\label{objective descent search}
In Section \ref{The Relaxed Generalized BICA} we present the basic step of our suggested piecewise linear relaxation to the generalized binary ICA problem. As stated there, for each combination of placing $d$ components in $k$ pieces (of the piecewise linear approximation function) we solve a linear problem (LP). Then, if the solution happens to meet the constraints (falls within the ranges we assume) we keep it. Otherwise, due to the concavity of the entropy function, there exists a different combination with a different constrained linear problem in which this solution that we found necessarily achieves a lower minimum, so we disregard it.\\

\noindent This leads to the following objective descent search method: instead of searching over all possible combinations we shall first guess an initial combination as a starting point (say, all components reside in a single cell). We then solve its unconstrained LP. If the solution meets the constraint we terminate. Otherwise we visit the cell that meets the constraints of the solution we found. We then solve the unconstrained LP of that cell and so on. We repeat this process for multiple random initialization. 

\begin{algorithm}
\caption{Relaxed Generalized ICA For Finite Alphabets via Gradient Search}
\begin{algorithmic} [1]
\REQUIRE $\underline{p}$ = the probability function of the random vector $\underline{X}$
\REQUIRE $d$ = the number of components of $\underline{X}$
\REQUIRE $C$ = the number of cells which upper-bound the objective.
\REQUIRE $I$ = the number of initializations.
\STATE $opt \leftarrow \infty$, where the variable $opt$ is the minimum sum of marginal entropies we are looking for. 
\STATE $V \leftarrow \emptyset$, where $V$ is the current cells the algorithm is visiting.
\STATE $S\leftarrow \infty$, where $S$ is the solution of the current LP.

\STATE $i \leftarrow 1$.
\WHILE{$i \leq I$}

\STATE $V\leftarrow$ randomly select an initial combination of placing $d$ components in $C$ cells 
\STATE $S\leftarrow$ LP($V$) solve an unconstrained linear prograsm which corresponds to the selected combination, as appears in (\ref{eq:linear}). \label{marker}
\IF {the solution falls within the bounds of the cell} 
\IF {$H(S)<opt$} 
\STATE $opt\leftarrow H(S)$, the sum of marginal entropies which correspond to the parameters found by the LP 
\ENDIF
\STATE $i\leftarrow i+1$
\ELSE 
\STATE $V \leftarrow$ the cells in which $S$ reside.   
\STATE \textbf{goto 7}
\ENDIF
\ENDWHILE
\RETURN $opt$
\end{algorithmic}
\label{alg:algorithm}
\end{algorithm}

\noindent This suggested algorithm is obviously heuristic, which does not guarantee to provide the global optimal solution. Its performance strongly depends on the number of random initializations and the concavity of the searched domain.\\ 

\noindent The following empirical evaluation demonstrates our suggested approach.
In this experiment we randomly generate a probability distribution with $d$ independent and identically distributed components over an alphabet size $q$. We then mix its components in a non-linear fashion. We apply the objective descent algorithm with a fixed number of initialization points ($I=1000$) and compare the approximated minimum sum of the marginal entropies with the true entropy of the vector.
Figure \ref{fig:experiment_q4} demonstrates the results we achieve for different values of $d$. We see that the objective descent algorithm approximates the correct components well for smaller values of $d$ but as $d$ increases the difference between the approximated minimum and the optimal minimum increases, as the problem becomes too involved. 
\begin{figure}[h]
\centering
\includegraphics[width = 0.7\textwidth,bb= 50 180 550 590,clip]{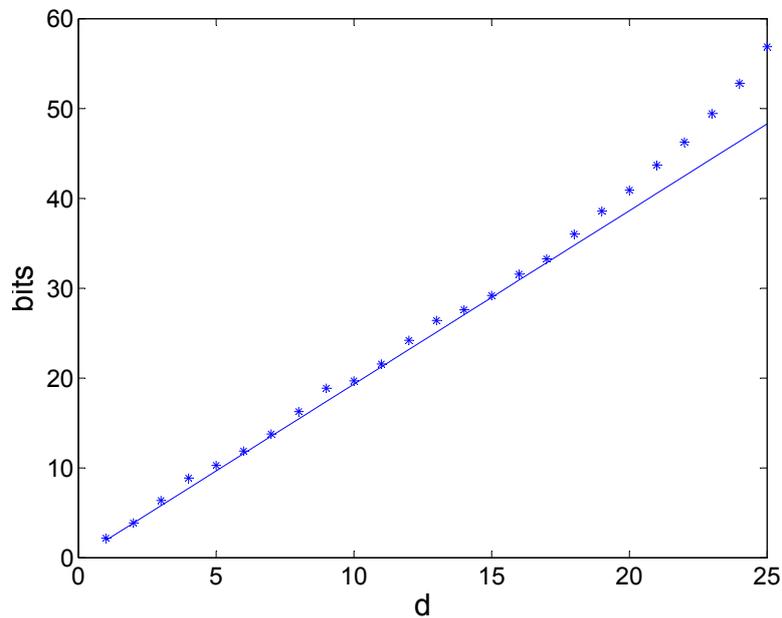}
\caption{The real entropy (solid line) and the sum of marginal entropies as discovered by the objective descent algorithm, for an i.i.d vector over an alphabet size $q=4$ and of varying number of components $d$}
\label{fig:experiment_q4}
\end{figure}

\section{Application to Blind Source Separation}

Assume there exist $d$ independent (or practically "almost independent") sources where each source is over an alphabet size $q$. These sources are mixed in an invertible, yet unknown manner. Our goal is to recover the sources from this mixture. \\

\noindent For example, consider a case with $d=2$ sources $X_1,X_2$, where each source is over an alphabet size $q$. The sources are linearly mixed (over a finite field) such that $Y_1=X_1, Y_2=X_1+X_2$. However, due to a malfunction, the symbols of $Y_2$ are randomly shuffled, before it is handed to the receiver.
Notice this mixture (including the malfunction) is unknown to the receiver, who receives $Y_1,Y_2$ and strives to ``blindly" recover $X_1,X_2$. 
 In this case any linearly based method such as \citep{yeredor2011independent} or \citep{attux2011immune} would fail to recover the sources as the mixture, along with the malfunction, is now a non-linear invertible transformation. Our method on the other hand, is designed especially for such cases, where no assumption is made on the mixture (other than being invertible).\\

\noindent To demonstrate this example we introduce two independent sources $X_1,X_2$, over an alphabet size $q$. We apply the linear mixture $Y_1=X_1, Y_2=X_1+X_2$ and shuffle the symbols of $Y_2$. We are then ready to apply (and compare) our suggested methods for finite alphabet sizes, which are the exhaustive search method (Section \ref{Exhaustive Search}) and the objective descent method (Section \ref{objective descent search}).  For the purpose of this experiment we assume both $X_1$ and $X_2$ are distributed according to a Zipf's law distribution, 
\begin{equation}\nonumber
P(k;s,q)=\frac{k^{-s}}{\sum_{i=1}^q i^{-s}}
\end{equation}
with a parameter $s=1.6$. The Zipf's law distribution is a commonly used heavy-tailed distribution. This choice of distribution is further motivated in Chapter \ref{Application_to_Data_Compression}. We apply our suggested algorithms for different alphabet sizes, with a fixed $k=8$, and with only $100$ random initializations for the objective descent method. Figure \ref{fig:BSS} presents the results we achieve.

\begin{figure}[h]
\centering
\includegraphics[width = 0.9\textwidth,bb= 65 125 740 480,clip]{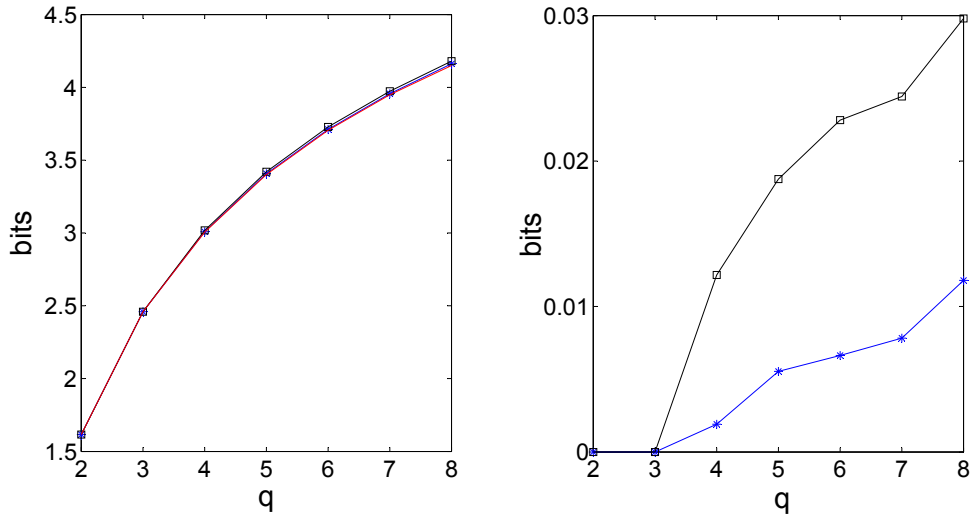}
\caption{BSS simulation results. Left: the lower curve is the joint entropy of $Y1,Y2$, the asterisks curve is the sum of marginal entropies using the exhaustive search method (Section \ref{Exhaustive Search}) while the curve with the squares corresponds the objective descent method (Section \ref{objective descent search}). Right: the curve with the asterisks corresponds to the difference between the exhaustive search method and the joint entropy while the curve with the squares is the difference between the objective descent search method and the joint entropy.}
\label{fig:BSS}
\end{figure}

\noindent We first notice that both methods are capable of finding a transformation for which the sum of marginal entropies is very close to the joint entropy. This means our suggested methods succeed in separating the non-linear mixture $Y_1,Y_2$ back to the statistically independent sources $X_1,X_2$, as we expected.  Looking at the chart on the right hand side of Figure \ref{fig:BSS}, we notice that the difference between the two methods tends to increase as the alphabet size $q$ grows. This is not surprising since the search space grows while the number of random initializations remains fixed.  However, the difference between the two methods is still practically negligible, as we can see from the chart on the left. This is especially important since the objective descent method takes significantly less time to apply as the alphabet size $q$ grows.

\section{discussion}

In this chapter we considered a generalized ICA over finite alphabets framework where we dropped the common assumptions on the underlying model. Specifically, we attempted to decompose a given multi-dimensional vector to its ``as statistically independent as possible" components with no further assumptions, as introduced by \cite{barlow1989finding}. \\ 

\noindent We first focused on the binary case and proposed three algorithms to address this class of problems. In the first algorithm we assumed that there exists a set of independent components that were mixed to generate the observed vector. We showed that these independent components are recovered in a combinatorial manner in $O(n \cdot 2^n)$ operations. The second algorithm drops this assumption and accurately solves the generalized BICA problem through a branch and bound search tree structure. Then, we proposed a third algorithm which bounds our objective from above as tightly as we want to find an approximated solution in $O(n^k \cdot 2^n)$ with $k$ being the approximation accuracy parameter. We further showed that this algorithm can be formulated as a single matrix-vector multiplications and under some generative model assumption the complexity is dropped to be polynomial in $n$. Following that we extended our methods to deal with a larger alphabet size. This case necessitates a heuristic approach to deal with the super exponentially increasing complexity. An objective descent search method is presented for that purpose. We concluded the chapter  by presenting a simple Blind Source Separation application. \\

\chapter[Generalized ICA - The Order Permutation]{Generalized Independent Component Analysis - The Order Permutation}\label{Order_Permutation}
\graphicspath{{Order_Permutation_Figures//}}
\noindent The material in this Chapter is partly covered in \citep{painsky2016largetrans}.
\\

\noindent In the previous chapter we presented the generalized BICA problem. This minimization problem (\ref{eq:sum_ent_min_binary}) is combinatorial in its essence and is consequently considered hard. Our suggested algorithms (described in detail in Sections \ref{BICA with Underlying Independent Components}, \ref{Generalized BICA via Search Tree Based Algorithm} and \ref{The Relaxed Generalized BICA}) strive to find its global minimum, but due to the nature of the problem, they result in quite involved methodologies. This demonstrates a major challenge in providing theoretical guarantees to the solutions they achieve. We therefore suggest a simplified greedy algorithm which is much easier to analyze, as it sequentially minimizes each term of the summation (\ref{eq:sum_ent_min_binary}), $h_b(P(Y_j=0))$, for $j=1,\dots,d$. For the simplicity of presentation, we denote the alphabet size of a $d$ dimensional vector as $m=2^d$. 
\\

\noindent With no loss of generality, let us start by minimizing $h_b(P(Y_1=0))$, which corresponds to the marginal entropy of the most significant bit (msb). Since the binary entropy is monotonically increasing in the range $\left[0,\frac{1}{2}\right]$, we would like to find a permutation of $\underline{p}$ that minimizes a sum of half of its values. This means we should order the $p_i$'s so that half of the $p_i$'s with the smallest values are assigned to $P(Y_1=0)$ while the other half of $p_i$'s (with the largest values) are assigned to $P(Y_1=1)$. For example, assuming $m=8$ and $p_1 \leq p_2 \leq \dots \leq p_8$, a permutation which minimizes $H_b(Y_1)$ is 
\begin{table}[H]
\centering
{
\begin{tabular}{|c|c|c|c|c|c|c|c|c|}
\hline
codeword &000 &001 &010 &011&100 &101 &110 &111 \\
\hline
probability &$p_2$ &$p_3$&$p_1$&$p_4$&$p_8$ &$p_5$&$p_6$&$p_7$ \\
\hline
\end{tabular}}
\end{table}

\noindent We now proceed to minimize the marginal entropy of the second most significant bit, $h_b(P(Y_2=0))$. Again, we would like to assign $P(Y_2=0)$ the smallest possible values of $p_i$'s. However, since we already determined which $p_i$'s are assigned to the msb, all we can do is reorder the $p_i$'s without changing the msb. This means we again sort the $p_i$'s so that the smallest possible values are assigned to  $P(Y_2=0)$, without changing the msb. In our example, this leads to,
\begin{table}[H]
\centering
{
\begin{tabular}{|c|c|c|c|c|c|c|c|c|}
\hline
codeword &000 &001 &010 &011&100 &101 &110 &111 \\
\hline
probability &$p_2$ &$p_1$&$p_3$&$p_4$&$p_6$ &$p_5$&$p_8$&$p_7$ \\
\hline
\end{tabular}}
\end{table} 

\noindent Continuing in the same manner, we would now like to reorder the $p_i$'s to minimize $h_b(P(Y_3=0))$ without changing the previous bits. This results in

\begin{table}[H]
\centering
{
\begin{tabular}{|c|c|c|c|c|c|c|c|c|}
\hline
codeword &000 &001 &010 &011&100 &101 &110 &111 \\
\hline
probability &$p_1$ &$p_2$&$p_3$&$p_4$&$p_5$ &$p_6$&$p_7$&$p_8$ \\
\hline
\end{tabular}}
\end{table} 

\noindent Therefore, we show that a greedy solution to (\ref{eq:sum_ent_min_binary}) which sequentially minimizes $H(Y_j)$ is attained by simply ordering the joint distribution $\underline{p}$ in an ascending (or equivalently descending) order. In other words, the  \textit{order permutation} suggests to  simply order the probability distribution $p_1, \dots, p_m$ in an ascending order, followed by a mapping of the $i^{th}$ symbol (in its binary representation) the $i^{th}$ smallest probability. \\

\noindent At this point it seems quite unclear how well the order permutation performs, compared both with the relaxed BICA we previously discussed, and the optimal permutation which minimizes  (\ref{eq:sum_ent_min_binary}). In the following sections we introduce some theoretical properties which demonstrate this method's effectiveness.

\section{Worst-case Independent Components Representation}
\label{worst case}
We now introduce the theoretical properties of our suggested algorithms. Naturally, we would like to quantify how much we ``lose" by representing a given random vector $\underline{X}$ as if its components are statistically independent. We notice that our objective (\ref{eq:sum_ent_min_binary}) depends on the distribution of a given random vector $\underline{X}  \sim \underline{p} $, and the applied invertible  transformation $\underline{Y}=g(\underline{X})$. Therefore, we slightly change the notation of (\ref{eq:sum_ent_min_binary}) and denote the cost function as $C(\underline{p},g)=\sum_{j=1}^d H(Y_j) - H(\underline{X})$.\\

\noindent Since our methods  strongly depend on the given probability distribution $\underline{p}$, we focus on the worst-case and the average case of $C(\underline{p},g)$, with respect to  $\underline{p}$. 
Let us denote the order permutation as $g_{ord}$ and the permutation which is found by the piece-wise linear relaxation as $g_{lin}$. We further define $g_{bst}$ as the permutation that results with a lower value of   $C(\underline{p},g)$, between $g_{lin}$ and $g_{ord}$. This means that $$g_{bst}=\underset{\{g_{lin},g_{ord}\}} {\arg\min}C(\underline{p},g).$$ 
In addition, we define $g_{opt}$ as the optimal permutation that minimizes (\ref{eq:sum_ent_min_binary}) over all possible permutations. Therefore, for any given $\underline{\tilde{p}}$, we have that $C(\underline{\tilde{p}},g_{opt}) \leq C(\underline{\tilde{p}},g_{bst}) \leq C(\underline{\tilde{p}},g_{ord})$. 
In this Section we examine the worst-case performance of both of our suggested algorithms. Specifically, we would like to quantify the maximum of $C(\underline{p},g)$ over all joint probability distributions $\underline{p}$, of a given alphabet size $m$. 

\begin{theorem}
\label{worst-case}
For any  random vector $\underline{X} \sim \underline{p}$, over an alphabet size $m$ we have that
$$\maxmax_{\underline{p}}C(\underline{p},g_{opt})=\Theta(\log(m))$$
\end{theorem}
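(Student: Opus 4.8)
The plan is to establish both the upper and lower bounds on $\max_{\underline{p}} C(\underline{p}, g_{opt})$ separately, since $C(\underline{p},g_{opt}) \le C(\underline{p}, g_{ord})$ gives a convenient handle on the upper bound via the order permutation, while the lower bound requires exhibiting a single bad distribution $\underline{p}$ for which \emph{every} permutation $g$ incurs cost $\Omega(\log m)$. For the upper bound, first I would note that $C(\underline{p},g) = \sum_{j=1}^d H(Y_j) - H(\underline{X}) \le \sum_{j=1}^d H(Y_j) \le d = \log_2 m$, since each marginal binary entropy is at most $1$. This already gives $\max_{\underline{p}} C(\underline{p},g_{opt}) \le \max_{\underline{p}} C(\underline{p},g_{ord}) \le \log m = O(\log m)$, so the upper half of the $\Theta$ is essentially immediate. (One might hope $g_{opt}$ does much better, but for the worst-case $\Theta$-bound we don't need that.)

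The substantive part is the matching lower bound: I must produce a family of distributions $\underline{p}^{(m)}$ on $\{0,1\}^d$, $m = 2^d$, such that $C(\underline{p}^{(m)}, g_{opt}) = \Omega(\log m) = \Omega(d)$. The natural candidate is a distribution concentrated on very few atoms — for instance the uniform distribution on $k$ of the $m$ codewords (for a suitable $k$), or a geometric/deterministic-like distribution. For such a $\underline{p}$ the joint entropy $H(\underline{X})$ is small (e.g. $\log k$), while I claim that for \emph{any} permutation $g$, enough of the marginals $H(Y_j)$ must be bounded away from $0$ so that $\sum_j H(Y_j)$ stays $\Omega(d)$. The key combinatorial fact is that if a set $S \subseteq \{0,1\}^d$ of codewords has $|S| = k \ge 2$, then the number of coordinates $j$ along which $S$ is \emph{not} constant (i.e. both a $0$ and a $1$ appear among the $j$-th bits of elements of $S$) is at least $\log_2 k$ — indeed a set constant on all but $t$ coordinates has size at most $2^t$. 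Applying this with $S$ the support of $\underline{Y} = g(\underline{X})$: along each of those $\ge \log_2 k$ "active" coordinates, $P(Y_j = 0)$ is a sum of some strict subset of the support probabilities, hence lies strictly between $0$ and $1$, giving $H(Y_j) > 0$; with a uniform distribution on $k$ atoms one can quantify this, getting $H(Y_j) \ge$ some constant $c > 0$ for each active coordinate (e.g. since $P(Y_j=0)$ is then a multiple of $1/k$ strictly inside $(0,1)$, so it is at least $1/k$ and at most $1 - 1/k$, which only gives $H(Y_j) = \Omega((\log k)/k)$ — too weak, so I'd instead argue more carefully, or choose $k$ proportional to a power of $m$). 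A cleaner route: take $k = \sqrt{m} = 2^{d/2}$ atoms with probabilities forming, say, a truncated geometric sequence so that $H(\underline{X}) = O(1)$ while the support still has size $2^{d/2}$; then at least $d/2$ coordinates are active, and on each such coordinate $P(Y_j=0) \in [\epsilon, 1-\epsilon]$ for a universal $\epsilon$ (because one can show the geometric weights can't conspire to make a nonempty proper sub-sum tiny for \emph{all} splittings forced by a permutation — this needs care). That would yield $\sum_j H(Y_j) \ge (d/2)\cdot h_b(\epsilon) = \Omega(d)$, hence $C(\underline{p},g) = \Omega(d) - O(1) = \Omega(\log m)$ for every $g$, in particular for $g_{opt}$.

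The main obstacle I anticipate is precisely controlling the marginals $P(Y_j=0)$ from below (away from $0$ and $1$) uniformly over all permutations $g$ and all active coordinates $j$: knowing a coordinate is "active" only gives $P(Y_j=0) \in (0,1)$, not that it is bounded away from the endpoints, and a single tiny summand can make $H(Y_j)$ arbitrarily small. The fix is to choose the worst-case distribution so that its atom probabilities are "spread out" enough — e.g. uniform on a large support, or a geometric distribution whose ratio is not too extreme — and then to show a counting/averaging statement: across the $d$ coordinates, the \emph{average} marginal entropy is $\Omega(1)$, even if a few individual coordinates have small entropy. Concretely, with the uniform distribution on all $m$ words, every marginal is exactly $1/2$ so $\sum_j H(Y_j) = d$ and $H(\underline{X}) = d$, giving $C = 0$ — useless; so the support must be shrunk, but not so far that the remaining marginals can be pushed to $0$. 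Balancing these two effects — small joint entropy versus forced marginal entropy — is the crux, and I expect the clean choice to be uniform on $\sqrt m$ atoms arranged so their bit-patterns form a subcube-like structure, or equivalently taking $\underline{X}$ to have $d/2$ independent fair bits and $d/2$ constant bits and then arguing no permutation can "hide" the $d/2$ bits of genuine entropy into fewer than $\Omega(d)$ marginals. I would also double-check whether the intended statement uses $g_{opt}$ or $g_{ord}$ on the right side of the max — the $\Theta(\log m)$ claim for $g_{opt}$ is the strongest form and is what the lower-bound construction above targets.
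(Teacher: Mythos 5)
Your upper bound is fine and is exactly the paper's ($\sum_j H(Y_j)\le d=\log m$, $H(\underline{X})\ge 0$). The genuine gap is in the lower bound, and it is not just the "needs care" step you flagged: the candidate distributions themselves cannot work. A distribution uniform on $k=2^{d/2}$ atoms --- equivalently your ``$d/2$ fair bits plus $d/2$ constant bits'' example --- has $C(\underline{p},g_{opt})=0$: map the support bijectively onto a subcube and the components become exactly independent, with $\sum_j H(Y_j)=\log k=H(\underline{X})$. Your active-coordinate count only lower-bounds $\sum_j H(Y_j)$ by a quantity comparable to $\log(\mathrm{support\ size})$, i.e.\ comparable to $H(\underline{X})$ itself, so it can never produce the needed gap $\sum_j H(Y_j)-H(\underline{X})=\Omega(d)$; you are bounding the wrong quantity. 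The truncated-geometric variant fails for a different reason: a geometric-type law with $H(\underline{X})=O(1)$ is decomposed to within $O(1)$ by the order permutation itself (under the sorted assignment the bias of bit $j$ is of order $2^{-2^{j-1}}$, so the marginal entropies are summable), hence it cannot witness $\Omega(\log m)$ for $g_{opt}$ either; and, as you yourself note, knowing a coordinate is ``active'' gives only $P(Y_j=0)\in(0,1)$, not a bound away from the endpoints.

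The paper's construction fixes both problems at once: take $\tilde p_m=\tfrac23$ and $\tilde p_1=\dots=\tilde p_{m-1}=\tfrac{1}{3(m-1)}$. Because the $m-1$ light atoms are all equal, any half of the alphabet avoiding the heavy atom has mass exactly $\tfrac{m}{6(m-1)}\approx\tfrac16$, so for every permutation and every bit, $P(Y_j=0)\ge\tfrac{m}{6(m-1)}$ (with the convention $P(Y_j=0)\le\tfrac12$); thus each marginal entropy is at least $h_b\bigl(\tfrac{m}{6(m-1)}\bigr)$, and the optimum is identified exactly by placing the heavy atom at the all-ones word. Meanwhile the heavy atom caps the joint entropy at $\tfrac13\log(m-1)+h_b(\tfrac13)$, giving $C(\underline{\tilde p},g_{opt})\to\bigl(h_b(\tfrac16)-\tfrac13\bigr)\log m-h_b(\tfrac13)=\Theta(\log m)$. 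The moral difference from your plan: rather than making $H(\underline{X})$ tiny and hoping marginals cannot collapse, the paper keeps $H(\underline{X})$ at $\tfrac13\log m$ and uses the flat tail to force every marginal up to a constant $h_b(\tfrac16)>\tfrac13$; the equality of the light atoms is precisely what gives the uniform control over all half-alphabet sub-sums that your argument lacked.
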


\begin{proof}
\noindent We first notice that $\sum_{j=1}^d H(Y_j)=\sum_{j=1}^d h_b(P(Y_j=0)) \leq d=\log(m)$. In addition, $H(\underline{X}) \geq 0$. Therefore, we have that $C(\underline{p},g_{opt})$ is bounded from above by $\log(m)$. Let us also show that this bound is tight, in the sense that there  exists a joint probability distribution $\underline{\tilde{p}}$ such that  $C(\underline{\tilde{p}},g_{opt})$ is linear in $\log(m)$. 
Let $\tilde{p}_1=\tilde{p}_2=\dots=\tilde{p}_{m-1}=\frac{1}{3(m-1)}$ and $\tilde{p}_m=\frac{2}{3}$.
Then, $\underline{\tilde{p}}$ is ordered and satisfies $P(Y_i=0)=\frac{m}{6(m-1)}$.\\

\noindent In addition, we notice that assigning symbols in a decreasing order to $\underline{\tilde{p}}$ (as mentioned in above) results with an optimal permutation. This is simply since  $P(Y_j=0)=\frac{m}{6(m-1)}$ is the minimal possible value of any $P(Y_j=0)$ that can be achieved when summing any $\frac{m}{2}$ elements of $\tilde{p}_i$.
Further we have that, 
\begin{align}
C(\underline{\tilde{p}},g_{opt})=&\sum_{j=1}^d H(Y_j) - H(\underline{X})=\sum_{j=1}^d h_b(P(Y_j=0)) - H(\underline{X})=\\\nonumber
&\log(m)\cdot h_b\left( \frac{m}{6(m-1)}\right) +\left( (m-1) \frac{1}{3(m-1)}\log \frac{1}{3(m-1)} +\frac{2}{3}\log \frac{2}{3} \right)=\\\nonumber
&\log(m)\cdot h_b\left( \frac{m}{6(m-1)}\right)-\frac{1}{3}\log(m-1)+\frac{1}{3}\log\frac{1}{3}+\frac{2}{3}\log\frac{2}{3} \underset{m\rightarrow \infty}{\longrightarrow}\\\nonumber
& \log(m) \cdot \left(h_b\left( \frac{1}{6}\right)-\frac{1}{3}\right)-h_b\left( \frac{1}{3}\right).
\end{align}
Therefore, $\maxmax_{\underline{p}}C(\underline{p},g_{opt})=\Theta(\log(m))$. 
\end{proof}
\noindent Theorem \ref{worst-case} shows that even the optimal permutation achieves a sum of marginal entropies which is $\Theta(\log(m))$ bits greater than the joint entropy, in the worst case. This means that there exists at least one source $\underline{X}$ with a joint probability distribution which is impossible to encode as if its components are independent without losing at least $\Theta(\log(m))$ bits. Note that this extra number of bits is high, and corresponds to a trivial encoding of the components without any factorization. However, we now show that such sources are very ``rare".  

\section{Average-case Independent Components Representation}
\label{average case}
%As we show in the previous section, there exist at least one probability distribution which results with $C(\underline{p},g_{opt})=\Theta(\log(m))$. However, we argue that there are only very few such distributions. 
In this section we show that the expected value of  $C(\underline{p},g_{opt})$ is bounded by a small constant, when averaging uniformly over all possible $\underline{p}$ over an alphabet size $m$. \\

\noindent To prove this, we recall that $C(\underline{p},g_{opt}) \leq C(\underline{p},g_{ord})$ for any given probability distribution $\underline{p}$. Therefore, we would like to find the expectation of $C(\underline{p},g_{ord})$ where the random variables are  $p_1, \dots, p_m$, taking values over a uniform simplex. 

\begin{proposition}
\label{H(X)}
Let $\underline{X} \sim \underline{p} $ be a random vector of an alphabet size $m$ and a joint probability distribution $\underline{p}$. The expected joint entropy of $\underline{X}$, where the expectation is  over a uniform simplex of joint probability distributions $\underline{p}$ is
\begin{equation}\nonumber
\mathbb{E}_{\underline{\smash{p}}}\left\{H(\underline{X}) \right\}=\frac{1}{\log_e{2}}\left(\psi(m+1)-\psi(2)\right) 
\end{equation}
where $\psi$ is the \textit{digamma function}.
\end{proposition}
\noindent The proof of this proposition is left for the Appendix \ref{proof_for_H(X)}. \\

\noindent We now turn to examine the expected sum of the marginal entropies, $\sum_{j=1}^d H(Y_j)$ under the order permutation. As described above, the order permutation suggests sorting the probability distribution $p_1, \dots, p_m$ in an ascending order, followed by mapping of the $i^{th}$ symbol (in a binary representation) the $i^{th}$ smallest probability. Let us denote $p_{(1)}\leq \dots\leq p_{(m)}$ the ascending ordered probabilities $p_1, \dots, p_m$. \cite{bairamov2010limit} show that the expected value of $p_{(i)}$ is
\begin{equation}
\label{expectation}
\mathbb{E}\left\{p_{(i)}\right\}=\frac{1}{m}\sum_{k=m+1-i}^{m}\frac{1}{k}=\frac{1}{m}\left(K_m-K_{m-i}\right)
\end{equation}
where $K_m=\sum_{k=1}^{m}\frac{1}{k}$ is the Harmonic number. Denote the ascending ordered binary representation of all possible symbols in a matrix form $A \in \{0,1\}^{(m\times d)}$. This means that entry $A_{ij}$ corresponds to the $j^{th}$ bit in the $i^{th}$ symbol, when the symbols are given in an ascending order. Therefore, the expected sum of the marginal entropies of $\underline{Y}$, when the expectation is over a uniform simplex of joint probability distributions $p$, follows
\begin{align}
\label{sum_of_marginals}
\mathbb{E}_{\underline{\smash{p}}}\left\{\sum_{j=1}^d H(Y_j) \right\} \underset{(a)}{\leq}& \sum_{j=1}^d h_b(\mathbb{E}_{\underline{\smash{p}}}\{Y_j\}) \underset{(b)}= \sum_{j=1}^d  h_b \left(\frac{1}{m}\sum_{i=1}^{m}A_{ij}  \left(K_m-K_{m-i}\right)  \right) \underset{(c)}=\\\nonumber
&\sum_{j=1}^d   h_b \left( \frac{1}{2}K_m- \frac{1}{m}\sum_{i=1}^{m}A_{ij}K_{m-i} \right)
\end{align}
where $(a)$ follows from Jensen's inequality, $(b)$ follows from (\ref{expectation}) and $(c)$ follows since $\sum_{i=1}^{m}A_{ij}=\frac{1}{2}$ for all $j=1, \dots,d$.  \\

\noindent We now turn to derive asymptotic bounds of the expected difference between the sum of $\underline{Y}$'s marginal entropies and the joint entropy of $\underline{X}$, as appears in (\ref{eq:sum_ent_min_binary}).

\begin{theorem}
\label{average case - asymp}
Let $\underline{X} \sim \underline{p}$ be a random vector of an alphabet size $m$ and joint probability distribution $\underline{p}$. Let $\underline{Y}=g_{ord}(\underline{X})$ be the order permutation. 
For $d \geq 10$, the expected value of $C(\underline{p},g_{ord})$, over a uniform simplex of joint probability distributions $\underline{p}$, satisfies
\begin{equation}
\nonumber
\mathbb{E}_{\underline{\smash{p}}}C(\underline{p},g_{ord})= \mathbb{E}_{\underline{\smash{p}}}\left\{\sum_{j=1}^d H(Y_j)- H(\underline{X}) \right\} < 0.0162 +O\left(\frac{1}{m}\right)
\end{equation}
\end{theorem}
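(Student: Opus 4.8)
The plan is to upper-bound $\mathbb{E}_{\underline{p}}C(\underline{p},g_{ord})$ directly from the two facts already established. Write $\beta_j:=\tfrac12 K_m-\tfrac1m\sum_{i=1}^{m}A_{ij}K_{m-i}$, so that (\ref{sum_of_marginals}) reads $\mathbb{E}_{\underline{p}}\{\sum_{j=1}^{d}H(Y_j)\}\le\sum_{j=1}^{d}h_b(\beta_j)$, and, using $\psi(m+1)=-\gamma+K_m$ and $\psi(2)=1-\gamma$, rewrite Proposition \ref{H(X)} as $\mathbb{E}_{\underline{p}}\{H(\underline{X})\}=\tfrac{K_m-1}{\ln 2}$. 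With $d=\log_2 m$ this gives
\[
\mathbb{E}_{\underline{p}}C(\underline{p},g_{ord})\ \le\ \Big(d-\frac{K_m-1}{\ln 2}\Big)\ -\ \sum_{j=1}^{d}\big(1-h_b(\beta_j)\big).
\]
The classical expansion $K_m=\ln m+\gamma+\tfrac1{2m}+O(m^{-2})$ turns the first parenthesis into $\tfrac{1-\gamma}{\ln 2}+O(1/m)\approx0.6099+O(1/m)$, so the whole problem reduces to a sufficiently good lower bound on $\sum_{j=1}^{d}(1-h_b(\beta_j))$.

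Every summand $1-h_b(\beta_j)$ is nonnegative, so for $d\ge10$ it suffices to keep the first ten of them — this is the only place the hypothesis $d\ge10$ enters — and it remains to evaluate $\beta_1,\dots,\beta_{10}$. Here the structure of $A$ helps: its $j$-th column is $2^{d-j+1}$-periodic, so $\{i:A_{ij}=1\}$ is a union of $2^{j-1}$ blocks of $2^{d-j}$ consecutive indices, and the partial sums $\tfrac1m\sum_{i:A_{ij}=1}K_{m-i}$ collapse to closed form via $\sum_{l=1}^{n}K_l=(n+1)(K_{n+1}-1)$. Since $K_m-K_{m-i}\sim-\ln(1-i/m)$, letting $m\to\infty$ makes $\beta_j=\tfrac1m\sum_{i:A_{ij}=1}(K_m-K_{m-i})$ a Riemann sum converging to $\int_{E_j}\big(-\ln(1-x)\big)\,dx$, where $E_j\subseteq[0,1]$ is the set on which the $j$-th binary digit equals $1$; this integral is a finite combination of terms $(1-r2^{-j})\ln(1-r2^{-j})$. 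In particular $\beta_1\to\tfrac12(1+\ln2)\approx0.8466$, $\beta_2\approx0.7158$, $\beta_3\approx0.6294$, and $\beta_j\to\tfrac12$ geometrically fast, so the corresponding $1-h_b(\beta_j)$ form a rapidly decaying sequence ($\approx0.382,\,0.139,\,0.049,\dots$).

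One then verifies numerically that $\sum_{j=1}^{10}(1-h_b(\beta_j))>\tfrac{1-\gamma}{\ln 2}-0.0162$, which combined with the displayed inequality and the expansion of its first parenthesis yields $\mathbb{E}_{\underline{p}}C(\underline{p},g_{ord})<0.0162+O(1/m)$. I expect the main obstacle to be exactly this closing numerical estimate: the margin is genuinely thin (on the order of $10^{-4}$), so the $\beta_j$'s and the $O(1/m)$ harmonic-number corrections must be handled with rigorous precision rather than by the heuristic geometric-decay argument sketched above, and one must check uniformly over the finitely many $j\le10$ that $\big|\beta_j-\int_{E_j}(-\ln(1-x))\,dx\big|=O(\log m/m)$ so that the discrepancy is safely swallowed by the error term. (If one wants a little extra room, the concavity bound $h_b(\tfrac12+\delta)\le1-\tfrac{2\delta^2}{\ln2}$ allows retaining rather than discarding the tail $j>10$, though it is not needed for the stated constant.)
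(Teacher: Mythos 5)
Your proposal follows essentially the same route as the paper's proof: the Jensen bound (\ref{sum_of_marginals}) with the expected order statistics (\ref{expectation}), closed-form asymptotic evaluation of the first ten marginal probabilities (your $\beta_j$ are exactly $1$ minus the paper's arguments in (\ref{all_bits}), giving the same $h_b$ values), bounding the remaining bits by $h_b\left(\tfrac12\right)=1$ (where $d\geq 10$ enters), and subtracting $\mathbb{E}_{\underline{\smash{p}}}\{H(\underline{X})\}=\tfrac{1}{\log_e 2}(\psi(m+1)-\psi(2))$ with the same harmonic/digamma asymptotics, so that the constant $0.0162$ arises from the identical numerical computation $9.4063-10+0.6099$. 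One small caution: to get the stated $O\left(\tfrac{1}{m}\right)$ correction you should compute the $\beta_j$ exactly via the identity $\sum_{l=1}^{n}K_l=(n+1)(K_{n+1}-1)$ (as the paper does, and as you mention) rather than rely on the Riemann-sum estimate, whose $O\left(\tfrac{\log m}{m}\right)$ discrepancy is slightly weaker than the claimed error term.
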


\begin{proof}
Let us first derive the expected marginal entropy of the least significant bit, $j=1$, according to (\ref{sum_of_marginals}).
\begin{align}
\label{LSB}
\mathbb{E}_{\underline{\smash{p}}}\left\{ H(Y_1) \right\} \leq &  h_b \left( \frac{1}{2}K_m- \frac{1}{m}\sum_{i=1}^{m/2}K_{m-i} \right)=\\\nonumber 
&h_b \left( \frac{1}{2}K_m- \frac{1}{m}\left(\sum_{i=1}^{m-1}K_{i}-\sum_{i=1}^{\frac{m}{2}-1}K_{i} \right)\right)\underset{(a)}{=}\\\nonumber
&h_b \left( \frac{1}{2}K_m- \frac{1}{m}\left(mK_{m}-m-\frac{m}{2}K_{\frac{m}{2}}+\frac{m}{2} \right)\right)=\\\nonumber
&h_b \left( \frac{1}{2}\left(K_{\frac{m}{2}}-K_{m}+1 \right)\right)\underset{(b)}{<}\\\nonumber
&h_b \left( \frac{1}{2}\log_{e}\left(\frac{1}{2}\right)+\frac{1}{2}+O\left(\frac{1}{m}\right)\right)\underset{(c)}{\leq} \\\nonumber
&h_b \left(\frac{1}{2}\log_{e}\left(\frac{1}{2}\right)+\frac{1}{2}\right)+O\left(\frac{1}{m}\right) h_b' \left(\frac{1}{2}\log_{e}\left(\frac{1}{2}\right)+\frac{1}{2}\right)=\\\nonumber
&h_b \left(\frac{1}{2}\log_{e}\left(\frac{1}{2}\right)+\frac{1}{2}\right)+O\left(\frac{1}{m}\right)
\end{align}
 where $(a)$ and (b) follow the harmonic number properties:
\begin{enumerate} [(a)]

\item	$\sum_{i=1}^{m}K_{i}=(m+1)K_{m+1}-(m+1)$
\item	$\frac{1}{2(m+1)} < K_m-\log_e(m)-\gamma<\frac{1}{2m}$, where $\gamma$ is the Euler-Mascheroni constant \citep{young199175}
\end{enumerate}
and $(c)$ results from the concavity of the binary entropy.

\noindent Repeating the same derivation for different values of $j$, we attain

\begin{align}
\label{all_bits}
\mathbb{E}_{\underline{\smash{p}}}\left\{ H(Y_j) \right\} \leq &  h_b \left( \frac{1}{2}K_m-\frac{1}{m} \sum_{l=1}^{2^j-1} (-1)^{l+1}\sum_{i=1}^{l \frac{m}{2^j}}K_{m-i} \right)= \\\nonumber
&h_b \left( \frac{1}{2}K_m-\frac{1}{m}\sum_{l=1}^{2^j} (-1)^{l}\sum_{i=1}^{l \frac{m}{2^j}-1}K_{i} \right)=\\\nonumber
& h_b \left( \frac{1}{2}K_m-\frac{1}{m}\sum_{l=1}^{2^j} (-1)^{l}\left(l  \frac{m}{2^j}K_{l  \frac{m}{2^j}}-l  \frac{m}{2^j}  \right) \right)<\\\nonumber
&  h_b \left( \sum_{i=1}^{2^j-1}(-1)^{i+1}\frac{i}{2^j}\log_{e}\left(\frac{i}{2^j}\right)+\frac{1}{2}\right)+O\left(\frac{1}{m} \right)\quad\quad \forall j=1,\dots,d.
\end{align}

\noindent We may now evaluate the sum of expected marginal entropies of $\underline{Y}$. For simplicity of derivation let us obtain $\mathbb{E}_{\underline{\smash{p}}}\left\{ H(Y_j) \right\}$ for $j=1, \dots, 10$ according to (\ref{all_bits}) and upper bound $\mathbb{E}_{\underline{\smash{p}}}\left\{ H(Y_j) \right\}$ for $j>10$ with $h_b\left(\frac{1}{2}\right)=1$.  This means that for $d \geq 10$ we have 
\begin{align}
\label{asymp_H}
 \mathbb{E}_{\underline{\smash{p}}}\left\{\sum_{j=1}^d H(Y_j)\right\} <&
\sum_{j=1}^{10} \mathbb{E}_{\underline{\smash{p}}}\left(H\left\{Y_j\right\}\right) +\sum_{j=11}^d h_b\left(\frac{1}{2}\right)< \\\nonumber
&9.4063 + (d-10)+O\left(\frac{1}{m} \right).
\end{align}
\noindent The expected joint entropy may also be expressed in a more compact manner. In Proposition \ref{H(X)} it is shown than $\mathbb{E}_{\underline{\smash{p}}}\left\{H(\underline{X}) \right\}=\frac{1}{\log_e{2}}\left(\psi(m+1)-\psi(2)\right) $. Following the inequality in \citep{young199175}, the Digamma function, $\psi(m+1)$, is bounded from below by $\psi(m+1)=H_m-\gamma>\log_e(m)+\frac{1}{2(m+1)}$. Therefore, we conclude that for $d\geq10$ we have that  
\begin{align}
 \mathbb{E}_{\underline{\smash{p}}}\left\{\sum_{j=1}^d H(Y_j)-H(\underline{X})\right\} < &9.4063 + (d-10)-\log{(m)} +\\\nonumber
& \frac{\psi(2)}{\log_e{2}} +O\left(\frac{1}{m}\right) = 0.0162 +O\left(\frac{1}{m}\right)
\end{align}
\end{proof}
\noindent In addition, we would like to evaluate the expected difference between the sum of marginal entropies and the joint entropy of $\underline{X}$, that is, without applying any permutation. This shall serve us as a reference  to the upper bound we achieve in Theorem \ref{average case - asymp}. 

\begin{theorem}
\label{theorem4}
Let $\underline{X} \sim \underline{p}$ be a random vector of an alphabet size $m$ and joint probability distribution $\underline{p}$. 
The expected difference between the sum of marginal entropies and the joint entropy of $\underline{X}$, when the expectation is taken over a uniform simplex of joint probability distributions $\underline{p}$, satisfies
\begin{equation}
\nonumber
 \mathbb{E}_{\underline{\smash{p}}}\left\{\sum_{j=1}^d H(X_j)- H(\underline{X}) \right\} < \frac{\psi(2)}{\log_e{2}}=0.6099
\end{equation}
\end{theorem}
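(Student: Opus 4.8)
The plan is to bound the expected sum of marginal entropies by $d=\log m$ using symmetry and concavity, and then subtract the exact value of $\mathbb{E}_{\underline{p}}\{H(\underline{X})\}$ supplied by Proposition \ref{H(X)}. That value equals $\log m$ minus a constant up to lower-order terms, so the two leading $\log m$ contributions cancel and only the constant $\psi(2)/\log_e 2$ remains.

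First I would exploit the symmetry of the uniform distribution on the simplex: the coordinates $p_1,\dots,p_m$ are exchangeable, hence $\mathbb{E}_{\underline{p}}\{p_i\}=1/m$ for every $i$. Since $P(X_j=0)$ is the sum of exactly $m/2$ of the $p_i$'s (those words whose $j$-th bit is $0$), this gives $\mathbb{E}_{\underline{p}}\{P(X_j=0)\}=1/2$ for every $j=1,\dots,d$. Because $H(X_j)=h_b(P(X_j=0))$ with $h_b$ concave, Jensen's inequality yields $\mathbb{E}_{\underline{p}}\{H(X_j)\}\le h_b(1/2)=1$, and therefore $\mathbb{E}_{\underline{p}}\{\sum_{j=1}^d H(X_j)\}\le d=\log m$.

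Next I would invoke Proposition \ref{H(X)}, $\mathbb{E}_{\underline{p}}\{H(\underline{X})\}=\tfrac{1}{\log_e 2}\big(\psi(m+1)-\psi(2)\big)$, and use $\psi(m+1)=K_m-\gamma$ together with the harmonic-number estimate $K_m-\log_e m-\gamma>\tfrac{1}{2(m+1)}>0$ (property (b) already used in the excerpt). This gives $\psi(m+1)>\log_e m$, hence $\mathbb{E}_{\underline{p}}\{H(\underline{X})\}>\tfrac{\log_e m-\psi(2)}{\log_e 2}=\log m-\tfrac{\psi(2)}{\log_e 2}$. Combining the two bounds yields $\mathbb{E}_{\underline{p}}\{\sum_{j=1}^d H(X_j)-H(\underline{X})\}<\log m-\big(\log m-\tfrac{\psi(2)}{\log_e 2}\big)=\tfrac{\psi(2)}{\log_e 2}$, and substituting $\psi(2)=1-\gamma$ gives the numerical value $0.6099$.

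There is no real obstacle here; the only point requiring a moment's care is the exchangeability argument pinning $\mathbb{E}_{\underline{p}}\{P(X_j=0)\}$ to exactly $1/2$ — this is precisely what makes the un-permuted case simpler than the order-permutation case of Theorem \ref{average case - asymp}, where the bit-pattern matrix $A$ destroys this symmetry and forces the more delicate estimate (\ref{all_bits}). One should also track inequality directions to keep the final bound strict: Jensen contributes $\le$ while the digamma estimate contributes a strict $>$ for $\mathbb{E}_{\underline{p}}\{H(\underline{X})\}$, so the difference is strictly less than $\psi(2)/\log_e 2$.
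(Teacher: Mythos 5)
Your proposal is correct and follows essentially the same route as the paper's own proof: bound each marginal entropy so that $\mathbb{E}_{\underline{\smash{p}}}\bigl\{\sum_j H(X_j)\bigr\}\le d=\log m$, then subtract the exact expected joint entropy from Proposition \ref{H(X)} using the digamma/harmonic-number lower bound $\psi(m+1)>\log_e m$. Your exchangeability argument for $\mathbb{E}_{\underline{\smash{p}}}\{P(X_j=0)\}=\tfrac{1}{2}$ is just a cleaner phrasing of the paper's ``randomly and uniformly assigned'' remark, so no substantive difference.
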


\begin{proof}
We first notice that $P\left(X_j=1\right)$ equals the sum of one half of the probabilities $p_i, i=1, \dots, m$ for every $j=1 \dots d$. Assume $p_i$'s are randomly (and uniformly) assigned to each of the $m$ symbols. Then, $\mathbb{E}\{P\left(X_j=1\right)\}=\frac{1}{2}$ for every $j=1 \dots d$. Hence,

\begin{align}
\nonumber
 \mathbb{E}_{\underline{\smash{p}}}\left\{\sum_{j=1}^d H(X_j)- H(\underline{X}) \right\} =& 
\sum_{j=1}^d \mathbb{E}_{\underline{\smash{p}}}\left\{H_b(X_j)\right\}- \mathbb{E}_{\underline{\smash{p}}}\{H(\underline{X})\} <\\\nonumber
&
d-\log{(m)}+\frac{1}{\log_e{2}} \left( \psi(2)-\frac{1}{2(m+1)}\right)<\frac{\psi(2)}{\log_e{2}}
\end{align}
\end{proof}

\noindent To conclude, we show that for a random vector $\underline{X}$ over an alphabet size $m$, we have 
$$ \mathbb{E}_{\underline{\smash{p}}}C(\underline{p},g_{opt}) \leq \mathbb{E}_{\underline{\smash{p}}}C(\underline{p},g_{bst}) \leq \mathbb{E}_{\underline{\smash{p}}}C(\underline{p},g_{ord}) < 0.0162+O\left(\frac{1}{m}\right)$$
for $d\geq10$, where the expectation is over a uniform simplex of joint probability distributions $\underline{p}$. 
This means that when the alphabet size is large enough, even the simple order permutation achieves, on the average, a sum of marginal entropies which is only $0.0162$ bits greater than the joint entropy, when all possible probability distributions $\underline{p}$ are equally likely to appear. Moreover, we show that the simple order permutation reduced the expected difference between the sum of the marginal entropies and the joint entropy of $\underline{X}$ by more than half a bit, for sufficiently large $m$.  

\section{Block-wise Order Permutation}
\label{Block-wise Order Permutation}

The computational complexity of the order permutation is $O(d2^d)$, according to a simple quick-sort algorithm \citep{Hoare:1961:AQ:366622.366644}. We would now like to introduce a structured transformation which achieves a lower complexity while (almost) maintaining the same favorable asymptotic properties as the order permutation.\\

\noindent We define a \textit{block transformation} (with a parameter $b$) as splitting the $m$ values of $\{p_i\}_{i=1}^m$ into non-overlapping blocks of size $m_b=2^b$, and applying an invertible transformation on each of the blocks independently. For example, assume a given $3$-dimensional binary vector with the following probability distribution:

\begin{table}[!htbp]
\centering
{
\begin{tabular}{|c|c|c|c|c|c|c|c|c|}
\hline
codeword &000 &001 &010 &011&100 &101 &110 &111 \\
\hline
probability &$p_6$ &$p_3$&$p_1$&$p_8$&$p_2$ &$p_5$&$p_4$&$p_7$ \\
\hline
\end{tabular}}
\end{table} 
\noindent then, a block transformation with $b=2$ is any mutually exclusive permutation of the probabilities in the sets $\{p_6,p_3,p_1,p_8\}$ and $\{p_2,p_5,p_4,p_7\}$.\\   

\noindent We define the \textit{block order permutation}, with a parameter $b$, as an order permutation, applied on each of the $\frac{m}{m_b}$ blocks, independently. For example, assume $p_1 \leq p_2 \leq \dots \leq p_8$, then the block order permutation with  $b=2$ is simply

\begin{table}[!htbp]
\centering
{
\begin{tabular}{|c|c|c|c|c|c|c|c|c|}
\hline
codeword &000 &001 &010 &011&100 &101 &110 &111 \\
\hline
probability &$p_1$ &$p_3$&$p_6$&$p_8$&$p_2$ &$p_4$&$p_5$&$p_7$ \\
\hline
\end{tabular}}
\end{table} 

\noindent Denote the block order permutation as $\underline{Y}=g_{ord,b}(\underline{X})$. Notice that for $b=log(m)=d$, the block order permutation is simply the order permutation.\\

\noindent We would like to quantify the expected difference between the sum of marginal entropies after the block order permutation is applied, and the joint entropy of $\underline{X}$,  $C(\underline{p},g_{ord,b})$
 \begin{equation}
\label{objective}
\mathbb{E}_{\underline{\smash{p}}}C(\underline{p},g_{ord,b})= \mathbb{E}_{\underline{\smash{p}}}\left\{\sum_{j=1}^d H(Y_j)- H(\underline{X}) \right\}
\end{equation}
where the expectation is with respect to a uniform prior on the probability distribution, as before. In other words, we would like to find the expectation (\ref{objective}) where the random variables are  $p_1, \dots, p_m$, taking values over a uniform simplex.

\begin{theorem}
\label{theorem1}
Let $\underline{X} \sim \underline{p}$ be a random vector of an alphabet size $m$ and joint probability distribution $\underline{p}$. Let $\underline{Y}=g_{ord,b}(\underline{X})$ be the block order permutation. 
The expected value of  $C(\underline{p},g_{ord,b})$, where the expectation is over a uniform simplex of joint probability distributions $\underline{p}$, satisfies
\begin{align}
\nonumber
\mathbb{E}_{\underline{\smash{p}}}C(\underline{p},g_{ord,b})= \mathbb{E}_{\underline{\smash{p}}}\left\{\sum_{j=1}^d H(Y_j)- H(\underline{X}) \right\}\leq H_{ord,b}+d-b-\frac{1}{\log_e{2}}\left(\psi(m+1)-\psi(2)\right).
\end{align}
where $H_{ord,b}$ is the upper bound on the expected sum of marginal entropies of a random vector with an alphabet size $m_b=2^b$, after an order permutation is applied (\ref{all_bits}) and $\psi$ is the digamma function.
\end{theorem}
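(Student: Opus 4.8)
The plan is to decompose the sum of marginal entropies of $\underline{Y}=g_{ord,b}(\underline{X})$ into a ``within-block'' part and a ``between-block'' part, and to bound each separately. Write the codeword of $\underline{Y}$ as a concatenation of $d-b$ ``high'' bits (identifying the block) and $b$ ``low'' bits (the position within the block). Since the block order permutation only permutes probabilities inside each block and never moves a probability between blocks, the $d-b$ high bits $Y_{b+1},\dots,Y_d$ are determined purely by which block a symbol lands in, and hence their marginal distribution is unaffected by the permutation; thus $\sum_{j=b+1}^{d} H(Y_j) \leq d-b$ trivially (each binary entropy is at most $1$). For the $b$ low bits $Y_1,\dots,Y_b$, the key observation is that, after the block order permutation, the joint distribution of the low bits \emph{conditioned on any fixed block} is exactly an order permutation applied to the (normalized) probability vector of that block; but for bounding the \emph{marginal} entropies $H(Y_j)$, $j\le b$, we may use concavity of $h_b$ together with $\mathbb{E}_{\underline{p}}\{P(Y_j=0)\}$, and the point is that after the order permutation within blocks, $P(Y_j=0)$ for $j\le b$ is the average over blocks of the same quantity computed within a block of alphabet size $m_b=2^b$.

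The main technical step is therefore to relate $\mathbb{E}_{\underline{p}}\{P(Y_j=0)\}$ for $j\le b$ (under the global uniform simplex on $m$ probabilities) to the analogous quantity computed for a single block. Here I would argue that the restriction of a uniform Dirichlet vector on the $m$-simplex to a block of size $m_b$, after renormalization, is again uniform on the $m_b$-simplex (a standard aggregation property of the Dirichlet distribution), and that the order statistics within a block, when sorted, have expectations governed by the same harmonic-number formula (\ref{expectation}) with $m$ replaced by $m_b$ — up to the renormalizing block mass, which has expectation $1/(\text{number of blocks})$ and concentrates. Pushing this through (and again using Jensen to pull the expectation inside $h_b$, exactly as in (\ref{sum_of_marginals})) yields $\sum_{j=1}^{b}\mathbb{E}_{\underline{p}}\{H(Y_j)\}\le H_{ord,b}$, where $H_{ord,b}$ is precisely the bound (\ref{all_bits}) evaluated for alphabet size $m_b$. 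Combining with the trivial bound on the high bits gives $\mathbb{E}_{\underline{p}}\{\sum_{j=1}^d H(Y_j)\}\le H_{ord,b}+(d-b)$.

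Finally I would subtract the expected joint entropy. By Proposition \ref{H(X)}, $\mathbb{E}_{\underline{p}}\{H(\underline{X})\}=\frac{1}{\log_e 2}\bigl(\psi(m+1)-\psi(2)\bigr)$ exactly — note this is for the full alphabet size $m$, not $m_b$, since the block permutation leaves $H(\underline{X})$ unchanged. Assembling the pieces gives
\[
\mathbb{E}_{\underline{p}}C(\underline{p},g_{ord,b})\le H_{ord,b}+d-b-\frac{1}{\log_e 2}\bigl(\psi(m+1)-\psi(2)\bigr),
\]
which is the claimed inequality. The step I expect to be the real obstacle is the Dirichlet-aggregation argument: carefully justifying that the within-block sorted probabilities, after the random renormalization, have the expectations claimed — in particular handling the coupling between the block mass and the within-block order statistics, and checking that the resulting error terms are absorbed (they contribute only to lower-order corrections, but this needs to be verified rather than asserted). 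Everything else — the high/low bit split, Jensen, and plugging in Proposition \ref{H(X)} — is routine given the machinery already developed for Theorem \ref{average case - asymp}.
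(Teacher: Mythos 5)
Your proposal is correct and follows essentially the same route as the paper: split the bits into the $d-b$ block-identifying bits (bounded by $d-b$) and the $b$ within-block bits, reduce the latter to the order-permutation bound $H_{ord,b}$ via the Dirichlet aggregation property and Jensen, and subtract the exact expected joint entropy from Proposition \ref{H(X)}. The coupling issue you flag as the ``real obstacle'' is in fact not one: the paper handles it exactly by representing the uniform simplex through normalized exponential spacings, where the renormalized within-block vector is uniform on the $m_b$-simplex and \emph{independent} of the block mass $S_{m_b}/S\sim\mathrm{Beta}(m_b,m-m_b)$, so $E\left(D_{(i)}\right)=\frac{m_b}{m}E\left(\tilde{D}_{(i)}\right)$ holds with no error terms to absorb.
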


\begin{proof}
Let $Z_1,Z_2, \dots, Z_{m+1}$ be independent exponential random variables with the same parameter value $\Lambda$. Set $$S=Z_1+Z_2+\dots+Z_{m+1}$$ and $$D_i=\frac{Z_i}{S} \quad (1\leq i \leq m+1).$$
 
\noindent Then, $\{D_i\}_{i=1}^{m+1}$ is distributed as a set of $m+1$ spacings determined by  $m$ independent uniform random variables \citep{pyke1965spacings}. In other words, 
$$f_{(D_1, D_2, \dots, D_m)}(d_1, d_2, \dots, d_m)=m! \quad d_i\geq0, \quad 0\leq \sum_{i=1}^m d_i \leq 1.$$

\noindent Moreover, it can be shown \citep{pyke1965spacings} that $(D_1, D_2, \dots, D_m)$ are distributed independently of $S$ ,
\begin{equation}
\label{independence}
f_{(D_1, D_2, \dots, D_m|S)}(d_1, d_2, \dots, d_m|s)=f_{(D_1, D_2, \dots, D_m)}(d_1, d_2, \dots, d_m).
\end{equation}
 
\noindent We now apply the block order permutation on the first block, $(D_1, D_2, \dots, D_{m_b})$.
We have that
\begin{align}\nonumber 
(D_1, D_2, \dots, D_{m_b+1})=&\left(\frac{Z_1}{S},\frac{Z_2}{S}, \dots, \frac{Z_{m_b+1}}{S}\right)=\\\nonumber
&\left(\frac{Z_1}{S_{m_b}}\frac{S_{m_b}}{S},\frac{Z_2}{S_{m_b}}\frac{S_{m_b}}{S}, \dots, \frac{Z_{m_b+1}}{S_{m_b}}\frac{S_{m_b}}{S}\right)=\frac{S_{m_b}}{S}\left(\tilde{D}_1,\tilde{D}_2,\dots,\tilde{D}_{m_b+1} \right)
\end{align}
where $S_{m_b}=\sum_{i=1}^{m_b+1} Z_i$ and $\tilde{D_i}=\frac{Z_i}{S_{m_b}}$. 

\noindent We define $\{D_{(i)}\}_{i=1}^{m_b}$ as the ordering of  $\{D_{i}\}_{i=1}^{m_b}$. 
Therefore, $$E\left(D_{(i)}\right)=E\left(\frac{S_{m_b}}{S}{\tilde{D}_{(i)}}\right)=E_{\frac{S_{m_b}}{S}} E\left(\frac{S_{m_b}}{S}{\tilde{D}_{(i)}}\bigg|\frac{S_{m_b}}{S}\right)=E\left(\frac{S_{m_b}}{S}\right) E\left(\tilde{D}_{(i)}\right)$$
where the last equation follows from the $\tilde{D}_i$'s being uniformaly distributed over a unit simplex and are therefore distributed independently of $S_{m_b}$, as indicated in (\ref{independence}).
Since $\frac{S_{m_b}}{S}$ is Beta distributed with parameters $(\alpha=m_b,\beta=m-{m_b})$, we have that  $E\left(\frac{S_{m_b}}{S}\right)=\frac{m_b}{m}$ and $$E\left(D_{(i)}\right)=\frac{m_b}{m}E\left(\tilde{D}_{(i)}\right)$$
where $E\left(\tilde{D}_{(i)}\right)$ is the expected value of $i^{th}$ smallest value drawn from a uniform simplex of size $m_b$. We denote the vector of  $\left\{E\left(\tilde{D}_{(i)}\right)\right\}_{i=1}^{m_b}$ as $E\left(\underline{\tilde{D}}_{ord}\right)$.
Notice this derivation applies for any of the $\frac{m}{m_b}$ blocks. \\

\noindent Let us now derive the expected marginal probabilities of the vector $\underline{Y}=g_{ord,b}(\underline{X})$. We begin our derivation with the expected marginal probabilities of a $b$-dimensional vector $\underline{X}^{(b)}$, after the order permutation is applied, $\underline{Y}^{(b)}=g_{ord}(\underline{X}^{(b)})$. Then, the marginal probabilities of the $b$ components of $Y^{(b)}$ satisfy:
$$E\left(\underline{P}(Y^{(b)})\right)=E\left(\underline{\tilde{D}}_{ord}\right)^T \cdot A_{m_b}$$
where $E\left(\underline{P}(Y^{(b)})\right)=\left\{E(P(Y^{(b)}_i=1))\right\}_{i=1}^{m_b}$ and $A_{m_b}$ is the fixed list of binary symbols. For example, for $b=2$ we have that

$$ E\left(\underline{P}\left(Y^{(b)}\right)\right)=\left[E(P(Y^{(b)}_1=1)) \  E(P(Y^{(b)}_2=1)) \ \dots \  E(P(Y^{(b)}_4=0))\right]$$

$$E\left(\underline{\tilde{D}}_{ord}\right)^T=\left[E\left(\tilde{D}_{(1)}\right) \ E\left(\tilde{D}_{(2)}\right)\  \dots\  E\left(\tilde{D}_{(4)}\right)\right]$$ 
and
\begin{equation}\nonumber
A_{m_2}=\left[
\begin{array}{cc} 
0&0\\
0& 1\\
1& 0\\
1& 1\\
\end{array}\right].
\end{equation}
 We now go back to $\underline{Y}=g_{ord,b}(\underline{X})$.
In the same manner, we have that
$$E\left(\underline{P}(Y)\right)=\frac{m_b}{m} \left[E\left(\underline{\tilde{D}}_{ord}\right) \ E\left(\underline{\tilde{D}}_{ord}\right) \ \dots \ E\left(\underline{\tilde{D}}_{ord}\right)\right]^T \cdot A_{m}.$$

\noindent For example, assume $d=3$ and $b=2$:
\begin{align}\nonumber
E(\underline{P}(Y))=\frac{1}{2}\left[E\left(\tilde{D}_{(1)}\right) \ \dots \ E\left(\tilde{D}_{(4)}\right) \, E\left(\tilde{D}_{(1)}\right) \ \dots \ E\left(\tilde{D}_{(4)}\right)\right] \cdot \left[
\begin{array}{ccc} 
0&0& 0\\
0& 0& 1\\
0& 1& 0\\
0& 1& 1\\
1& 0& 0\\
1& 0& 1\\
1& 1& 0\\
1& 1& 1\\
\end{array}\right].
\end{align}

\noindent Looking at the last $b$ components (LSB's), we notice that by construction, 
\begin{equation}
\label{block_bits_1}
\left\{E(P(Y_j=1))\right\}_{j=b}^{m}=E\left(\underline{P}(Y^{(b)})\right).
\end{equation}

\noindent In addition, we have that for the first $d-b$ bits (MSB's)
\begin{equation}
\label{block_bits_2}
\left\{E(P(Y_j=1))\right\}_{j=1}^{d-b}=\frac{1}{2}.
\end{equation}
Therefore, 
\begin{align}
 \mathbb{E}_{\underline{\smash{p}}}\left\{\sum_{j=1}^d H(Y_j)\right\} \underset{(a)}{\leq}&
 \sum_{j=1}^d h_b\left(\mathbb{E}_{\underline{\smash{p}}}\left(P(Y_j=1)\right)\right) \underset{(b)}{\leq}\\\nonumber
&\sum_{j=1}^{d-b}h_b\left(\frac{1}{2}\right)+\sum_{j=1}^b h_b\left(\mathbb{E}_{\underline{\smash{p}}}\left(P\left(Y^{(b)}_j=1\right)\right)\right)\underset{(c)}{\leq} 
H_{ord,b}+d-b.
\end{align}
where $(a)$ follows Jensen's inequality, $(b)$ follows from (\ref{block_bits_1}, \ref{block_bits_2}) and $(c)$ introduces a notation of $H_{ord,b}$ as the upper bound on the expected sum of marginal entropies of a $b$-dimensional vector, after an order permutation is applied.\\

\noindent Further, we showed in Proposition \ref{H(X)} that the expected joint entropy of $\underline{X}$ satisfies
\begin{equation}\nonumber
\mathbb{E}_{\underline{\smash{p}}}\left\{H(\underline{X}) \right\}=\frac{1}{\log_e{2}}\left(\psi(m+1)-\psi(2)\right). 
\end{equation}
Therefore, we have that 
\begin{equation}
\nonumber
\mathbb{E}_{\underline{\smash{p}}}C(\underline{p},g_{ord,b})= \mathbb{E}_{\underline{\smash{p}}}\left\{\sum_{j=1}^d H(Y_j)- H(\underline{X}) \right\}\leq H_{ord,b}+d-b-\frac{1}{\log_e{2}}\left(\psi(m+1)-\psi(2)\right).
\end{equation}
\end{proof}

\noindent In the same manner as with the order permutation, we may further derive an asympthotic bound for $\mathbb{E}_{\underline{\smash{p}}}C(\underline{p},g_{ord,b})$:

\begin{theorem}
Let $\underline{X} \sim \underline{p}$ be a random vector of an alphabet size $m$ and joint probability distribution $\underline{p}$. Let $\underline{Y}=g_{ord,b}(\underline{X})$ be the block order permutation. 
For $d \geq b \geq 10$, the expected value of $C(\underline{p},g_{ord,b})$, where the expectation is over a uniform simplex of joint probability distributions $\underline{p}$, satisfies
\begin{equation}
\label{block_perm_bound}
\mathbb{E}_{\underline{\smash{p}}}C(\underline{p},g_{ord,b})=\mathbb{E}_{\underline{\smash{p}}}\left\{\sum_{j=1}^d H(Y_j) - H(\underline{X})\right\} \leq  0.0162 +O\left(\frac{1}{2^b}\right).
\end{equation}

\end{theorem}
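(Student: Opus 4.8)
The plan is to derive this asymptotic statement directly from Theorem~\ref{theorem1} by substituting in the estimates already established for the unblocked order permutation. Theorem~\ref{theorem1} gives
$$\mathbb{E}_{\underline{\smash{p}}}C(\underline{p},g_{ord,b}) \leq H_{ord,b} + d - b - \tfrac{1}{\log_e 2}\bigl(\psi(m+1)-\psi(2)\bigr),$$
so it suffices to (i) bound $H_{ord,b}$ from above by a numerical constant plus $b$ plus a term that vanishes with $b$, and (ii) bound the digamma term from below by $d$ minus a numerical constant plus a term that vanishes with $m$. The leading $d$'s and $b$'s then cancel, leaving only the constants and the vanishing remainders.

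For step (i), I would observe that $H_{ord,b}$ is precisely the quantity analysed in the proof of Theorem~\ref{average case - asymp}, except for a vector of dimension $b$ and alphabet size $m_b=2^b$ in place of $d$ and $m$. Repeating the derivation of (\ref{all_bits}) verbatim with $m$ replaced by $2^b$ gives, for each $j$, $\mathbb{E}_{\underline{\smash{p}}}\{H(Y^{(b)}_j)\}\leq h_b\!\left(\sum_{i=1}^{2^j-1}(-1)^{i+1}\tfrac{i}{2^j}\log_e\tfrac{i}{2^j}+\tfrac12\right)+O(2^{-b})$, where the bracketed argument depends only on $j$. Since $b\geq 10$, I would split the sum as in (\ref{asymp_H}): keep the first ten terms (whose total is the same numerical constant $9.4063$ appearing there, these constants being functions of $j$ alone and hence independent of $b$ and $m$), and bound each of the remaining $b-10$ terms by $h_b(\tfrac12)=1$. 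This yields $H_{ord,b}<9.4063+(b-10)+O(2^{-b})$.

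For step (ii), I would use $\psi(m+1)-\psi(2)=H_m-1$ together with the harmonic-number inequality $H_m>\log_e m+\gamma+\tfrac{1}{2(m+1)}$ (item (b) in the proof of Theorem~\ref{average case - asymp}). Since $\log_e m/\log_e 2=\log_2 m=d$ and $\tfrac{\gamma-1}{\log_e 2}=-\tfrac{\psi(2)}{\log_e 2}=-0.6099$ (the constant of Theorem~\ref{theorem4}), this gives $-\tfrac{1}{\log_e 2}\bigl(\psi(m+1)-\psi(2)\bigr)<-d+0.6099+O(1/m)$. Substituting the bounds from (i) and (ii) into Theorem~\ref{theorem1}, the $d$'s cancel, the $b$'s cancel, and what remains is $9.4063-10+0.6099+O(2^{-b})+O(1/m)=0.0162+O(2^{-b})$, where the final step uses $O(1/m)=O(2^{-d})\subseteq O(2^{-b})$ because $d\geq b$.

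The only genuinely delicate point is step (i): one must check that the constant $9.4063$ — the sum of the first ten bracketed constants from (\ref{all_bits}) — really is independent of $b$ (it is, since each is a function of $j$ only) and that the finitely many $O(2^{-b})$ errors incurred over the first ten bits sum to something still $O(2^{-b})$. Everything else is bookkeeping with harmonic-number asymptotics already carried out in the earlier proofs.
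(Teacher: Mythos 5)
Your proposal is correct and follows essentially the same route as the paper: invoke Theorem \ref{theorem1}, bound $H_{ord,b}$ by re-using the derivation of (\ref{asymp_H}) with $b$ in place of $d$ (giving $9.4063+(b-10)+O(2^{-b})$), and lower-bound the digamma term via $\psi(m+1)>\log_e(m)+\frac{1}{2(m+1)}$ so that the $d$'s and $b$'s cancel, leaving $9.4063-10+\frac{\psi(2)}{\log_e 2}=0.0162$ plus an $O(2^{-b})$ remainder. Your extra check that the ten constants from (\ref{all_bits}) depend only on $j$ (hence $H_{ord,b}$'s constant is independent of $b$ and $m$) is a point the paper leaves implicit, but it is the same argument.
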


\begin{proof}

We showed in (\ref{asymp_H}) that for $d\geq10$,  
\begin{align}
\label{order_permutation}
 H_{ord,d}=\mathbb{E}_{\underline{\smash{p}}}\left\{\sum_{j=1}^d H(Y_j)\right\} <&
\sum_{j=1}^{10} \mathbb{E}_{\underline{\smash{p}}}\left(H\left\{Y_j\right\}\right) +\sum_{j=11}^d h_b\left(\frac{1}{2}\right)<\\\nonumber
& 9.4063 + (d-10)+O\left(\frac{1}{2^d} \right).
\end{align}
\noindent Therefore, 
\begin{align}
\nonumber
\mathbb{E}_{\underline{\smash{p}}}C(\underline{p},g_{ord,b})=&\mathbb{E}_{\underline{\smash{p}}}\left\{\sum_{j=1}^d H_b(Y_j)- H(\underline{X}) \right\}\underset{(a)}{\leq} \\\nonumber
&H_{ord,b}+d-b-\frac{1}{\log_e{2}}\left(\psi(m+1)-\psi(2)\right)\underset{(b)}{\leq} \\\nonumber 
&9.4063 + (b-10)+O\left(\frac{1}{2^b}\right)+d-b-\frac{1}{\log_e{2}}\left(\psi(m+1)-\psi(2)\right)\underset{(c)}{\leq} \\\nonumber
&9.4063-10+d-\log(m)+\frac{\psi(2)}{\log_e{2}}+O\left(\frac{1}{2^b}\right)\leq 0.0162 +O\left(\frac{1}{2^b}\right)
\end{align}
where $(a)$ follows from Theorem \ref{theorem1}, $(b)$ follows form (\ref{order_permutation}) and $(c)$ follows from $\psi(m+1)>\log_e(m)+\frac{1}{2(m+1)}$, as derived in \citep{young199175}.
\end{proof}

\noindent This means that the upper bound of $\mathbb{E}_{\underline{\smash{p}}}C(\underline{p},g_{ord,b})$ depends on the size of the block $2^b$ and not on the alphabet size $m=2^d$. In other words, assuming there exists a value $b=b^*$ for which the bound (\ref{block_perm_bound}) is practically sufficient, then there is no need to apply the costly $O(d2^d)$ order permutation to achieve (almost) the same results (on the average). Moreover, the cost of applying the block order permutation in this case is sorting each of the $\frac{m}{m_{b^*}}$. This leads to an overall complexity of $2^{d- b^*}O\left(b^* 2^{b^*}\right)=O\left( b^* 2^d\right)$. Notice this complexity is linear in the size of $\underline{p}$ and therefore asymptomatically achieves the computational lower bound, as indicated in Section \ref{problem_formulation}.   

\section{Discussion}

Barlow's minimal redundancy representation problem \citep{barlow1989finding} is a hard long standing open problem. The main difficulty results from this problem's combinatorial nature, which makes it very challenging, both in terms of providing bounds or designing efficient algorithms. In this chapter we tackle Barlow's problem from a new angle, by providing a sub-optimal solution which is much easier to analyze. This provides us not only with a simple, non-combinatorial, algorithm that is easy to implement, but also with theoretical bounds and guarantees on the results we achieve. Moreover, it gives us some insight on the optimal solution. Specifically, it shows us how well arbitrary random vectors over finite alphabets decompose into independent components. This property is of high value, as for the first time, it answers the question ``how well we can do?", when dealing with Barlow's problem.\\

\noindent In addition, we introduce a computationally simplified version of the order permutation, namely, the block-wise order permutation. This method separates the alphabet of the random vector into disjoint blocks and orders each block separately. We show that asymptotically, the block order permutation achieves the same accuracy as the order permutation (on the average), while benefiting from a computational complexity that is practically linear in the alphabet size. This computational complexity is the best we can achieve, without further assumptions on the structure of the vector we decompose (see Section \ref{problem_formulation}).

\chapter[Generalized Versus Linear BICA]{Generalized Versus Linear Independent Component Analysis}\label{BICA_Vs_Linear}
\graphicspath{{BICA_Vs_Linear_Figures//}}
\noindent The material in this Chapter is partly covered in \citep{painsky2016Binary}.

\section{Introduction}
As described in Chapter \ref{overview}, the linear ICA problem over finite fields has been given a considerable amount of attention during the past years. This is mainly manifested in a line of work initiated by \cite{yeredor2007ica}. In his setup, Yeredor considers a linear mixture of statistically independent sources and proposes a method for source separation based on entropy minimization. Yeredor assumes that the number of independent sources $d$ is known  and the mixing matrix is a $d$-by-$d$ invertible matrix. Specifically,
\begin{equation}
\label{linear_model}
\underline{X}=A\underline{S}
\end{equation}
where $S$ is a vector of $d$ indepdendet sources, $A$ is an (unknown) $d$-by-$d$ invertible matrix and $\underline{X}$ is the observable mixture. Under these constraints, Yeredor proves that the XOR model is invertible and there exists a unique transformation matrix to recover the independent components up to permutation ambiguity. As discussed in previous chapters, the complexity of the BICA is at least asymptotically linear in $2^d$. The AMERICA algorithm \citep{yeredor2011independent}, which assumes a XOR mixture, has a complexity of $O(d^2\cdot 2^d)$. The MEXICO algorithm, which is an enhanced version of AMERICA, achieves a complexity of $O(2^d)$ under some restrictive assumptions on the mixing matrix. \cite{attux2011immune} extend Yeredor's formulation for sources which are not necessarily independent. Specifically, under the same model (\ref{linear_model}), they suggest minimizing the difference between the sum of marginal entropies and the joint entropy (as in (\ref{eq:sum_ent_min_binary})), where 
$$\underline{Y}=W\underline{X}$$ 
and $W$ is a $d$-by-$d$ invertible matrix over the XOR field. In their work, \cite{attux2011immune} present an immune-inspired algorithm for minimizing (\ref{eq:sum_ent_min_binary}).  Their algorithm starts with a random "population" where each element in the population represents a valid transformation ($W$, an invertible matrix). At each step, the affinity function evaluates the objective $\left(1-\frac{1}{d}\sum_{j=1}^{d}{H(Y_j)}\right)$ for each element in the population, which is subsequently cloned. Then, the clones suffer a mutation process that is inversely proportional to their affinity, generating a new set of individuals. This new set is evaluated again in order to select the individual with highest affinity, for each group of clone individuals and their parent individual. The process is finished with a random generation of $d$ new individuals to replace the lowest affinity individuals in the population. The entire process is repeated until a pre-configured  number of repetitions is executed. Then, the solution with the highest affinity is returned. It is important to notice that the mutation phase is implemented with a random bit resetting routine, with the constraint of accepting only new individuals that form a nonsingular matrix (invertible transformation).
The use of this immune-inspired methodology for the binary ICA problem is further extended in \citep{silva2014michigan} and \citep{silva2014cobica}.\\

As discussed, in a different line of work \cite{barlow1989finding} suggest to decompose the observed signals ``as much as possible", with no assumption on the generative model. Barlow claim that such decomposition would capture and remove the redundancy of the data. However, he does not propose any direct method, and this hard problem is still considered open, despite later attempts  \citep{atick1990towards,schmidhuber1992learning,becker1996unsupervised}.

\noindent A major drawback in this line of work is the lack of theoretical guarantees on the results these algorithms achieve. Specifically, given a vector $\underline{X} \sim \underline{p}$, it is unclear what is the minimal value of (\ref{eq:sum_ent_min_binary}) we can hope for, even under linear transformations, $\underline{Y}=W\underline{X}$, where $W$ a binary invertible matrix, $W \in \{0,1\}^{d \times d}$. This means that practically, one shall apply every known linear BICA algorithm and choose the one that achieves the minimal value of  (\ref{eq:sum_ent_min_binary}).\\

\noindent Therefore, we would first like to suggest a naive yet highly efficient lower bound to  (\ref{eq:sum_ent_min_binary}), under invertible liner transformation,  $\underline{Y}=W\underline{X}$.

\section{Lower Bound on Linear BICA}
\label{linear_BICA_lowerbound}
 In his line of binary ICA work, Yeredor establishes a methodology based on a basic property of the binary entropy. He suggests that the binary entropy of the XOR of two independent binary variables is greater than each variables' entropy. Specifically, $H(U\oplus V) \geq H(U)$, where $U$ and $V$ are binary independent variables and $\oplus$ is the XOR operand. Unfortunately, there is no such guarantee when the variables are dependent. This means that in general, the entropy of the XOR of binary variables may or may not be greater than the entropy of each of the variables. \\

\noindent When minimizing  (\ref{eq:sum_ent_min_binary}) over $\underline{Y}=W\underline{X}$, we notice that each $Y_j$ is a XOR of several, possibly dependent, variables $\{X_1,\dots,X_d\}$. This means that naively, we may go over all possible subsets of  $\{X_1,\dots,X_d\}$ and evaluate their XOR. Specifically, we would like to calculate $U_i= A_{i1}X_1 \oplus A_{i2}X_2\oplus \ldots \oplus A_{id}X_d$ for all $i=1,\dots,2^d$, where each row of the matrix $A$ corresponds to a possible choice of subset of variables from the set $\{X_1,\dots,X_d\}$, $A_{ij} \in \{0,1\}^d$. Then, we shall evaluate the binary entropy of each $U_i$. A necessary condition for $W$ to be invertible is that it has no two identical rows. Therefore, we a lower bound on  (\ref{eq:sum_ent_min_binary}) may be achieved by simply choosing the $d$ rows of the matrix $A$ for which $H(U_i)$ are minimal.\\

\noindent Notice this lower bound is by no means tight or attainable. It defines a simple lower bound on (\ref{eq:sum_ent_min_binary}), which may be attained iff we are lucky enough to have chosen $d$ rows of the matrix $A$ which are linearly independent.     

\section{A Simple Heuristic for Linear BICA}
\label{linear_BICA_algo}
\noindent We now present our suggested approach for the linear BICA problem, based on the same methodology presented in the previous section.  
Again, we begin by evaluating all possible XOR operations $U_i= A_{i1}X_1 \oplus A_{i2}X_2\oplus \ldots \oplus A_{id}X_d$ for all $i=1,\dots,2^d$. Further, we evaluate the binary entropy of each $U_i$. We then sort the rows of $A$ according to the binary entropy values of their corresponding $U_i$. This means that the row which corresponds to the smallest binary entropy value among all $\left\{H(U_i)\right\}_{i=1}^{2^d}$ shall be the first in order. Then, the row with the second smallest value in $\left\{H(U_i)\right\}_{i=1}^{2^d}$ shall be the second in order and so on. Let us denote the sorted list of rows as $\tilde{A}$.\\

\noindent Our remaining challenge is to choose $d$ rows from $\tilde{A}$ such that the rank of these rows is $d$. Additionally, our objective suggests to choose rows which are located higher in $\tilde{A}$, as they result in a lower entropy. Our suggested greedy algorithm begins with an empty matrix $W$. It then goes over the rows in $\tilde{A}$ in ascending order. If the current row in $\tilde{A}$ is linearly independent of the rows in $W$ it adds it to $W$. Otherwise, it skips it  and proceeds to the next row in $\tilde{A}$.  
The algorithm terminates once $W$ is of full rank (which necessarily happens at some point).\\

\noindent Our suggested algorithm is obviously a heuristic method which selects linearly independent  rows from  $\tilde{A}$ in a no-regret manner. It achieves the lower bound presented in Section \ref{linear_BICA_lowerbound} in the case where the first $d$ rows in  $\tilde{A}$ are indeed linearly independent. \\

\noindent Although our suggested algorithm looks for $d$ linearly independent rows from $\tilde{A}$ in a greedy manner, we may still evaluate the average number of rows goes through in order to construct a full rank matrix $W$, as shown in \citep{shulman2003communication}. Assume we have already found $k$ linearly independent rows (rank$(W)=k$) and are now seeking for an additional independent row. Notice that there are $2^k$ rows which are linearly dependent of the rows we have already found (all possible linear combinations of these rows). Assume we uniformly draw (with return) a row from a list of all possible rows (of size $2^d$ rows). The probability of drawn row to be independent of the $k$ rows is simply $1-\frac{2^k}{2^d}$. Therefore, the number of draws needed in order to find another linearly interdependent row follows a geometric distribution with a parameter $1-\frac{2^k}{2^d}$ (as the draws are i.i.d.).
Then, the average number of draws is $\frac{1}{1-\frac{2^k}{2^d}}=\frac{2^d}{2^d-2^k}$. Denote the average total number of draws needed to construct a full rank matrix as $$\bar{L}(d)=\sum_{k=0}^{d-1}\frac{2^d}{2^d-2^k}.$$
It can be shown that  
$$\bar{L}(d)-d \leq 2$$ 
and
$$\lim_{d \rightarrow \infty} \bar{L}(d)-d = 1.606 \dots$$
\noindent This means that even if we choose rows from $\tilde{A}$ with replacement, our suggested algorithm skips up to $2$ rows on the average, before terminating with a full rank matrix $W$. Practically, it means that our greedy algorithm does not substantially deviate, on the average, from the lower bound presented in the previous section.

\section{Experiments}
Let us now conduct several experiments to demonstrate the performance of our suggested algorithm. In the first experiment we draw $n=10^6$ independent samples from a Zipf's law distribution with $s=1$ and varying alphabet sizes $m=2^d$: \begin{equation}\nonumber
P(k;s,q)=\frac{k^{-s}}{\sum_{m=1}^q m^{-s}}
\end{equation}
where $s$ is the skewness parameter. The Zipf's law distribution is a commonly used heavy-tailed distribution. This choice of distribution is further motivated in Section \ref{Application_to_Data_Compression}.
We evaluate the lower bound of (\ref{eq:sum_ent_min_binary}) over linear transformations, as discussed in Section \ref{linear_BICA_lowerbound}. We further apply our suggested linear algorithm (Section \ref{linear_BICA_algo}) and, in addition, apply the order permutation (Chapter \ref{Order_Permutation}). Figure \ref{fig:linear_BICA_vs_order_perm} demonstrates the results we achieve. We first notice that the difference between our suggested linear algorithm and the linear lower bound is fairly small, as expected. Moreover, we notice that the order permutation outperforms both methods quite significantly. This is simply since linear transformations are not very ``flexible" models as the dimension of the problem increases.  

\begin{figure}[h]
\centering
\includegraphics[width = 0.7\textwidth,bb= 40 180 550 590,clip]{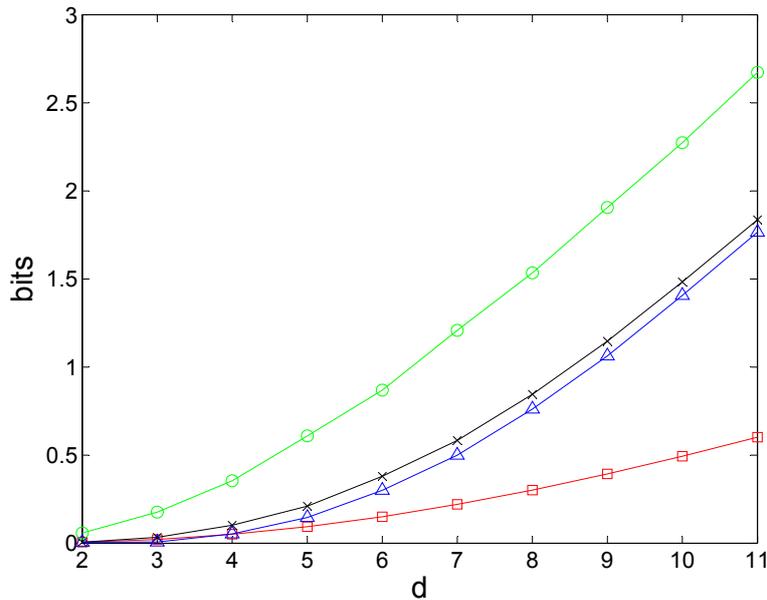}
\caption{Minimizing (\ref{eq:sum_ent_min_binary}) for independent draws from a Zipf distribution. Blue curve with the triangles: lower bound on linear transformation, black curve with the $X$'s: our suggested linear transformation, red curve with the squares: the order permutation, green curve with the circles: without applying any transformation}
\label{fig:linear_BICA_vs_order_perm}
\end{figure}

\noindent In addition, we would like to compare our suggested linear algorithm to the immune-inspired method \citep{silva2014cobica}. As before, we draw $n=10^6$ independent samples from a Zipf law distribution with $s=1$ and varying alphabet sizes $m=2^d$. Notice that this time we limit ourselves to a maximal dimension of $d=7$, as the cobICA algorithm \citep{silva2014cobica} fails to perform within a reasonable time frame for greater values of $d$ (more than several hours, using a standard personal computer). Figure \ref{fig:linear_BICA_vs_cobICA} demonstrates the results we achieve. It is easy to notice that our suggested algorithm outperforms the cobICA. Moreover, it takes significantly less time to execute (several seconds as opposed to almost an hour, for $d=7$). 

\begin{figure}[h]
\centering
\includegraphics[width = 0.7\textwidth,bb= 40 180 550 590,clip]{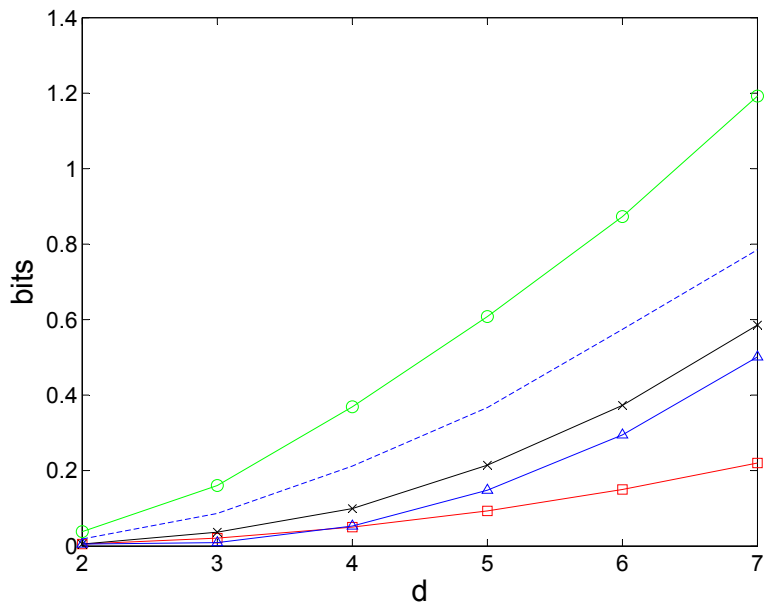}
\caption{Minimizing (\ref{eq:sum_ent_min_binary}) for independent draws from a Zipf distribution. The curves color and shapes correspond to the same methods as in Figure \ref{fig:linear_BICA_vs_order_perm}. In addition, the dashed blue curve is the cobICA}
\label{fig:linear_BICA_vs_cobICA}
\end{figure}

\noindent Lastly, we would like to empirically evaluate the expected value of  $C(\underline{p},g)$, when averaging uniformly over all possible $\underline{p}$ of an alphabet size $m=2^d$. In this experiment we go over all possible $\underline{p}$ for a given alphabet size $m=2^d$ and evaluate the lower bound of $C(\underline{p},g)$ over linear transformations, (Section \ref{linear_BICA_lowerbound}), our suggested linear algorithm (Section \ref{linear_BICA_algo}) and the order permutation (Chapter \ref{Order_Permutation}). Figure \ref{fig:linear_BICA_avg} demonstrates the results we achieve. As we can see, the linear lower bound converges to $0.6099$, which exactly equals to the value we derived analytically, for the case where no transformation is applied. This further justifies our claim that linear transformations are not powerful enough as minimizer for  (\ref{eq:sum_ent_min_binary}), when the dimension increases. We further notice that the order permutation converges to approximately $0.0162$, as expected  (see Chapter \ref{Order_Permutation}).

\begin{figure}[h]
\centering
\includegraphics[width = 0.7\textwidth,bb= 40 170 550 590,clip]{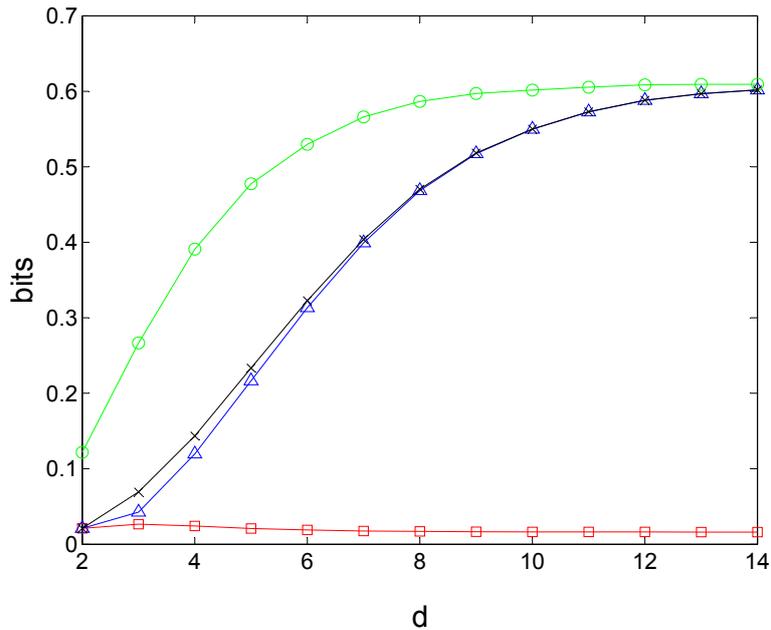}
\caption{Minimizing the expected value of  $C(\underline{p},g)$, when averaging uniformly over all possible $\underline{p}$ of an alphabet size $m=2^d$. The curves color and shapes correspond to the same methods as in Figure \ref{fig:linear_BICA_vs_order_perm}}
\label{fig:linear_BICA_avg}
\end{figure}

\section{Discussion}
Although the generalized ICA over finite fields problem was introduced quite a while ago, there is still a limited understanding on how well a random vector may be linearly decomposed  into independent components (as much as possible). In this chapter we proposed a novel lower bound for this problem, followed by a simple heuristic algorithm.Our suggested lower bound is not tight, in the sense that we cannot guarantee that there exists a linear transformation which achieves it. However, it provides an easy-to-evaluate benchmark on the best we can hope for. Moreover, our lower bound may be easily used to provide a feasible sub-optimal solution to the linear generalized ICA problem. This solution shows significantly outperform any currently known BICA methods, both in terms of accuracy and computational complexity, as demonstrated analytically and empirically.\\

\noindent Using the lower bound we developed, we showed that a linear transformation is not a favorable approach when the dimension of the problem increases. Specifically, we showed that the order permutation, presented in the previous chapter, incomparably outperforms any linear solution. Moreover, we show that on the average, applying a linear transformation is practically redundant, as it achieves the same results as if no transformation is applied. Clearly, this happens since the alphabet size increases exponentially with the number of components $m=2^d$, while the free parameters of the linear transformation increase only polynomially, $d^2$.

\newpage
\thispagestyle{empty}
\mbox{}
\thispagestyle{empty}

\chapter[Sequential Generalized ICA]{Sequential Generalized Independent Component Analysis}\label{Sequential_ICA}
\graphicspath{{Sequential_ICA_Figures//}}

\noindent The material in this Chapter is partly covered in \citep {painsky2013memoryless}.

\section{Introduction}
In this chapter we impose an additional constraint on the generalized ICA problem by limiting ourselves to sequential processing of the vector $\underline{X}$. Several methods have been suggested to sequentially construct an uncorrelated or independent process from a given stochastic process. The Gram-Schmidt procedure suggests a simple sequential method which projects every new component on the linear span of the components previously observed \citep{arfken1985gram}. The difference between the current component and its projection is guaranteed to be orthogonal to all previous components. Applied on a Gaussian process, orthogonality results statistical independence and the subsequent process is therefore considered \textit{memoryless}. Non-Gaussian processes on the other hand, do not hold this quality and a generalized form of sequentially generating a memoryless  process from any given time dependent series is therefore required. Several non-sequential methods such as Principal Components Analysis \citep{jolliffe2002principal} and Independent Component Analysis \citep{hyvarinen1998independent} have received a great deal of attention, but we are aware of a little previous work on sequential schemes for generating memoryless “innovation” processes. \\

\noindent The importance of innovation process representation spans a variety of fields. One example is dynamic system analysis in which complicated time dependent processes are approximated as independent processes triggering a dynamic system (human speech mechanism, for instance). Another major field for example is cryptography, where a memoryless language is easier to encrypt as it prevents an eavesdropper from learning the code by comparing its statistics with those of the serially correlated language.
Recently, \cite{shayevitz2011optimal} presented the Posterior Matching (PM) scheme for communication with feedback. It turns out that an essential part of their scheme is to produce statistical independence between every two consecutive transmissions. Inspired by this we suggest a general framework to sequentially construct memoryless processes from any given Markov process, for various types of desired distribution function, under different objective functions and constraints.\\

\section{Problem Formulation}
\noindent For the remaining sections of this chapter we use the following notation: we denote the input process at a time $k$ as $X_k$ while $X^k$ refers to the vector $\{X_i\}_{i=1}^k.$ We use the same notation for our outcome process $Y$. Therefore, for any process $X^k$ with a cumulative distribution function $F(X^k)$ we would like to sequentially construct $Y^k$ such that:
\begin{enumerate}
\item	$F(Y^k)=\prod_{j=i}^k F(Y_j)$
\item	$X^k$ can  be uniquely recovered from $Y^k$ for any $k$.
\end{enumerate}
\noindent Using the notation from previous chapters, we look for a sequential invertible transformation on the set of ``components" $\{X_j\}_{j=1}^k$, so that the resulting ``components" $\{Y_j\}_{j=1}^k$ are statistically independent.
\noindent We show that the two constraints can always be met if we allow the $Y_j$'s to take values on a continuous set and may need to be relaxed otherwise. The continuous case is discussed in the next section, followed by a comprehensive discussion on the discrete case in the remaining part of this chapter. 

\section{Generalized Gram-Schmidt}

\noindent Following the footsteps of the Posterior Matching scheme \citep{shayevitz2011optimal} we define a generalized Gram-Schmidt method for the continuous case.  

\begin{theorem}
\label{seq_theorem_1}
Let $X$ be any random variable $X \sim F_X (x)$  and $\theta \sim \text{Unif}[0,1]$ be statistically independent of it. In order to shape $X$ to a uniform distribution (and vice versa) the following applies:
\begin{enumerate}
\item	$F_X^{-1}(\theta) \sim F_X(x)$
\item	Assume $X$ is a non-atomic distribution ($F_X (x)$ is strictly increasing) then  $F_X(X)\sim \text{Unif}[0,1]$
\item	Assume $X$ is discrete or a mixture probability distribution then  $F_X(X)-\theta P_X(x) \sim \text{Unif}[0,1]$
\end{enumerate}
\end{theorem}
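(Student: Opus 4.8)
The plan is to recognize all three assertions as facets of the probability integral transform, built on a single common lemma. First I would fix notation: write $F_X(x^-)=\lim_{y\uparrow x}F_X(y)$, so that $P_X(x)=F_X(x)-F_X(x^-)$ is the mass of the atom at $x$ (zero at continuity points), and let $F_X^{-1}(u)=\inf\{x:F_X(x)\ge u\}$ be the generalized inverse. The lemma to establish first is the equivalence
\[
F_X^{-1}(u)\le x \iff u\le F_X(x),\qquad u\in(0,1),
\]
which follows from monotonicity of $F_X$ together with right-continuity (right-continuity is exactly what guarantees $F_X\bigl(F_X^{-1}(u)\bigr)\ge u$, i.e. that the defining infimum is ``attained'').

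Claim 1 then drops out immediately: for every $x$, $P\bigl(F_X^{-1}(\theta)\le x\bigr)=P\bigl(\theta\le F_X(x)\bigr)=F_X(x)$ since $\theta$ is uniform on $[0,1]$ and $F_X(x)\in[0,1]$; hence $F_X^{-1}(\theta)$ has cdf $F_X$. Claim 2 is the forward direction in the non-atomic case: when $F_X$ is continuous and strictly increasing, $F_X^{-1}$ is a genuine inverse, so $F_X(X)\le u\iff X\le F_X^{-1}(u)$ and $P\bigl(F_X(X)\le u\bigr)=F_X\bigl(F_X^{-1}(u)\bigr)=u$ for $u\in[0,1]$ --- continuity is what upgrades $F_X\bigl(F_X^{-1}(u)\bigr)\ge u$ to an equality, and strict monotonicity is used only to make $F_X^{-1}$ an honest inverse (the argument degrades gracefully to the generalized inverse otherwise).

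The real content is Claim 3, the randomized transform $U:=F_X(X)-\theta P_X(X)$ for discrete or mixed $X$. I would fix $u\in(0,1)$, set $x_0:=F_X^{-1}(u)$, and compute $P(U\le u)$ by conditioning on $X$ and splitting on the position of $X$ relative to $x_0$. On $\{X<x_0\}$ one has $F_X(X)<u$ and, since always $U\le F_X(X)$, the event $\{U\le u\}$ occurs; this contributes $P(X<x_0)=F_X(x_0^-)$. On $\{X>x_0\}$ one has $F_X(X^-)\ge u$ and, since always $U\ge F_X(X^-)$, the event $\{U\le u\}$ fails (save on a null set where $\theta$ hits a boundary). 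On $\{X=x_0\}$, $U=F_X(x_0^-)+(1-\theta)P_X(x_0)$ is conditionally uniform on $[F_X(x_0^-),F_X(x_0)]$, an interval of length $P_X(x_0)$ that contains $u$, so $P(U\le u\mid X=x_0)=(u-F_X(x_0^-))/P_X(x_0)$. Summing the three contributions,
\[
P(U\le u)=F_X(x_0^-)+P_X(x_0)\cdot\frac{u-F_X(x_0^-)}{P_X(x_0)}=u,
\]
with the continuity-point case $P_X(x_0)=0$ collapsing directly to $P(U\le u)=F_X(x_0^-)=u$; hence $U\sim\text{Unif}[0,1]$.

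The main obstacle is entirely the bookkeeping in Claim 3: deriving $F_X(X)<u$ on $\{X<x_0\}$ and $F_X(X^-)\ge u$ on $\{X>x_0\}$ from the definition of $x_0$ and the one-sided limits of $F_X$, verifying $u\in[F_X(x_0^-),F_X(x_0)]$ so that the conditional probability is a legitimate number in $[0,1]$, and discarding the measure-zero events where $\theta\in\{0,1\}$ makes $F_X(X)$ or $F_X(X^-)$ coincide with $u$ exactly. None of this is deep, but the case analysis must be airtight precisely because the statement must cover arbitrary, possibly mixed, $F_X$. I would also remark that $\theta$ and $1-\theta$ are equidistributed, so expressing $U$ as $F_X(X)-\theta P_X(X)$ or as $F_X(X^-)+\theta P_X(X)$ is immaterial, and that Claim 2 is the degenerate instance of Claim 3 in which every $P_X(x)$ vanishes.
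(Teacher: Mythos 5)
Your proof is correct: all three parts follow from the generalized-inverse equivalence $F_X^{-1}(u)\le x \iff u\le F_X(x)$ plus the randomized tie-breaking computation at the atom of $F_X^{-1}(u)$, which is exactly the standard probability-integral-transform argument. The thesis does not reprove this theorem --- it defers to Appendix 1 of Shayevitz and Feder (2011) --- and your argument is essentially the one given there, so there is nothing further to reconcile.
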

\noindent The proof of this theorem can be located in Appendix $1$ of \citep{shayevitz2011optimal}.

\noindent We define $\tilde{F}_X (x)$ as $\tilde{F}_X (x)=F_X (x)$ if $F_X (x)$ is strictly increasing and $\tilde{F}_X (x)=F_X (x)-\theta P_X (x)$ otherwise. For a desired $F_{Y_k}(y_k)$ we construct our process by setting:
\begin{equation}
Y_1=F_{Y_1}^{-1} \left(\tilde{F}_{X_1} (X_1)\right)
\end{equation}
\begin{equation}
Y_k=F_{Y_k}^{-1} \left(\tilde{F}_{X_k |X^{k-1}} (X_k |X^{k-1} )\right) \quad \forall k>1
\end{equation}
\noindent Theorem \ref{seq_theorem_1} guarantees that $\tilde{F}_{X_k |X^{k-1}} \left(X_k |X^{k-1} \right)$ is uniformly distributed and applying $F_{Y_k}^{-1}$ on it shapes it to the desired continuous distribution. In other words, this method suggests that for every possible history of the process at a time $k$, the transformation $\tilde{F}_{X_k |X^{k-1}} \left(X_k |X^{k-1}\right)$ shapes $X_k$ to the same (uniform) distribution. This ensures independence of its history. The method then reshapes it to the desired distribution. It is easy to see that $Y_k$ are statistically independent as every $Y_k$ is independent of $X^{k-1}$. Moreover, since $F(Y_k)$ is strictly increasing and $\tilde{F}_{X_1}(X_1)$ is uniformly distributed we can uniquely recover $X_1$ from $Y_1$ according to the construction of Theorem \ref{seq_theorem_1}. Simple induction steps show that this is correct for every $Y_k$ for  $k>1$. A detailed discussion on the uniqueness of this method is located in Appendix \ref{on_the_uniquness}.

\section{Lossy Transformation in the Discrete Case}
\noindent Let us now assume the both $X_j$ and $Y_j$ take values on finite alphabet size of $A$ and $B$ respectively (for every $j$). Even in the simplest case, where both are binary and $X$ is a first order non-symmetric Markov chain it is easy to see that no transformation can meet both of the constraints mentioned above. We therefore relax the second constraint by replacing the uniquely recoverable constraint with mutual information maximization of  $I\left(X_k;Y_k |X^{k-1} \right)$. This way, we make sure that the mutual information between the two processes is maximized at any time given its history. Notice that the case where $X_k$ is uniquely recoverable from $Y_k$ given its past, results in $ I\left(X_k;Y_k |X^{k-1} \right)$ achieving its maximum as desired.\\

\noindent This mutual information maximization problem is substantially different than the ICA framework presented in the previous chapters. Here, we insist on full statistical independence at the cost of lossy reconstruction, while in the previous chapters we focused on lossless reconstruction at the cost of  ``almost statistical independence". Our problem can be reformulated as follows:\\
For any realization of $X_k$, given any possible history the process $X^{k-1}$, find a set of mapping functions to a desired distribution $P(Y_k)$ such that the mutual information between the two processes is maximal. For example, in the binary case where $X_k$ is a first order Markov process, and $Y_k$ is i.i.d. Bernoulli distributed,
\begin{equation}
\label{markov_model}
Y_k \sim \text{Ber}(\beta), \quad P_{X_k}(X_k=0)=\gamma_k
\end{equation}
\begin{equation}
\nonumber
P_{X_k|X_{k-1}}\left(X_k=0|X_{k-1}=0\right)=\alpha_1
\end{equation}
\begin{equation}
\nonumber
P_{X_k|X_{k-1}}\left(X_k=0|X_{k-1}=1\right)=\alpha_2
\end{equation}
we would like to maximize
\begin{equation}
 I\left(X_k;Y_k | X^{k-1}\right)=\gamma_{k-1} I\left(X_k;Y_k | X_{k-1}=0\right)+(1-\gamma_{k-1}) I\left(X_k;Y_k | X_{k-1}=1\right)
\end{equation}
\noindent In addition, we would like to find the distribution of $Y_k$ such that this  mutual information is maximal. This distribution can be viewed as the closest approximation of the process $X$ as a memoryless process in terms of maximal mutual information with it. 
Notice that this problem is a concave minimization over a convex polytope shaped set \citep{kovacevic2012hardness} and the maximum is guaranteed on to lie on one of the polytope's vertices. Unfortunately, this is an NP hard problem and generally there is no closed form solution to it. Several approximations and exhaustive search solutions are available for this kind of problem, such as \citep{kuno2007simplicial}. There are, however, several simple cases in which such a closed form solution exists. One notable example is the binary case.

\subsection{The Binary Case}
\label{binary_case}
Let us begin by considering the following problem: given two binary random variables $X$ and $Y$ and their marginal distributions $P_X (X=0)=\alpha<\frac{1}{2}$ and $P_Y (Y=0)=\beta<\frac{1}{2}$ we would like to find the conditional distributions $P_{Y|X} (y|x)$ such that the mutual information between $X$ and $Y$ is maximal. Simple derivation shows that the maximal mutual information is: 
\noindent For $\beta>\alpha$:
\begin{equation}
\label{beta>alpha}
I_{\text{max}}^{\beta>\alpha}(X;Y)=h_b(\beta)-(1-\alpha)h_b\left(\frac{\beta-\alpha}{1-\alpha} \right)
\end{equation}

\noindent For $\beta<\alpha$:
\begin{equation}
\label{beta<alpha}
I_{\text{max}}^{\beta<\alpha}(X;Y)=h_b(\beta)-\alpha h_b\left(\frac{\beta}{\alpha} \right).
\end{equation}
Applying this result on the first order Markov process setup described above and assuming all parameters are smaller than $\frac{1}{2}$ , the maximal mutual information is simply:

\noindent For $\beta<\alpha_1<\alpha_2$:
\begin{equation}
 I\left(X_k;Y_k | X^{k-1}\right)=\gamma_{k-1} I_{\text{max}}^{\beta<\alpha_1}\left(X;Y\right)+(1-\gamma_{k-1}) I_{\text{max}}^{\beta<\alpha_2}\left(X;Y\right)
\end{equation}

\noindent For $\alpha_1 \leq \beta <\alpha_2$:
\begin{equation}
 I\left(X_k;Y_k | X^{k-1}\right)=\gamma_{k-1} I_{\text{max}}^{\beta>\alpha_1}\left(X;Y\right)+(1-\gamma_{k-1}) I_{\text{max}}^{\beta<\alpha_2}\left(X;Y\right)
\end{equation}

\noindent For $\alpha_1 <\alpha_2 \leq \beta $:
\begin{equation}
 I\left(X_k;Y_k | X^{k-1}\right)=\gamma_{k-1} I_{\text{max}}^{\beta>\alpha_1}\left(X;Y\right)+(1-\gamma_{k-1}) I_{\text{max}}^{\beta>\alpha_2}\left(X;Y\right)
\end{equation}

\noindent It is easy to see that $ I\left(X_k;Y_k | X^{k-1}\right)$ is continuous in $\beta$. Simple derivation shows that for $\beta<\alpha_1<\alpha_2$ the maximal mutual information is monotonically increasing in $\beta$ and for $\alpha_1 <\alpha_2 \leq \beta $ it is monotonically decreasing in $\beta$. It can also be verified that all optimum points in the range of  $\alpha_1 \leq \beta <\alpha_2$ are local minima which leads to the conclusion that the maximum must be on the bounds of the range, $\beta=\alpha_1$ or $\beta=\alpha_2$.  The details of this derivation is located in Appendix \ref{Appendix_mutual_inf_max_points}. Figure \ref{illustration} illustrates the shape of  $ I\left(X_k;Y_k | X^{k-1}\right)$ as a function of $\beta$, for $\alpha_1=0.15$, $\alpha_2=0.45$, for example.\\

\noindent  Since we are interested in the $\beta$ that maximizes the mutual information between the two possible options, we are left with a simple decision rule
\begin{equation}
\label{decision_rule}
\gamma_{k-1}\begin{array}{c} 
\beta=\alpha_2\\ \lessgtr \\\beta=\alpha_1\end{array} \frac{h_b(\alpha_2)-h_b(\alpha_1)+\alpha_2 h_b\left(\frac{\alpha_1}{\alpha_2}\right)}{\alpha_2 h_b \left(\frac{\alpha_1}{\alpha_2}\right)+(1-\alpha_1)h_b \left(\frac{\alpha_2-\alpha_1}{1-\alpha_1}\right)}
\end{equation}
\noindent which determines the conditions according to which we choose our $\beta$, depending on the parameters of the problem $\gamma_{k-1}, \alpha_1, \alpha_2$.\\

\begin{figure}[h]
\centering
\includegraphics[width = 0.8\textwidth,bb= 50 370 540 700,clip]{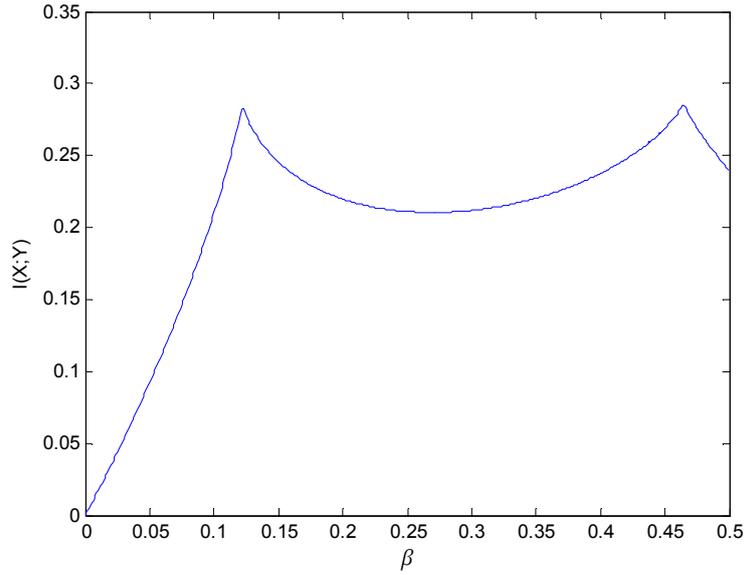}
\caption{The mutual information $ I\left(X_k;Y_k | X^{k-1}\right)$ as a function of $\beta$, for a first order Markov model (\ref{markov_model}), with $\alpha_1=0.15$, $\alpha_2=0.45$}
\label{illustration}
\end{figure}

\noindent Further, assuming the process $X$ is at its stationary state yields $\gamma=\frac{\alpha_2}{1-\alpha_1+\alpha_2}$. Applying this result to the decision rule above (\ref{decision_rule}), it is can be verified (Appendix \ref{Appendix_stationary_case_a_1_less_a_2}) that for $\alpha_1<\alpha_2<\frac{1}{2}$  we have:
\begin{equation}
\nonumber
\frac{\alpha_2}{1-\alpha_1+\alpha_2}< \frac{h_b(\alpha_2)-h_b(\alpha_1)+\alpha_2 h_b\left(\frac{\alpha_1}{\alpha_2}\right)}{\alpha_2 h_b \left(\frac{\alpha_1}{\alpha_2}\right)+(1-\alpha_1)h_b \left(\frac{\alpha_2-\alpha_1}{1-\alpha_1}\right)}
\end{equation}
which leads to the conclusion that $\beta_{opt}=\alpha_2$.\\

\noindent This derivation is easily generalized to all values of $\alpha_1$ and $\alpha_2$. This results in a decision rule stating that $\beta_{opt}$ equals the parameter closest to $\frac{1}{2}$:
\begin{equation}
\beta_{opt}=\argmax_{\theta \in \{\alpha_1,\alpha_2,1-\alpha_1,1-\alpha_1\}}\left(\frac{1}{2}-\theta\right).
\end{equation}
In other words, in order to best approximate a binary first order Markov process at its stationary state we set the distribution of the binary memoryless process to be similar to the conditional distribution which holds the largest entropy. \\

\noindent Expanding this result to an $r$-order Markov process we have $R=2^r$ Bernoulli distributions to be mapped to a single one ($P(Y_k)$). The maximization objective is therefore  
\begin{equation}
 I\left(X_k;Y_k | X^{k-1}\right)=\sum_{i=0}^{R-1} \gamma_i I\left( X_k;Y_k | \left[ X_{k-1}\, \dots \, K_{k-R-1}\right]^T=i\right)
\end{equation}
where $\gamma_i$ is the probability of the vector $\left[X_{k-1}\, \dots \, X_{k-R-1}\right]^T$ to be equal to its $i^{th}$ possible value, $\gamma_i=P\left(\left[X_{k-1}\,\dots \,X_{k-R-1}\right]^T=i\right)$. Notice that $I\left( X_k;Y_k | \left[ X_{k-1}\, \dots \, K_{k-R-1}\right]^T=i\right)$ is either $h_b(\beta)-\alpha_i h_b \left(\frac{\beta}{\alpha_i}\right)$ or $h_b(\beta)-(1-\alpha_i) h_b \left(\frac{\beta-\alpha_i}{1-\alpha_i}\right)$, depending on $\beta$ and $\alpha_i$,  as described above in (\ref{beta>alpha}),(\ref{beta<alpha}).\\

\noindent Simple calculus shows that as in the $R=2$ case, the mutual information $ I\left(X_k;Y_k | X^{k-1}\right)$ reaches its maximum on one of the inner bounds of $\beta$'s range
\begin{equation}
\beta_{opt}=\argmax_{\beta \in \{\alpha_i\}} \left(h_b(\beta)-\sum_{\beta<\alpha_i} \gamma_i \alpha_i h_b\left(\frac{\beta}{\alpha_i}\right)-\sum_{\beta>\alpha_i} \gamma_i (1-\alpha_i) h_b\left(\frac{\beta-\alpha_i}{1-\alpha_i}\right)\right)
\end{equation}
\noindent Here, however, it is not possible to conclude that $\beta$ equals the parameter closest to $\frac{1}{2}$. Simple counter example shows that it is necessary to search over all possible parameters, as a result of the nature of our concave minimization problem over a convex polytope.  

\section{Lossless Transformation in the Discrete Case}

The lossy approximation may not be adequate in applications where unique recovery of the original process is required. It is therefore necessary to increase the alphabet size of the output so that every marginal distribution of $X_k$, given any possible history of the process can be accommodated.  
This problem can be formulated as follows: 

\noindent Assume we are given a set of $R$ random variables, $\{X_i\}_{i=1}^R$, such that each random variable $X_i$ is multinomial distributed, taking on $A$ values,  $X_i \sim \text{multnom}\left(\alpha_{1i},\alpha_{2i},\dots,\alpha_{Ai}\right)$.  Notice that $A$ is the marginal alphabet size of the original process $X_k$, and $R$ corresponds to its Markov memory length $R=2^r$. Using the notation from previous sections, we have that $P(X_i)$ corresponds to $P(X_k | \left[ X_{k-1}\, \dots \, K_{k-R-1}\right]^T=i)$. In addition, we use the notation $x_{a;i}$ to define the $a^{th}$ value of the $i^{th}$  random variable $X_i$. We would like to find a distribution $Y \sim \text{multnom}\left(\beta_1,\beta_2,\dots,\beta_B\right)$  where the $\beta$'s and alphabet size $B \geq A$ are unknown. In addition, we are looking for $R$ sets of conditional probabilities between every possible realization $X_i=x_{a;i}$ and $Y$, such that $X_i=x_{a;i}$ can be uniquely recoverable from $Y=y_b$ for every $j,a$ and $b$. Further, we would like the entropy of $Y$ to be as small as possible so that our memoryless process is as ``cheap" as possible to describe. Notice that in terms of the generalized ICA framework, here we require both full statistical 
independence and unique recovery, at the cost of an increased objective (\ref{eq:sum_ent_min_binary}).

\noindent Without loss of generality we assume that $\alpha_{ai} \leq \alpha_{(a+1)i}$ for all $a \leq A$, since we can always denote them in such an order. We also order the sets according to the smallest parameter,  $\alpha_{1i} \leq \alpha_{1(j+i)}$. Notice we have $\alpha_{1i}\leq \frac{1}{2}$ for all $i=1,\dots, R$ , as an immediate consequence. \\

\noindent For example, for $A=2$ and $R=2$, it is easy to verify that $B \geq 3$ is a necessary condition for $X_i$ to be uniquely recoverable from $Y$. Simple calculus shows that the conditional probabilities which achieve the minimal entropy are $\beta_1=\alpha_1,\,  \beta_2=\alpha_2-\alpha_1$ and $\beta_3=1-\alpha_2$, as appears in Figure \ref{lossless_matching}.

\begin{figure}[h]
\centering
\includegraphics[width = 0.75\textwidth,bb= 50 520 540 720,clip]{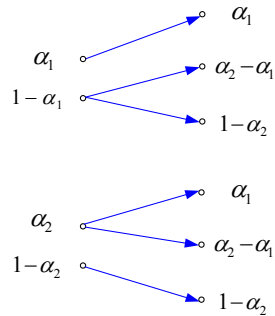}
\caption{Lossless representation of two binary sources with a single ternary source}
\label{lossless_matching}
\end{figure}

\subsection{Minimizing $B$}
Let us start with finding the minimal alphabet size of the output process $B$, such that the  $X$ is guaranteed to be uniquely recoverable from it.
Looking at the free parameters of our problem we first notice that defining the distribution of $Y$, takes exactly $B-1$ parameters. Then, defining $R$ conditional probability distributions between each alphabet size $A$ and the output process $Y$ takes $R(A-1)(B-1)$ parameters. In order for $X_i$'s to be uniquely recoverable from $Y$, each value of $Y$ needs to be at most assigned to a single value of $X_i$ (see Figure \ref{lossless_matching} for example). This means that for each of the $R$ sets, we have $B(A-1)$ constraints ($B$ possible realizations of $Y$ , each of them has $A-1$ zero conditional probability constraints). Therefore, in order to have more free parameters than constraints we require that:
\begin{equation}
(B-1)+R(A-1)(B-1) \geq RB(A-1).                          
\end{equation}
\noindent Rearranging this inequality leads to
\begin{equation}
B \geq R(A-1)+1.                    	
\end{equation}

\noindent For example, assuming the $X_i$'s are over a binary alphabet we get that $B\geq R+1$. There exist several special cases in which it is possible to go under this lower bound, like cases where some parameters are additions or subtraction of other parameters. For example, $\alpha_2=1-\alpha_1$ in the binary case. We focus however on solving the most general case. 

\subsection{The Optimization Problem}
\noindent The problem stated above can be formulated as the following optimization problem:
\begin{equation}
\min H(Y) \quad \text{s.t.} \quad H\left(X_i | Y=y_b\right)\leq0 \quad \forall i =\{1,\dots, R\}
\end{equation}
\noindent Unfortunately this is a concave minimization problem over a non-convex set. However, we show this problem can also be formulated as a mixed integer problem.

\subsection{Mixed Integer Problem Formulation}
\label{mixed_int}
In order to formulate our problem as a mixed integer problem we first notice the free parameters are all conditional probabilities, as they fully determine the outcome distribution. We use the notation $p_{iab}$ to describe the conditional probability $P(Y=y_b | X_i=x_{a;i})$.
\noindent Therefore, our bounds on the variables are $0 \leq p_{iab} \leq 1$ for all $i,\,a$ and $b$. The equality constraints we impose on our minimization objective are:

\begin{itemize}
\item		All $R$ conditional probability sets must result with the same output distribution:
\begin{align}
\nonumber
P(Y=y_b)=&\sum_{a=1}^A P\left(Y=y_b | X_i=x_{a;i}\right) P\left( X_i=x_{a;i}\right)=\sum_{a=1}^A p_{iab} \alpha_{ai}
\end{align}
\noindent for all $i=\{1,\dots,R\}$ and $b=\{1,\dots,B\}$. Since the parameters $\alpha_{1i},\dots,\alpha_{Ai}$ are assumed to be given we have that 
\begin{align}
\nonumber
\sum_{a=1}^A p_{iab}\alpha_{ai}-\sum_{a=1}^A p_{jab}\alpha_{aj}=0
\end{align}
\noindent for all $i,j=\{1,\dots,R\}$ and $b=\{1,\dots,B\}$ 

\item	$P(Y|X_i)$ is a valid conditional distribution function:
\begin{align}
\nonumber
\sum_{a=1}^A p_{iab}=1 
\end{align}
\noindent for all $i=\{1,\dots,R\}$,  $a=\{1,\dots,A\}$
 and  $b=\{1,\dots,B\}$
\item	$Y$ must be a valid probability function:
\begin{align}
\nonumber
\sum_{b=1}^B P(Y=y_b)=1.
\end{align}
In terms of $p_{iab}$: 
\begin{align}
\nonumber
\sum_{b=1}^B \sum_{a=1}^A p_{iab}\alpha_{ai}=1 
\end{align}
for all  $i=\{1,\dots,R\}$.
Notice that this constraint, together with all previous ones, follows that $P(Y)$ is also bounded by $0$ and $1$.
\end{itemize}

In addition, the inequality constraints are:
\begin{itemize}
\item	For convenience reasons we will ask that $P(Y=y_b ) \leq P(Y=y_{b+1} )$ for all $b=\{1,\dots,B\}$:
\begin{align}
\nonumber
\sum_{a=1}^A p_{iab}\alpha_{ai}-\sum_{a=1}^A p_{ia(b+1)}\alpha_{ai} \leq 0 \quad \text{for all} \quad 1 \leq b \leq B
\end{align}

\item Zero conditional entropy constraint:
As stated above, a necessary and sufficient condition for zero conditional entropy is that for every value $Y=y_b$, in every set $i=\{1,\dots,R\}$, there is only a single value $X_i=x_{a;i}$ such that $p_{iab}>0$.
Therefore, for each of the $R$ sets, and for each of the $B$ values $Y$ can take on, we define $A$ boolean variables, $T_{iab}$, that must satisfy:
\begin{equation}
\nonumber
p_{iab}-T_{iab} \leq 0
\end{equation}
\begin{equation}
\nonumber
\sum_{a=1}^A T_{iab} =1
\end{equation}
\begin{equation}
\nonumber
T_{iab} \in \{0,1\}
\end{equation}
Notice that the summation ensures only a single $T_{iab}$ equals one, for which $p_{iab}\leq1$. For each of the other $T_{iab}=0$ the inequality constraint verifies that $p_{iab}\leq0$.
This set of constraints can also be written using $A-1$ Boolean variables:
\begin{equation}
\nonumber
p_{iab}-T_{iab} \leq 0 \quad  \forall \,\, a=\{1,\dots,A\}
\end{equation}
\begin{equation}
\nonumber
p_{iAb}-\left(1-\sum_{a=1}^{A-1}T_{iab}\right) \leq 0 \quad \Leftrightarrow \quad p_{iAb}+\left(\sum_{a=1}^{A-1}T_{iab}\right) \leq 1 
\end{equation}
\begin{equation}
\nonumber
T_{iab} \in \{0,1\} \quad \forall \,\, a=\{1,\dots,A\}
\end{equation}

\end{itemize}

\noindent Therefore, our minimization problem can be written as follows:
Define a vector of parameters $z=\left[p_{iab} \quad T_{iab}\right]^T$. 
Define $A_{eq}$ and $b_{eq}$ as the equality constraints in a matrix and vector forms respectively.
Define $A_{ineq}$ and $b_{ineq}$ as the inequality constraints in a matrix and vector forms respectively. This leads to
\begin{equation}
\label{f(z)}
\min f(z) 
\end{equation}
\begin{equation}
\nonumber
\text{s.t.} \quad A_{eq}z=b_{eq}
\end{equation}
\begin{equation}
\nonumber
A_{ineq}z=b_{ineq}
\end{equation}
\begin{equation}
\nonumber
0\leq z \leq 1
\end{equation}
\begin{equation}
\nonumber
z(\text{boolean indicators}) \in \{0,1\}
\end{equation}
where f(z) is the entropy of the random variable $Y$ in terms of $p_{iab}$ and \textit{boolean indicators} define which elements in $z$ correspond to $T_{iab}$.

\subsection{Mixed Integer Problem Discussion}
Mixed integer problems are studied broadly in the computer science community. There are well established methodologies for convex minimization in a mixed integer problem and specifically in the linear case \citep{floudas1995nonlinear,tawarmalani2004global}. The study of non-convex optimization in mixed integer problem is also growing quite rapidly, though there is less software available yet. The most broadly used mixed integer optimization solver is the CPLEX\footnote{\url{http://www-01.ibm.com/software/commerce/optimization/cplex-optimizer/}} , developed by IBM. CPLEX provides a mixed integer linear programming (MILP) solution, based on a branch and bound oriented algorithm. We use the MILP in lower bounding our objective function (\ref{f(z)}) as described in the following sub-sections.

\subsection{An Exhaustive Solution}
As shown in Section \ref{mixed_int}, the problem we are dealing with is a hard one and therefore we present an exhaustive method which searches over all valid solutions to find the minimal entropy.
We notice that each of the given parameters $\alpha_{ai}$ can be expressed as a convex combination of the free parameters $\beta_b$ such that $A\underline{\beta}=\underline{\alpha}_i$, where $A$ represents the convex coefficients, $\underline{\beta}$ is a vector of the $\beta_b$'s and $\underline{\alpha}_i$ is a vector of the $\alpha_{ai}$'s. Additionally, it is easy to notice that the matrix $A$ must be a boolean matrix, to ensure the zero conditional entropy constraint stated above. Moreover, a necessary condition for the recovery of $\underline{\beta}$ from $A$ and $\underline{\alpha}_i$ is that $A$ is of full rank. However, this is not a sufficient condition since there is no guarantee that $\underline{\beta}$ is a valid probability distribution. This means we need to search over all boolean matrices $A$ of a full rank, and for each of these matrices check if the resulting $\underline{\beta}$ is a valid probability distribution. If so, we calculate its entropy and proceed. This process grows exponentially with $K$ (and $R$) but may be feasible for smaller values of these figures. 

\subsection{Greedy Solution}

The entropy minimization problem can also be viewed as an attempt to minimize the entropy of a random variable $Y \sim \text{multinom}\left(\beta_1,\beta_2,\dots,\beta_B \right)$ on a set of discrete points representing valid solutions to the problem we defined. 
Let us remember that $\beta_b\leq \beta_{b+1}$ for all $b=\{1,\dots,B\}$ as stated in the previous sections. 
\begin{proposition}
$\beta_B$ is not greater than $\min_i \left\{1-\sum_{a=1}^{A-1} \alpha_{ai}\right\}$
\end{proposition}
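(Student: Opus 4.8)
The claim is that the largest output probability $\beta_B$ cannot exceed $\min_i\bigl\{1-\sum_{a=1}^{A-1}\alpha_{ai}\bigr\}$, i.e.\ $\beta_B \le \min_i \alpha_{Ai}$ once we recall that $\sum_{a=1}^A \alpha_{ai}=1$ so that $1-\sum_{a=1}^{A-1}\alpha_{ai}=\alpha_{Ai}$, the \emph{largest} mass of the $i$-th source. The plan is to exploit the zero–conditional–entropy (unique recoverability) constraint directly: for each fixed $i$, every output value $y_b$ is fed by \emph{at most one} realization of $X_i$. In particular the value $y_B$ is received from exactly one value $x_{a^*;i}$ of $X_i$, so $P(Y=y_B)\le P(Y=y_B,\,X_i=x_{a^*;i})\le P(X_i=x_{a^*;i})=\alpha_{a^*i}\le \alpha_{Ai}$, where the last inequality uses the assumed ordering $\alpha_{1i}\le\cdots\le\alpha_{Ai}$.

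First I would make the uniqueness structure precise. By the zero conditional entropy constraint $H(X_i\mid Y=y_b)\le 0$, for each $i$ and each $b$ there is a unique index $a=a(i,b)$ with $p_{i\,a(i,b)\,b}>0$ and $p_{iab}=0$ for $a\ne a(i,b)$ (this is exactly the Boolean structure encoded by the $T_{iab}$ in Section~\ref{mixed_int}). Then write, for any fixed $i$,
\begin{equation}
\nonumber
\beta_B = P(Y=y_B) = \sum_{a=1}^{A} p_{iaB}\,\alpha_{ai} = p_{i\,a(i,B)\,B}\,\alpha_{a(i,B)\,i} \le \alpha_{a(i,B)\,i} \le \alpha_{Ai},
\end{equation}
using $p_{i\,a(i,B)\,B}\le 1$ and the ordering of the $\alpha_{ai}$.

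Since this bound holds for every $i\in\{1,\dots,R\}$, I would conclude $\beta_B \le \min_{i}\alpha_{Ai} = \min_i\bigl\{1-\sum_{a=1}^{A-1}\alpha_{ai}\bigr\}$, which is the claim. (One may also note $\beta_B=\max_b\beta_b$ by the convention $\beta_b\le\beta_{b+1}$, so the statement indeed bounds the maximal output probability.)

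I do not expect a genuine obstacle here; the only point requiring a little care is justifying that the unique-recoverability constraint forces the per-column Boolean structure $p_{iaB}\ne 0$ for exactly one $a$, rather than merely $H(X_i\mid Y=y_B)=0$ in some weaker averaged sense — but this is exactly the constraint as formalized via $T_{iab}$ earlier, so it can be cited directly. A secondary (purely bookkeeping) point is the identity $1-\sum_{a=1}^{A-1}\alpha_{ai}=\alpha_{Ai}$, which holds because each $X_i$ is a probability distribution; combined with the assumed sorting $\alpha_{1i}\le\cdots\le\alpha_{Ai}$ it shows the right-hand side is the largest atom of the "flattest-looking" source, matching intuition that $Y$ cannot concentrate more mass on any symbol than the source it must losslessly encode.
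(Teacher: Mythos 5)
Your proof is correct and is essentially the paper's argument: both rest on the zero conditional entropy constraint forcing $y_B$ to be fed by a single value of each $X_i$, so that $\beta_B=\sum_a p_{iaB}\alpha_{ai}\le \alpha_{Ai}=1-\sum_{a=1}^{A-1}\alpha_{ai}$ for every $i$. The only cosmetic difference is that you argue directly while the paper phrases the same observation as a contradiction (if $\beta_B$ exceeded the bound, two values $x_{u;i},x_{v;i}$ would need positive conditional probability, violating unique recoverability).
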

\begin{proof}
Assume $\beta_B>\min_i \left\{1-\sum_{a=1}^{A-1} \alpha_{ai}\right\}$. Then, for this $i$ there must be at least two values $x_{u;i}$ and $x_{v;i}$ for which $p_{iub}>0$ and $p_{ivb}>0$. This contradicts the zero conditional entropy constraint 
\end{proof}

\noindent Therefore, a greedy algorithm would  like to “squeeze” all the distribution to the values which are less constrained from above, so that it is as large as possible. We then suggest that in every step of the algorithm we set $\beta_B=\min_i \left\{1-\sum_{a=1}^{A-1} \alpha_{ai}\right\}$ which leaves us with a $B-1$ problem. Rearranging the remaining probabilities and repeating this maximization step, ensures that in each step we increase the least constrained  value of $\beta$ as much as possible. However, It is easy to notice that this solution is not optimal through simple counter examples. 

\subsection{Lowest Entropy Bound}
As discussed in the previous sections, we are dealing with an entropy minimization problem over a discrete set of valid solutions. Minimizing the entropy over this set of points can be viewed as a mixed integer non convex minimization, which is a hard problem. \\
\noindent However, we can find boundaries on each of the parameters  $\beta_b$ and see the lowest entropy we can hope for. This way, we relax the search over a set of valid solutions to a search in a continuous space, bounded by a polytope. 
We find the boundaries of each $\beta_b$ by changing our minimization objective to a simpler linear one (minimize/maximize $\beta_b$). This way we find a valid solution for which $\beta_b$ is at its bound. This problem is a simple MILP as shown above.  By looking at all these boundaries together and minimizing the entropy in this continuous space we can find a lower bound for the minimal entropy one can expect. We notice that this bound is not tight, and we even do not know how far it is from the valid minima, as it is not necessarily a valid solution. However, it gives us a benchmark to compare our greedy algorithm against and decide if we are satisfied with it or require more powerful tools. We also note that as we increase $B$, the number of valid solutions grows exponentially. This leads to a more packed set of solutions which tightens the suggested lower bound as we converge to a polytope over a continuous set. 

\section{Applications}
As mentioned before, the problem we are dealing with has a vast amount of applications in multiple fields as it deals with a very fundamental problem. Besides the memoryless representation problem which comes from the stochastic signal processing world, we can identify other applications from entirely different domains. One example is the following economic problem dealing with optimal design of a mass production storage units.\\

\noindent Consider the following problem: a major home appliances vendor is interested in mass manufacture of storage units. These units hold a single and predetermined design plan according to the market demand. Let us further assume that the customers market is defines by $R$ major storing types (customers) and each of these storing types is characterized by a different distribution of items it wishes to store. The vendor is therefore interested in designing a single storage unit that suits all of its customers. In addition, the vendor would like to storage unit to be as ``compact" and ``cheap" as possible. We denote this problem as \textit{The IKEA Problem}.

\subsection{The IKEA Problem}
We consider the $R$ storage distributions as $\{X_i\}_{i=1}^R$ such that each storing type $X_i$ is multinomial distributed with $A$ values, $X_i \sim \text{multnom}(\alpha_{1i},\alpha_{2i},\dots,\alpha_{Ai})$. We assume that all storage distributions have the same cardinality $A$. It is easy to generalize our solution to different cardinalities. 
As in previous sections, we use the notation $x_{a;i}$ to define the $a^{th}$ value of the $i^{th}$  random variable $X_i$. For our storing units problem, we would like to find a multinomial distribution over $B$ values ($B\geq A$ is unknown), $Y \sim \text{multnom}(\beta_1,\beta_2,\dots,\beta_B )$,  and $R$ sets of conditional probabilities between every $X_i=x_{a;i}$ and $Y$, such that $X_i=x_{a;i}$ can be uniquely recoverable (reversible) from $Y=y_b$ for every $i,\,a$ and $b$. This means every customer is able to store its items “exclusively”; different items will not need to be stored together.
In addition, we would like the storing unit to be ``compact" and ``cheap". For most functionalities, a compact storing unit is rectangular shaped (closets, cabins, dressers etc.) and it is made of multiple compartments (shelves) in numerous columns.  We define the number of columns in our storage unit as $L$ and the number of shelves as $N$. 
We would therefore like to design a rectangular shaped storing unit such that given a number of columns $L$, every costumer is able to store its items exclusively and the number of shelves is minimal.   
This problem is again NP hard for the same reasons as in the previous sections, but it can be reformulated to a set of Mixed Integer Quadratic Programming (MIQP) which is quite an established research area with extensive software available.

\subsection{Mixed Integer Quadratic Programming Formulation}
Let us first assume we are given both the number of columns in our desired storing unit $L$ and the number of shelves $N$. Since we require the storing unit to be rectangular, we need to find such distribution $Y$ that can be partitioned to $L$ columns with no residue. Therefore, we define $L$ equivalent partitions $\{\delta_l\}_{l=1}^L$   in the size of $\frac{1}{L}$ for which each $\{\beta_b\}_{b=1}^B$ is exclusively assigned. We are interested in such distribution $Y$ that the assignment can be done with no residue at all. 
To guarantee an exclusive assignment for a partition $\delta_l$ we introduce $T$ integer variables $\{T_{lb}\}_{b=1}^B$, indicating which of the $\{\beta_b\}_{b=1}^B$  is assigned to it. Therefore, we have

\begin{equation}
\sum_{b=1}^B T_{lb}\beta_b=\delta_l \quad \text{for all} \quad l=\{1,\dots,L\}
\end{equation}
\begin{equation}
\nonumber
\sum_{l=1}^L T_{lb}=1 \quad \text{for all}  \quad b=\{1,\dots,B\}
\end{equation}
\begin{equation}
\nonumber
T_{lb} \in \{0,1\}
\end{equation}
and the optimization objective is simply
\begin{equation}
\sum_{l=1}^L \left(\delta_l-\frac{1}{L}\right)^2 \rightarrow \text{min}
\end{equation}
Our constraints can easily be added to the mixed integer formulation presented in the previous sections and the new optimization problem is therefore: 
\begin{equation}
\min z^T cc^T z -\frac{2}{L} c^T z
\end{equation}
\begin{equation}
\nonumber
\text{s.t.} A_{eq} z = b_{eq}
\end{equation}
\begin{equation}
\nonumber
A_{ineq} z \leq b_{ineq}
\end{equation}
\begin{equation}
\nonumber
0 \leq  z \leq 1
\end{equation}
\begin{equation}
\nonumber
z(\text{boolean indicators}) \in \{0,1\} 
\end{equation}
where $z$ is a vector of all parameters in our problem  $z=\left[ p_{iab} \,\,  T_{lb}\right]^T$ and $c^T z=\delta$.

\subsection{Minimizing the Number of Shelves }
As demonstrated in the previous sections, the problem of minimizing the residue of the assignment given the number of columns and the number of shelves can be formulated as MIQP. In this section we focus on finding the minimal number of shelves $N$ that guarantees zero residue.  Notice that for large enough $N$ the residue goes to zero, as $Y$ tends to take values on a continuous set. We also notice that the residue is a monotonically non-increasing function of $N$, since by allowing a greater number of shelves we can always achieve the same residue by repeating the previous partitioning up to a meaningless split of one of the compartments. These two qualities allow very efficient search methods (gradient, binary etc.) to find the minimal $N$ for which the residue is ``$\epsilon$ close" to zero. 

\subsubsection{A Binary Search Based Algorithm}
The following simple binary search based algorithm for minimizing the number of shelves for a rectangular shaped storing unit is therefore suggested:

\begin{enumerate}
\item	Choose a large enough initial value $N$ such that applying it in the MIQP presented above results with zero residue.
\item	Define a step size as $Stp = \lfloor N/2 \rfloor$
\item	Apply the MIQP with $N'=N-Stp$
\item	If the residue is zero repeat previous step with $N=N'$ and $Stp=\lfloor Stp/2 \rfloor$. Otherwise repeat the previous step with $N=N'$ and $Stp=-\lfloor Stp/2 \rfloor$. Terminate if $Stp=0$.
\end{enumerate}

\section{Memoryless Representation and its Relation to the Optimal Transportation Problem}

As discussed in the previous sections, the essence of the our suggested problem formulation is finding a single marginal distribution function to be matched to multiple ones under varying costs functions. This problem can be viewed as a design generalization of a multi-marginal setup for the well-studied optimal transportation problem \citep{monge1781memoire}.  In other words, we suggest that the optimal transportation problem can be generalized to a design problem in which we are given not a single but multiple source probability measures. Moreover, we interested not only in finding mappings that minimizes some cost function, but also in finding the single target probability measure that minimizes that cost.

\subsection{The Optimal Transportation Problem}

The optimal transportation problem was presented by \cite{monge1781memoire} and has generated an important branch of mathematics in the last decades. The optimal transportation problem has many applications in multiple fields such as Economics, Physics, Engineering and others. The problem originally studied by Monge was the following: assume we are given a pile of sand (in $\mathbb{R}^3$) and a hole that we have to completely fill up with that sand. Clearly the pile and the hole must have the same volume and different ways of moving the sand will give different costs of the operation. Monge wanted to minimize the cost of this operation. Formally, the optimal transportation problem is defined as follows. Let $X$ and $Y$ be two seperable metric spaces such that any probability measure on $X$ (or $Y$) is a Radon measure. Let $c:X \times Y \rightarrow[0,\infty]$ be a Borel-measurable function. Given probability measure $\mu$ on $X$ and $\nu$ on $Y$, Monge's optimal transportation problem is to find a mapping $T:X \rightarrow Y$ that realizes the infimum 
$$ \inf \left\{ \int_X c(x,T(x))d\mu(x) \bigg| T_{*}(\mu)=\nu  \right\}$$
where $T_{*}(\mu)$ denotes the \textit{push forward} of $\mu$ by $T$. A map $T$ that attains the infimum is called the \textit{optimal transport map}. 

Notice that this formulation of the optimal transportation problem can be ill-posed as in some setups in which there is no ``one-to-one" transportation scheme. For example, consider the case where the original pile is a Dirac measure but hole is not shaped in this manner. A major advance on this problem is due to \cite{kantorovich1942translocation} who proposed the notation of a ``weak solution" to the optimal transportation problem; he suggested looking for plans instead of transport maps \citep{kantorovich2006problem}. The main difference between Kantorovich work and Monge formulation is that while the original Monge problem is restricted to transportation of the complete mass at each point on the original pile, the relaxed Kantorovich version allows splitting of masses.   Kantorovich argued that the problem of showing existence of optimal transport maps reduces to prove that an optimal transport plan in concentrated in a graph. It is however clear that no such result can be expected without additional assumptions on the measures and cost. The first existence and uniqueness result is due to \cite{brenier1987polar}. In his work, Brenier considers the case where both the pile $X$ and the hole $Y$ satisfy $X=Y \in R^n$, and the cost function is  $c(x,y)=|x-y|^2$. He then showed that if the probability measure of $X$ is absolutely continuous with respect to the Lebesgue measure there exists a unique optimal transport map. After this result many researchers started working on this problem, showing existence of optimal maps with more general costs both in the Euclidean setting (for example, \cite{ambrosio2003lecture,caffarelli2002constructing,evans1997partial,evans1999differential,evans2015measure, ambrosio2003existence,trudinger2001monge}).

\subsection{A Design Generalization of the Multi-marginal Optimal Transportation Problem}
Recently, Pass published a series of papers discussing a multi-marginal generalization of the optimal transportation problem \citep{pass2011uniqueness,pass2012local,pass2013class}. In his work, Pass considers multiple marginal distributions to be matched to a single destination with a given distribution. In his papers, Pass discusses the existence and uniqueness of solutions for both a Monge-like and Kantorovich-like multi-marginal problems, under different measures and cost functions and the connection between both formulations.\\

\noindent  In our work we generalize the multi-marginal optimal transportation from a design perspective; we look at the multi-marginal optimal transportation problem not only as a minimization problem over a set of mappings but also ask ourselves what is the optimal target measure such that the cost function is minimal. We show that this problem has very broad use in many fields, especially when taking an equivalent form of multiple source measures matched to a single target. More specifically, we focus our interest on a set of mappings that allow unique recovery between the measures. That is, given a source measure and a the target measure one can uniquely recover any realization of the sources from a given realization of the target. This type of mappings hold a special interest in many applications, as it is shown throughout this chapter.
%\subsection{Entropy Minimization on a Polytope Set}  
%The following proposition suggests an optimal method of entropy minimization over a polytope (thus convex) set. 
%\begin{proposition} Assume a random variable $Y$ with a multinomial distribution, $Y \sim \text{multinom}(\beta_1,\beta_2,\dots,\beta_B)$,  where $\beta_i$ are parameters bounded such that:
%\begin{enumerate}
%\item $a_i \leq \beta_i \leq b_i \quad \forall i$
%\item $\sum_i a_i \leq 1$ and $\sum_i b_i \geq 1$ (to ensure a feasible solution)
%\item $a_i \leq a_{i+1}$ and $b_i \leq b_{i+1} \quad \forall i$
%\item $a_i \leq b_i \quad \forall i$ 
%\end{enumerate}
%\noindent Then, the minimal entropy is achieved iff there exists $k>0$ such that
%\begin{enumerate}
%\item $\beta_i = b_i  \quad \forall i>k$
%\item  $\beta_i = a_i  \quad \forall i<k$
%\item  $\beta_k = 1-\sum_{i \neq k} \beta_i$
%\end{enumerate}
%\end{proposition}
%\noindent The proof of this proposition is located in the Appendix.

\section{Discussion}
In this chapter we presented a sequential non-linear method to generate a memoryless process from any process under different objectives and constraints. We show there exists a simple closed form solution if we allow the outcome process to take values on a continuous set. However, restricting the alphabet may cause lossy recovery of the original process. Two solutions are presented in the face of two possible objectives in the discrete case. First, assuming the alphabet size is too small to allow lossless recovery we aim to maximize the mutual information with the original process. The second objective focuses on finding a minimal alphabet size so that a unique recovery is guaranteed, while minimizing the entropy of the resulting process.  In both cases the problem is shown to be hard and several approaches are discussed. In addition, a simple closed-form solution is provided for a binary first order Markov process.  \\

\noindent The problem of finding a single marginal distribution function to be fitted to multiple ones under varying costs functions can be viewed as a multi-marginal generalization of the well-studied optimal transportation problem.  In other words, we suggest that the optimal transportation problem can be generalized to a design problem in which we are given not a single but multiple source distribution functions. We are then interested not only in finding conditional distributions to minimize a cost function, but also in finding the single target distribution that minimizes the cost. We conjecture that this problem has multiple applications in the fields of Economics, Engineering and others.

\newpage
\thispagestyle{empty}
\mbox{}
\thispagestyle{empty}

\chapter[ICA Application to Data Compression]{ICA Application to Data Compression}\label{Application_to_Data_Compression}
\graphicspath{{Application_to_Data_Compression_Figures//}}
\noindent The material in this Chapter is partly covered in \citep{painsky2015Universal,painsky2016largetrans, painsky2016Simple}.

\section{Introduction}
\noindent Large alphabet source coding is a basic and well--studied problem in data compression. It has many applications such as compression of natural language text, speech and images. The classic perception of most commonly used methods is that a source is best described over an alphabet which is at least as large as the observed large alphabet. Here, we challenge this approach and introduce a conceptual framework in which a large alphabet source is decomposed into ``as statistically independent as possible" components. This decomposition allows us to apply entropy encoding to each component separately, while benefiting from their reduced alphabet size.  We show that in many cases, such decomposition results in a sum of marginal entropies which is only slightly greater than the entropy of the source. \\

\noindent Assume a source over an alphabet size $m$, from which a sequence of $n$ independent samples are drawn. The classical source coding problem is concerned with finding a sample-to-codeword mapping, such that the average codeword length is minimal, and the codewords may be uniquely decodable. This problem was studied since the early days of information theory, and a variety of algorithms \citep{huffman1952method,witten1987arithmetic} and theoretical bounds \citep{cover2012elements} were introduced throughout the years.\\ 

\noindent The classical source coding problem usually assumes an alphabet size $m$ which is  small, compared with $n$. Here, in large alphabet compression, we focus on a more difficult (and common) scenario, where the source's alphabet size is considered ``large" (for example, a word-wise compression of natural language texts).
In this setup, $m$ takes values which are either comparable (or even larger) than the length of the sequence $n$. The main challenge in large alphabet source coding is that the redundancy of the code, formally defined as the excess number of bits used over the source's entropy, typically increases with the alphabet size \citep{davisson1973universal},  in any compression method where the source statistics is not precisely known in advance. \\

\noindent In this chapter we propose a conceptual framework for large alphabet source coding, in which we reduce the alphabet size by decomposing the source into multiple components which are ``as statistically independent as possible". This allows us to encode each of the components separately, while benefiting from the reduced redundancy of the smaller alphabet. To utilize this concept we introduce a framework based on the generalized ICA method (Section \ref{Generalized_BICA}). This framework efficiently searches for an invertible transformation  which minimizes the difference between the sum of marginal entropies (after the transformation is applied) and the joint entropy of the source. Hence, it minimizes the (attainable) lower bound on the average codeword length, when applying marginal entropy coding.\\

\noindent We demonstrate our method in a variety of large alphabet source coding setups. This includes even the classical lossless coding, where the probability distribution of the source is known both to the encoder and the decoder, universal lossless coding, in which the decoder is not familiar with the distribution of the source, and lossy coding in the form of vector quantization. We show that our approach outperforms currently known methods in all these setups, for a variety of typical sources.

\section{Previous Work}
\noindent In the classical lossless data compression framework, one usually assumes that both the encoder and the decoder are familiar with the probability distribution of the encoded source, $\underline{X}$. Therefore, encoding a sequence of $n$ memoryless samples drawn form this this source takes on average at least $n$ times its entropy $H\left(\underline{X}\right)$, for sufficiently large $n$ \citep{cover2012elements}. In other words, if $n$ is large enough to assume that the joint empirical entropy of the samples, $\hat{H}\left(\underline{X}\right)$, is close enough to the true joint entropy of the source, $H\left(\underline{X}\right)$, then $H\left(\underline{X}\right)$ is the minimal average number of bits required to encode a source symbol.
Moreover, it can be shown \citep{cover2012elements} that the minimum average codeword length,  $\bar{l}_{min}$,  for a uniquely decodable code, satisfies
\begin{equation}
\label{optimal_code}
H\left(\underline{X}\right) \leq \bar{l}_{min} \leq H\left(\underline{X}\right)+1.
\end{equation}

\noindent Entropy coding is a lossless data compression scheme that strives to achieve the lower bound, $\bar{l}_{min}=H\left(\underline{X}\right)$. Two of the most common entropy coding techniques are Huffman coding \citep{huffman1952method} and arithmetic coding \citep{witten1987arithmetic}.The Huffman algorithm is an iterative construction of variable-length code table for encoding the source symbols. The algorithm derives this table from the probability of occurrence of each source symbol. Assuming these probabilities are dyadic (i.e., $-\log \, p(\underline{x})$ is an integer for every symbol $\underline{x} \in \underline{X}$), then the Huffman algorithm achieves $\bar{l}_{min}=H\left(\underline{X}\right)$. However, in the case where the probabilities are not dyadic then the Huffman code does not achieve the lower-bound of (\ref{optimal_code}) and may result in an average codeword length of up to  $H\left(\underline{X}\right)+1$ bits.
Moreover, although the Huffman code is theoretically easy to construct (linear in the number of symbols, assuming they are sorted according to their probabilities) it is practically a challenge to implement when the number of symbols increases \citep{moffat1997implementation}.  Huffman codes achieve the minimum average codeword length among all uniquely decodable codes that assign a separate codeword to each symbol. However, if the probability of one of the symbols is close to $1$, a Huffman code with an average codeword length close to the entropy can only be constructed if a large number of symbols is jointly coded. The popular method of arithmetic coding is designed to overcome this problem. \\

\noindent In arithmetic coding, instead of using a sequence of bits to represent a symbol, we represent it by a subinterval of the unit interval \citep{witten1987arithmetic}. This means that the code for a sequence of symbols is an interval whose length decreases as we add more symbols to the sequence. This property allows us to have a coding scheme that is incremental. In other words, the code for an extension to a sequence can be calculated simply from the code for the original sequence. Moreover, the codeword lengths are not restricted to be integral. The arithmetic coding  procedure achieves an average length for the block that is within $2$ bits of the entropy. Although this is not necessarily optimal for any fixed block length (as we show for Huffman code), the procedure is incremental and can be used for any block-length. Moreover, it does not require the source probabilities to be dyadic.  However, arithmetic codes are more complicated to implement and are a less likely to practically achieve the entropy of the source as the number of symbols increases. More specifically, due to the well-known underflow and overflow problems, finite precision implementations of the traditional adaptive arithmetic coding cannot work if the size of the source exceeds a certain limit \citep{yang2000universal}. For example, the widely used arithmetic coder by \cite{witten1987arithmetic} cannot work when the alphabet size is greater than $2^{15}$. The improved version of arithmetic coder by \cite{moffat1998arithmetic} extends the alphabet to size $2^{30}$ by using low precision arithmetic, at the expense of compression performance.  \\

\noindent Notice that a large number of symbols not only results in difficulties in implementing entropy codes: as the alphabet size increases, we require a growing number of samples for the empirical entropy to converge to the true entropy. Therefore, when dealing with sources over large alphabets we usually turn to a universal compression framework. Here, we assume that the empirical probability distribution is not necessarily equal to the true distribution and henceforth unknown to the decoder. This means that a compressed representation of the samples now involves with two parts -- the compressed samples and an overhead redundancy (where the redundancy is defined as difference between the number of bits used to transmit a message and the entropy of the sequence). \\

\noindent As mentioned above, encoding a sequence of $n$ samples, drawn from a memoryless source $\underline{X}$, requires at least $n$ times the empirical entropy, $\hat{H}(\underline{X})$. Assuming that an optimal codebook is assigned for sequence, after it is known, $n\hat{H}(\underline{X})$ is also the codelength of the sequence. The redundancy, on the other hand, may be quantified in several ways. One common way of measuring the coding redundancy is through the minimax criterion \citep{davisson1973universal}. Here, the \textit{worst-case redundancy} is the lowest number of extra bits (over the empirical entropy) required in the worst case (that is, among all sequences) by any possible encoder.
Many worst-case redundancy results are known when the source's alphabet is finite. A succession of papers initiated by \cite{shtarkov1977coding} show that for the collection $\mathcal{I}_m^n$ of i.i.d. distributions over length-$n$ sequences drawn from an alphabet of a fixed size $m$, the worst-case redundancy behaves asymptotically as $\frac{m-1}{2}\log{\frac{n}{m}}$, as $n$ grows. \cite{orlitsky2004speaking} extended this result to cases where $m$ varies with $n$. The \textit{standard compression} scheme they introduce differentiates between three situations in which $m=o(n)$, $n=o(m)$ and $m=\Theta(n)$. They provide leading term asymptotics and bounds to the worst-case minimax redundancy for these ranges of the alphabet size.  \cite{szpankowski2012minimax} completed this study, providing the precise asymptotics to these ranges. For the purpose of our work we adopt the leading terms of their results, showing that the worst-case minimax redundancy, when $m \rightarrow \infty$, as $n$ grows, behaves as follows:
\begin{itemize} 
\item For $m=o(n)$: \hfill \makebox[0pt][r]{%
            \begin{minipage}[b]{\textwidth}
              \begin{equation}
		\label{m=o(n)}
                 \quad \quad \quad \quad  \quad \quad \hat{R}(\mathcal{I}_m^n) \backsimeq \frac{m-1}{2}\log{\frac{n}{m}}+\frac{m}{2}\log{e}+\frac{m\log{e}}{3}\sqrt{\frac{m}{n}}  
              \end{equation}
          \end{minipage}}
\item For $n=o(m)$: \hfill \vspace{5pt}  \makebox[0pt][r]{%
            \begin{minipage}[b]{\textwidth}
              \begin{equation}
		\label{n=o(m)}
                  \; \quad  \quad \quad  \hat{R}(\mathcal{I}_m^n)  \backsimeq n\log{\frac{m}{n}}+\frac{3}{2}\frac{n^2}{m}\log{e}-\frac{3}{2}\frac{n}{m}\log{e}
              \end{equation}
          \end{minipage}}
\item For $m=\alpha n+l(n)$: \hfill \makebox[0pt][r]{%
            \begin{minipage}[b]{\textwidth}
              \begin{equation}
		  \label{m=theta(n)}
               \; \; \;  \quad \quad \quad
 \hat{R}(\mathcal{I}_m^n)  \backsimeq n\log{B_\alpha}+l(n)\log{C_\alpha}-\log{\sqrt{A_\alpha}} 
              \end{equation}

          \end{minipage}}
\end{itemize}
\hspace{26pt} where $\alpha$ is a positive constant, $l(n)=o(n)$ and
\begin{equation}
{\displaystyle C_\alpha \triangleq \frac{1}{2}+\frac{1}{2}\sqrt{1+\frac{4}{\alpha}}\quad , \quad A_\alpha \triangleq C_\alpha+\frac{2}{\alpha} \quad , \quad  B_\alpha \triangleq \alpha C_\alpha^{\alpha+2} e^{-\frac{1}{C_\alpha}}}. \nonumber 
\end{equation}
\\
\noindent A very common method for dealing with unknown, or very large alphabet, sources is through adaptation \citep{cleary1984data}. Adaptive entropy coding reduces this overhead redundancy by finding a better trade-off between the average codeword length and the cost of transmitting the empirical distribution. Specifically, the samples are sequentially processed so that each sample is encoded according to the empirical distribution of its preceding samples (with some bias towards symbols which are yet to appear). As the samples are transmitted, both the encoder and the decoder gradually adapt their models so that the empirical distribution is less effected by a single sample and the average code length approaches the samples' entropy. \\

\noindent In is paper from 2004, \cite{orlitsky2004universal} presented a novel framework for universal compression of memoryless sources over unknown and possibly infinite alphabets.  
According to their framework, the description of any string, over any alphabet, can be viewed as consisting of two parts: the symbols appearing in the
string and the pattern that they form. For example, the string ``abracadabra" can be described by conveying the \textit{pattern} ``12314151231"
and the \textit{dictionary}

\begin{table}[!ht]
\centering
{
\begin{tabular}{|c|c|c|c|c|c|}
\hline
index &1 &2 &3 &4&5 \\
\hline
letter &a &b &r &c&d \\
\hline
\end{tabular}}

\end{table} 
\noindent Together, the pattern and dictionary specify that the string ``abracadabra" consists of the first letter to appear (a), followed by the second letter to appear (b), then by the third to appear (r), the first that appeared (a again), the fourth (c), etc.
Therefore, a compressed string involves with a compression of the pattern and its corresponding dictionary. Orlitsky et al. derived the bounds for pattern compression, showing that the redundancy of patterns compression under  i.i.d. distributions over potentially infinite alphabets is bounded by $\left( \frac{3}{2}\log{e} \right)n^{1/3}$. Therefore, assuming the alphabet size is $m$ and the number of uniquely observed symbols is $n_0$, the dictionary can be described in $n_0\log{m}$ bits, leading to an overall lower bound of $n_0\log{m}+n^{1/3}$ bits on the compression redundancy. \\

\noindent An additional (and very common) universal compression scheme is the canonical Huffman coding \citep{witten1999managing}.  A canonical Huffman code is a particular type of Huffman code with unique properties which allow it to be described in a very compact manner. The advantage of a canonical Huffman tree is that one can encode a codebook in fewer bits than a fully described tree. Since a canonical Huffman codebook can be stored especially efficiently, most compressors start by generating a non-canonical Huffman codebook, and then convert it to a canonical form before using it. In canonical Huffman coding the bit lengths of each symbol are the same as in the traditional Huffman code. However, each code word is replaced with new code words (of the same length), such that a subsequent symbol is assigned the next binary number in sequence. 
For example, assume a Huffman code for four symbols, A to D:
\begin{table}[!ht]
\centering
{
\begin{tabular}{|c|c|c|c|c|}
\hline
symbol &A &B&C&D \\
\hline
codeword &11 &0 &101 &100 \\
\hline
\end{tabular}}
\end{table} 
\noindent Applying canonical Huffman coding to it we have
\begin{table}[!ht]
\centering
{
\begin{tabular}{|c|c|c|c|c|}
\hline
symbol &B &A&C&D \\
\hline
codeword &0 &10 &110 &111 \\
\hline
\end{tabular}}
\end{table} 
\noindent This way we do not need to store the entire Huffman mapping but only a list of all symbols in increasing order by their bit-lengths and record the number of symbols for each bit-length. This allows a more compact representation of the code, hence, lower redundancy.\\

\noindent An additional class of data encoding methods which is referred to in this chapter is lossy compression. In the lossy compression setup one applies inexact approximations for representing the content that has been encoded. In this chapter we focus on vector quantization, in which a high-dimensional vector $\underline{X} \in \mathbb{R}^d$ is to be represented by a finite number of points. Vector quantization works by clustering the observed samples of the vector $\underline{X}$ into groups, where each group is represented by its centroid point, such as in $k$-means and other clustering algorithms. Then, the centroid points that represent the observed samples are compressed in a lossless manner. In the lossy compression setup, one is usually interested in minimizing the amount of bits which represent the data for a given a distortion measure (or equivalently, minimizing the distortion for a given compressed data size).  The rate-distortion function defines the lower bound on this objective. It is defined as  
\begin{equation}
R\left(D\right)=\min_{P(\underline{Y}|\underline{X})}I(\underline{X};\underline{Y})\,\, s.t. \,\, \mathbb{E} \left\{D(\underline{X},\underline{Y}) \right\} \leq D
\end{equation}
\noindent where $\underline{X}$ is the source, $\underline{Y}$ is recovered version of $\underline{X}$ and $D(\underline{X},\underline{Y})$ is some distortion measure between $\underline{X}$ and $\underline{Y}$. Notice that since the quantization is a deterministic mapping between $\underline{X}$ and $\underline{Y}$, we have that $I(\underline{X};\underline{Y})=H(\underline{Y})$, i.e., the entropy of the ``codebook".\\
 
\noindent The Entropy Constrained Vector Quantization (ECVQ) is an iterative method for clustering the observed samples from $\underline{X}$ into centroid points which are later represented by a minimal average codeword length. The ECVQ algorithm minimizes the Lagrangian   
\begin{equation}
\label{ECVQ_intro}
L =\mathbb{E}\left\{D(\underline{X},\underline{Y})\right\}+\lambda\mathbb{E}\left\{l(\underline{X})\right\}
\end{equation}
where $\lambda$ is the Lagrange multiplier and $\mathbb{E}\left(l(\underline{X})\right)$ is the average codeword length for each symbol in $\underline{X}$. The ECVQ algorithm performs an iterative local minimization method similar to the generalized  \cite{lloyd1982least} algorithm. This means that for a given clustering of samples it constructs an entropy code to minimize the average codeword lengths of the centroids. Then, for a given coding of centroids it clusters the observed samples such that the average distortion is minimized, biased by the length of the codeword. This process continues until a local convergence occurs. The ECVQ algorithm performs local optimization (as a variant of the $k$-means algorithm) which is also not very scalable for an increasing number of samples. This means that in the presence of a large number of samples, or when the alphabet size of the samples is large enough, the clustering phase of the ECVQ becomes impractical. Therefore, in these cases, one usually uses a predefined lattice quantizer and only constructs a corresponding codebook for its centroids.\\

\noindent It is quite evident that large alphabet sources entails a variety of difficulties in all the compression setups mentioned above: it is more complicated to construct an entropy code for, it results in a great redundancy when universally encoded and it is much more challenging to design a vector quantizer for. In the following sections we introduce a framework which is intended to address these drawbacks.

\section{Large Alphabet Source Coding}
\label{classic source coding}
Assume a classic compression setup in which both the encoder and the decoder are familiar with the joint probability distribution of the source $\underline{X} \sim \underline{p}$, and the number of observations $n$ is sufficiently large in the sense that $\hat{H}(\underline{X})\approx H(\underline{X})$. As discussed above, both Huffman and arithmetic coding entail a quite involved implementation as the alphabet size increases. In addition, the Huffman code guarantees a redundancy of at most a single bit for every alphabet size, depending on the (non-)dyadic structure of $p$. On the other hand, arithmetic coding does not require a dyadic $p$, but only guarantees a redundancy of up to two bits, and is practically limited for smaller alphabet size  \citep{cover2012elements,yang2000universal}. In other words, both Huffman and arithmetic coding may result in an average codeword length which is a bit or two greater than $H(\underline{X})$. Notice that these extra bits per symbol may be a substantial redundancy, as this extra code-length is compared with the entropy of the source (which is at most $\log(m)$).\\

\noindent To overcome these drawbacks, we suggest a simple solution in which we first apply an invertible transformation to make the components of $\underline{X}$ ``as statistically independent as possible", following entropy encoding of each of its components separately. This scheme results in a redundancy which we previously defined as $C(\underline{p},g)=\sum_{j=1}^m H(Y_j) -H(\underline{X})$. However, it allows us to apply a Huffman or arithmetic encoding on each of the components separately; hence, over a binary alphabet. Moreover, notice we can group several components, $Y_j$, into blocks so that the joint entropy of the block is necessarily lower than the sum of marginal entropies. Notice that in this chapter we refer to blocks as a set of components (as opposed to a set of words, as in Section \ref{Block-wise Order Permutation}). Specifically, denote $b$ as the number of components in each block and $B$ as the number of blocks. Then, $b \times B =d$ and for each block $v =1, \dots, B$ we have that
\begin{equation}
\label{block inequality}
H(\underline{Y}^{(v)}) \leq \sum_{u=1}^b H(Y_u^{(v)})
\end{equation}
where $H(\underline{Y}^{(v)})$ is the entropy of the block $v$ and $H(Y_u^{(v)})$ is the marginal binary entropy of the $u^{th}$ component of the block $v$. Summing over all $B$ blocks we have 
\begin{equation}
\label{total inequality}
\sum_{v=1}^B H(\underline{Y}^{(v)}) \leq \sum_{v=1}^B\sum_{u=1}^b H_b(Y_u^{(v)})=\sum_{j=1}^d H(Y_j).
\end{equation}
This means we can always apply our suggested invertible transformation which minimizes 
$\sum_{j=1}^d H(Y_j)$,  and then the group components into $B$ blocks and encode each block separately. This results in  $\sum_{v=1}^B H(\underline{Y}^{(v)}) \leq \sum_{j=1}^d H(Y_j)$. By doing so, we increase the alphabet size of each block (to a point which is still not problematic to implement with Huffman or arithmetic coding) while at the same time we decrease the redundancy. We discuss different considerations in choosing the number of blocks $B$ in the following sections.\\

\noindent A more direct approach of minimizing the sum of block entropies $\sum_{v=1}^B H(\underline{Y}^{(v)})$ is to refer to each block as a symbol over a greater alphabet size, $2^b$. This allows us to seek an invertible transformation which minimizes the sum of marginal entropies, where each marginal entropy corresponds to a marginal probability distribution over an alphabet size $2^b$. This minimization problem is discussed in detail in Section \ref{GICA over FF}. However,  notice that both the Piece-wise Linear Relaxation algorithm (Section \ref{The Relaxed Generalized BICA}), and the solutions discussed in Section \ref{GICA over FF}, require an extensive computational effort in finding a minimizer for (\ref{eq:sum_ent_min_binary}) as the alphabet size increases. Therefore, we suggest applying the greedy order permutation as $m$ grows. This solution may result in quite a large redundancy for a several joint probability distributions $\underline{p}$ (as shown in Section \ref{worst case}). However, as we uniformly average over all possible $p$'s, the redundancy is bounded with a small constant as the alphabet size increases (Section  \ref{average case}). Moreover, the order permutation simply requires ordering the values of  $\underline{p}$, which is significantly faster than constructing a Huffman dictionary or arithmetic encoder.\\

\noindent To illustrate our suggested scheme, consider a source $\underline{X} \sim \underline{p}$ over an alphabet size $m$, which follows the Zipf's law distribution, 
\begin{equation}\nonumber
P(k;s,m)=\frac{k^{-s}}{\sum_{l=1}^m l^{-s}}
\end{equation}
where $m$ is the alphabet size and $s$ is the skewness parameter. The Zipf's law distribution is a commonly used heavy-tailed distribution, mostly in modeling of natural (real-world) quantities. It is widely used in physical and social sciences, linguistics, economics and many other fields. \\  

\noindent We would like to design an entropy code for $\underline{X}$ with $m=2^{16}$ and different values of $s$. We first apply a standard Huffman code as an example of a common entropy coding scheme. We further apply our suggested order permutation scheme (Chapter \ref{Order_Permutation}), in which we sort $\underline{p}$ in a descending order, followed by arithmetic encoding to each of the components separately.  We further group these components into two separate blocks (as discussed above) and apply an arithmetic encoder on each of the blocks.   
We repeat this experiment for a range of parameter values $s$. Figure \ref{fig:classic_compression} demonstrates the results we achieve.

\begin{figure}[ht]
\centering
\includegraphics[width = 0.8\textwidth,bb= 60 235 540 545,clip]{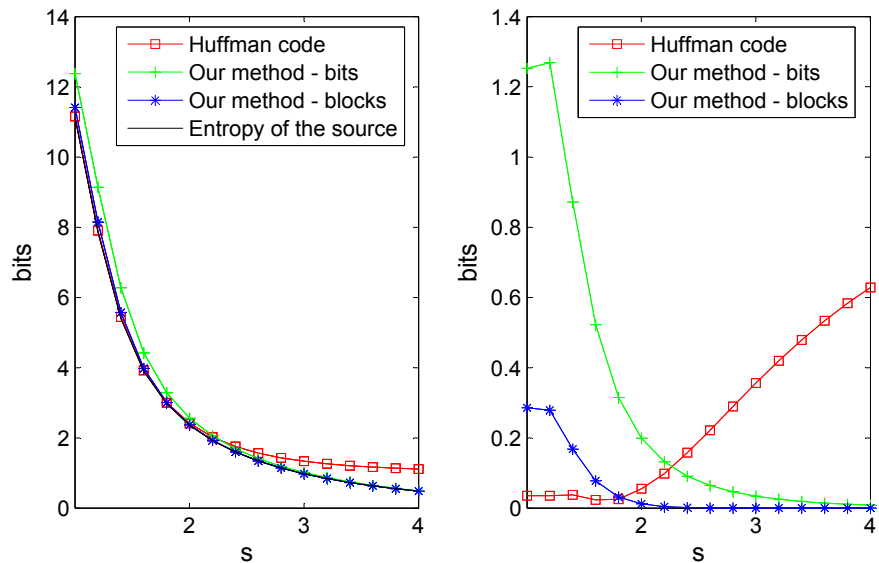}
\caption{Zipf's law simulation results. Left: the curve with the squares is the average codeword length using a Huffman code, the curve with the crosses corresponds to the average codeword length using our suggested methods when encoding each component separately, and the curve with the asterisks is our suggested method when encoding each of the two blocks separately. The black curve (which tightly lower-bounds all the curves) is the entropy of the source. Right: The difference between each encoding method and the entropy of the source}
\label{fig:classic_compression}
\end{figure}

\noindent Our results show that the Huffman code attains an average codeword length which is very close to the entropy of the source for lower values of $s$. However, as $s$ increases and the distribution of the source becomes more skewed, the Huffman code diverges from the entropy. On the other hand, our suggested method succeeds in attaining an average codeword length which is very close to the entropy of the source for every $s$, especially as $s$ increases, and when independently encoding each of the blocks.

\section{Universal Source Coding}
\label{universal source coding}
The classical source coding problem is typically concerned with a source whose alphabet size is much smaller than the length of the sequence. In this case one usually assumes that $\hat{H}(\underline{X}) \approx H(\underline{X})$.   
However, in many real world applications such an assumption is not valid. A paradigmatic example is the word-wise compression of natural language texts.
In this setup we draw a memoryless sequence of words, so that the alphabet size is often comparable to or even larger than the length of the source sequence. On a practical note, it is important to mention that a sequence of words can hardly be considered memoryless in most natural languages. However, the memoryless assumption becomes more valid as we increase the alphabet that we consider (for example, assuming that each symbol corresponds to an $n$-tuple of words or even a complete sentence). Therefore, for the simplicity of the presentation, we simply use words to illustrate our suggested technique. \\

\noindent As discussed above, the main challenge in large alphabet source coding is the redundancy of the code, which is formally defined as the excess number of bits used over the source's entropy. The redundancy may be quantified as the expected number of extra bits required to code a memoryless sequence drawn from $\underline{X} \sim \underline{p}$, when using a code that was constructed for $\underline{p}$, rather than using the ``true" code, optimized for the empirical distribution $\hat{\underline{p}}$.  
Another way to quantify these extra bits is to directly design a code for $\hat{\underline{p}}$, and transmit the encoded sequence together with this code. \\

\noindent Here again, we claim that in some cases, applying a transformation which decomposes the observed sequence into multiple ``as independent as possible" components results in a better compression rate. However, notice that now we also need to consider the number of bits required to describe the transformation. In other words, our redundancy involves not only with the cost described in (\ref{eq:min_criterion}), and the designated code for the observed sequence, but also with the cost of describing the invertible transformation to the receiver. This means that even the simple order permutation (Section \ref{Order_Permutation}) requires at most $n\log{m}$ bits to describe, where $m$ is the alphabet size and $n$ is the length of the sequence. This redundancy alone is not competitive with \cite{szpankowski2012minimax} worst-case redundancy results, described in (\ref{n=o(m)}).Therefore, we require a different approach which minimizes the sum of marginal entropies (\ref{eq:sum_ent_min_binary}) but at the same time is simpler to describe.\\

\noindent One possible solution is to seek for invertible, yet linear, transformations. This means that describing the transformation would now only require $\log^2{m}$ bits. However, the generalized linear BICA problem is also quite involved and preforms poorly as the dimension increases (see Section \ref{BICA_Vs_Linear}) .Therefore, we would like to modify our suggested combinatorial approach (Section \ref{Generalized_BICA}) so that the transformation we achieve requires fewer bits to describe.\\

\noindent As in the previous section, we argue that in some setups it is better to split the components of the data into blocks, with $b$ components in each block, and encode the blocks separately. Notice that we may set the value of $b$ so that the blocks are no longer considered as over a large alphabet size $(n \gg 2^b)$. This way, the redundancy of encoding each block separately is again negligible, at the cost of longer averaged codeword length. For simplicity of notation we define the number of blocks as $B$, and assume $B=\sfrac{d}{b}$ is a natural number. Therefore, encoding the $d$ components all together takes  $n\cdot \hat{H}(\underline{X})$ bits for the data itself, plus a redundancy term according to (\ref{m=o(n)}) and (\ref{n=o(m)}), while the block-wise compression takes about
\begin{equation}
\label{compression_blocks}
n\cdot \sum_{v=1}^{B}{ \hat{H}({\underline{X}}^{(v)})}+B\frac{2^b-1}{2}\log{\frac{n}{2^b}}
\end{equation}
bits, where the first term is $n$ times the sum of $B$ empirical block entropies and the second term is $B$ times the redundancy of each block when $n=o(2^b)$.
Two subsequent questions arise from this setup:
\begin{enumerate}
\item	What is the optimal value of $b$ that minimizes (\ref{compression_blocks})?
\item	Given a fixed value of $b$, how can we re-arrange $d$ components into $B$ blocks so that the averaged codeword length (which is bounded from below by the empirical entropy), together with the redundancy, is as small as possible?
\end{enumerate}
Let us start by fixing $b$ and focusing on the second question. \\

\noindent A naive shuffling approach is to exhaustively search for all possible combinations of clustering $d$ components into $B$ blocks. Assuming $d$ is quite large, an exhaustive search is practically infeasible. Moreover, the shuffling search space is quite limited and results in a very large value of (\ref{eq:min_criterion}), as shown below. Therefore, a different method is required.
We suggest applying our generalized BICA tool as an upper-bound search method for efficiently searching for a minimal possible averaged codeword length.
As in previous sections we define $\underline{Y}=g(\underline{X})$, where $g$ is some invertible transformation of $\underline{X}$.
Every block of the vector $\underline{Y}$ satisfies (\ref{block inequality}), where the entropy terms are now replaced with empirical entropies.  In the same manner as in Section \ref{classic source coding}, summing over all $B$ blocks results in  (\ref{total inequality}) where again, the entropy terms are replaced with empirical entropies.
This means that the sum of the empirical block entropies is  bounded from above by the empirical marginal entropies of the components of $\underline{Y}$ (with equality iff the components are independently distributed).
\begin{equation}
\label{universal total inequality}
\sum_{v=1}^B \hat{H}(\underline{Y}^{(v)}) \leq \sum_{j=1}^d \hat{H}(Y_j).
\end{equation}

\noindent Our suggested scheme \citep{painsky2015Universal} works as follows: 
We first randomly partition the $d$ components into $B$ blocks. We estimate the joint probability of each block and apply the combinatorial generalized BICA (Section \ref{The Relaxed Generalized BICA}) on it. The sum of empirical marginal entropies (of each block) is an upper bound on the empirical entropy of each block, as described in the previous paragraph. Now, let us randomly shuffle the $d$ components of the vector $\underline{Y}$. By ``shuffle" we refer to an exchange of positions of the components of $\underline{Y}$. Notice that by doing so, the sum of empirical marginal entropies of the entire vector $\sum_{i=1}^{d}{\hat{H}(Y_i)}$ is maintained. We now apply the generalized BICA on each of the (new) blocks. This way we minimize (or at least do not increase) the sum of empirical marginal entropies of the (new) blocks.  This obviously results in a lower sum of empirical marginal entropies of the entire vector $\underline{Y}$. It also means that we minimize the left hand side of (\ref{universal total inequality}), which upper bounds the sum of empirical block entropies, as the inequality in (\ref{universal total inequality}) suggests. In other words, we show that in each iteration we decrease (at least do not increase) an upper bound on our objective. We terminate once a maximal number of iterations is reached or we can no longer decrease the sum of empirical marginal entropies.Therefore, assuming we terminate at iteration $I_0$, encoding the data takes about 
\begin{align}
\label{compression_total}
   n\cdot \sum_{v=1}^{B}{\hat{H}^{[I_0]}({\underline{Y}}^{(v)})}+B\frac{2^b-1}{2}\log{\frac{n}{2^b}}+ I_0B\cdot &b2^b+I_0d\log{d}
\end{align}
bits, where the first term refers to the sum of empirical block entropies at the $I_0$ iteration, the third term refers to the representation of $I_0 \cdot B$ invertible transformation of each block during the process until $I_0$, and the fourth term refers to the bit permutations at the beginning of each iteration. Hence, to minimize (\ref{compression_total}) we need to find the optimal trade-off between a low value of $\sum_{v=1}^{B}{\hat{H}^{[I_0]}({\underline{Y}}^{(v})}$ and a low iteration number $I_0$. 
We may apply this technique with different values of $b$ to find the best compression scheme over all block sizes.
 
\subsection{Synthetic Experiments}

In order to demonstrate our suggested method we first generate a dataset according to the Zipf law distribution which was previously described.  We draw $n=10^6$ realizations from this distribution with an alphabet size $m=2^{20}$ and a parameter value $s=1.2$. We encounter $n_0=80,071$ unique words and attain an empirical entropy of $8.38$ bits (while the true entropy is $8.65$ bits). Therefore, compressing the drawn realizations in its given $2^{20}$ alphabet size takes a total of about $10^{6}\times 8.38+1.22\times10^6=9.6 \cdot 10^6$ bits, according to (\ref{m=theta(n)}). 
Using the patterns method \cite{orlitsky2004universal}, the redundancy we achieve is the redundancy of the pattern plus the size of the dictionary. Hence, the compressed size of the data set according to this method is lower bounded by $10^{6}\times8.38+80,071\times20+100=9.982\cdot10^6$ bits.
In addition to these asymptotic schemes we would also like to compare our method with a common practical approach. For this purpose we apply the canonical version of the Huffman code. Through the canonical Huffman code we are able to achieve a compression rate of $9.17$ bits per symbol, leading to a total compression size of about $1.21\cdot10^7$ bits.   \\

\noindent Let us now apply a block-wise compression. 
We first demonstrate the behavior of our suggested approach with four blocks $(B=4)$ as appears in Figure \ref{fig:zipf}.
To have a good starting point, we initiate our algorithm with a the naive shuffling search method (described above). This way we apply our optimization process on the best representation a random bit shuffling could attain (with a negligible $d\log{d}$ redundancy cost).  As we can see in Figure \ref{fig:zipf}.B, we  minimize (\ref{compression_total}) over $I_0=64$ and $\sum_{v=1}^{B}{\hat{H}({\underline{Y}}^{(v)})}=9.09$  to achieve a total of $9.144 \cdot 10^6$ bits for the entire dataset.\\

\noindent Table \ref{table:zipf_results} summarizes the results we achieve for different block sizes $B$. We see that the lowest compression size is achieved over $B=2$, i.e. two blocks. The reason is that for a fixed $n$, the redundancy is approximately exponential in the size of the block $b$. This means the redundancy drops exponentially with the number of blocks while the minimum of $\sum_{v=1}^{B}{\hat{H}({\underline{Y}}^{(v)})}$ keeps increasing. In other words, in this example we earn a great redundancy reduction when moving to a two-block representation while not losing too much in terms of the average code-word length we can achieve. We further notice that the optimal iterations number grows with the number of blocks. This results from the cost of describing the optimal transformation for each block, at each iteration, $I_0B\cdot b2^b$, which exponentially increase with the block size $b$. Comparing our results with the three methods described above we are able to reduce the total compression size in $8\cdot10^5$ bits, compared to the minimum among all our competitors. 
  
\begin{figure}[!ht]
\centering
\includegraphics[width = 0.8\textwidth,bb= 35 123 775 490,clip]{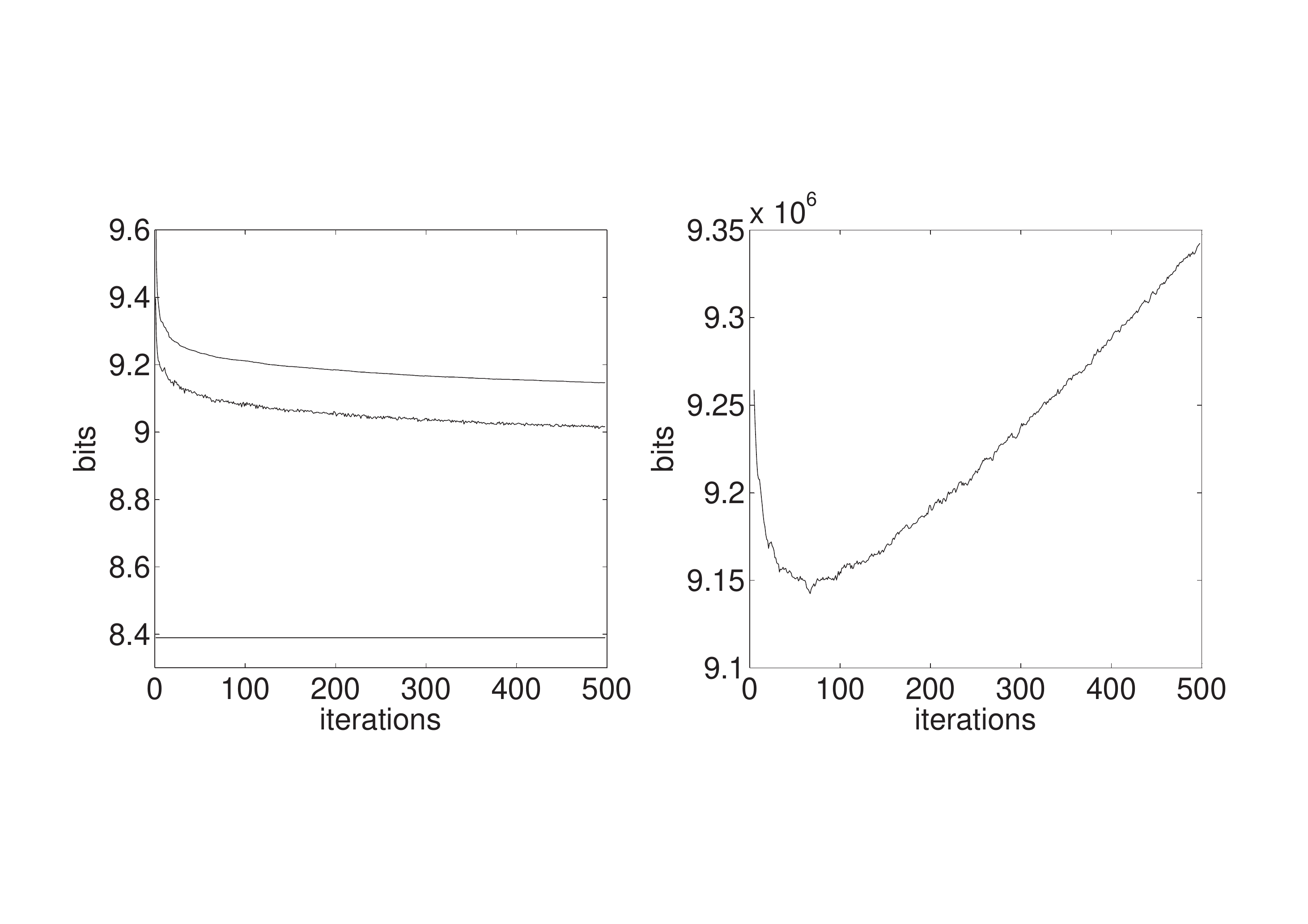}
\caption{Large Alphabet Source Coding via Generalized BICA with $B=4$ blocks. Left side (A): the horizontal line indicated the empirical entropy of $\underline{X}$. The upper curve is the sum of marginal empirical entropies and the lower curve is the sum of empirical block entropies (the outcome of our suggested framework). Right side (B): total compression size of our suggested method at each iteration.}
\label{fig:zipf}
\end{figure}

\begin{table}[!ht]
\caption{Block-Wise Compression via Generalized BICA Method for different block sizes}
\renewcommand{\baselinestretch}{1}\footnotesize
\label{table:zipf_results} 
\centering
\begin{tabular}{|M{1.5cm}|M{2.5cm}|M{1.5cm}|M{2cm}|M{1.8cm}|M{2.5cm}|N}  

\hline

\begin{tabular}{@{}c@{}}Number of \\ Blocks\end{tabular}
&\begin{tabular}{@{}c@{}} Minimum of \\ $\sum_{v=1}^{B}{\hat{H}({\underline{Y}}^{(v)})}$\end{tabular}
&Optimal $I_0$
&\begin{tabular}{@{}c@{}} Compressed \\  Data Size\end{tabular}
&Redundancy
&\begin{tabular}{@{}c@{}} Total Compression\\ Size\end{tabular}&\\[15pt]

\hline
$2$   &$8.69$ & $5$ & $8.69\cdot 10^6$ & $1.15\cdot 10^5$  & $\bold{8.805\cdot10^6}$ & \\[10pt]  
\hline
$3$   & $8.93$ & $19$ & $8.93\cdot 10^6$ & $5.55\cdot 10^4$  & $8.985\cdot10^6$ & \\[10pt]  
\hline
$4$   & $9.09$ & $64$ & $9.09\cdot 10^6$ & $5.41\cdot 10^4$  & $9.144\cdot10^6$ & \\[10pt] 
\hline
\end{tabular}
\end{table}

\subsection{Real-world Experiments}

We now turn to demonstrate our compression framework on real world data sets.
For this purpose we use collections of word frequencies of different natural languages. These word frequency lists are publicly available\footnote{\url{http://en.wiktionary.org/wiki/Wiktionary:Frequency_lists}} and describe the frequency each word appears in a language, based on hundreds of millions of words, collected from open source subtitles\footnote{\url{www.opensubtitles.org}} or based on different dictionaries and glossaries \citep{new2004lexique}.
Since each word frequency list holds several hundreds of thousands of different words, we choose a binary $d=20$ bit representation. We sample $10^7$ words from each language and examine our suggested framework, compared with the compression schemes mentioned above. The results we achieve are summarized in Table \ref{table:languages}. Notice the last column provides the percentage of the redundancy we save, which is essentially the most we can hope for (as we cannot go lower than $n\cdot \hat{H}(\underline{X})$ bits). As in the previous experiment, our suggested algorithm achieves the lowest compression size applied with two blocks after approximately $I_0=10$ iterations, from the same reasons mentioned above. 
Compared to the other methods, our suggested framework shows to achieve significantly lower compression sizes for all languages, saving an average of over one million bits per language.

\begin{table}[ht]
\caption{Natural Languages Experiment. For each compression method (D), (O) and (T) stand for the compressed data, the overhead and the total compression size (in bits) respectively. The We Save column is the amount of bits saved by our method, and its corresponding percentage of (O) and (T). $n_0$ is the number of unique words observed in each language, of the $10^7$ sampled words. Notice the Chinese corpus refers to characters.\newline}

\renewcommand{\baselinestretch}{1}\footnotesize
\label{table:languages} 
\centering
\begin{tabular}{|M{1.3cm}|M{2.13cm}|M{2.13cm}|M{2.13cm}|M{2.13cm}|M{1.8cm}|N}  

\hline

\begin{tabular}{@{}c@{}} Language \\ $(n_0)$ \end{tabular}
&\begin{tabular}{@{}c@{}} Standard \\ Compression \end{tabular}
&\begin{tabular}{@{}c@{}} Patterns \\ Compression \end{tabular}
&\begin{tabular}{@{}c@{}} Canonical \\  Huffman \end{tabular}
&\begin{tabular}{@{}c@{}} Our Suggested \\  Method \end{tabular}
&We Save&\\[20pt]

\hline

\begin{tabular}{@{}c@{}} English \\ $(129,834)$ \end{tabular}   
&\begin{tabular}{@{}c@{}} (D) $9.709\cdot10^{7}$ \\ (O) $2.624 \cdot10^{6}$\\ (T)  $9.971\cdot10^{7}$ \end{tabular} 
&\begin{tabular}{@{}c@{}} (D) $9.709\cdot10^{7}$ \\ (O) $2.597 \cdot10^{6}$ \\ (T)  $9.968\cdot10^{7}$ \end{tabular} 
&\begin{tabular}{@{}c@{}} (D) $9.737\cdot10^{7}$ \\ (O)  $5.294 \cdot10^{6}$ \\ (T)  $1.027 \cdot10^{8}$ \end{tabular} 
&\begin{tabular}{@{}c@{}} (D) $9.820 \cdot10^{7}$ \\ (O) $2.207 \cdot10^{5}$ \\ (T) $\bold{9.842 \cdot10^{7}}$ \end{tabular}  
&\begin{tabular}{@{}c@{}} $1.262 \cdot10^{6}$ \\  (O) $48.6\%$  \\ (T) $1.27\%$   \end{tabular}    & \\[32pt]

\hline

\begin{tabular}{@{}c@{}} Chinese \\ $(87,777)$ \end{tabular}     
&\begin{tabular}{@{}c@{}} (D)  $1.020\cdot10^{8}$\\ (O)  $2.624\cdot10^{6}$ \\ (T)   $1.046\cdot10^{8}$ \end{tabular} 
&\begin{tabular}{@{}c@{}} (D)  $1.020\cdot10^{8}$ \\ (O) $1.696\cdot10^{6}$ \\ (T)  $1.037\cdot10^{8}$ \end{tabular} 
&\begin{tabular}{@{}c@{}} (D) $1.023\cdot10^{8}$ \\ (O) $3.428\cdot10^{6}$ \\ (T) $1.057\cdot10^{8}$ \end{tabular} 
&\begin{tabular}{@{}c@{}} (D) $1.028\cdot10^{8}$ \\ (O) $2.001\cdot10^{5}$ \\ (T)  $\bold{1.030\cdot10^{8}}$ \end{tabular}   
&\begin{tabular}{@{}c@{}} $6.566 \cdot10^{5}$ \\  (O) $38.7\%$ \\ (T) $0.63\%$\end{tabular}    & \\[32pt]   
\hline

\begin{tabular}{@{}c@{}} Spanish \\ $(185,866)$ \end{tabular}     
&\begin{tabular}{@{}c@{}} (D)  $1.053\cdot10^{8}$\\ (O)  $2.624\cdot10^{6}$ \\ (T)   $1.079\cdot10^{8}$ \end{tabular} 
&\begin{tabular}{@{}c@{}} (D)  $1.053\cdot10^{8}$ \\ (O) $3.718\cdot10^{6}$ \\ (T)  $1.090\cdot10^{8}$ \end{tabular} 
&\begin{tabular}{@{}c@{}} (D)  $1.055\cdot10^{8}$ \\ (O) $7.700\cdot10^{6}$ \\ (T)  $1.132\cdot10^{8}$  \end{tabular} 
&\begin{tabular}{@{}c@{}} (D) $1.067\cdot10^{8}$ \\ (O) $2.207\cdot10^{5}$ \\ (T)  $\bold{1.069\cdot10^{8}}$ \end{tabular}   
&\begin{tabular}{@{}c@{}} $9.631 \cdot10^{5}$ \\  (O) $36.7\%$ \\ (T) $0.89\%$\end{tabular}    & \\[32pt]   
\hline

\begin{tabular}{@{}c@{}} French \\ $(139,674)$ \end{tabular}     
&\begin{tabular}{@{}c@{}} (D)  $1.009\cdot10^{8}$\\ (O)  $2.624\cdot10^{6}$ \\ (T)   $1.035\cdot10^{8}$ \end{tabular} 
&\begin{tabular}{@{}c@{}} (D)  $1.009\cdot10^{8}$ \\ (O) $2.794\cdot10^{6}$ \\ (T)  $1.036\cdot10^{8}$ \end{tabular} 
&\begin{tabular}{@{}c@{}} (D)  $1.011\cdot10^{8}$ \\ (O)  $5.745\cdot10^{6}$  \\ (T)  $1.069\cdot10^{8}$ \end{tabular} 
&\begin{tabular}{@{}c@{}} (D) $1.017\cdot10^{8}$ \\ (O) $2.207\cdot10^{5}$ \\ (T)  $\bold{1.019\cdot10^{8}}$ \end{tabular}   
&\begin{tabular}{@{}c@{}} $1.557 \cdot10^{6}$ \\  (O) $59.3\%$\\ (T) $1.50\%$ \end{tabular}    & \\[32pt]   
\hline

\begin{tabular}{@{}c@{}} Hebrew \\ $(250,917)$ \end{tabular}     
&\begin{tabular}{@{}c@{}} (D)  $1.173\cdot10^{8}$\\ (O)  $2.624\cdot10^{6}$ \\ (T)   $1.200\cdot10^{8}$ \end{tabular} 
&\begin{tabular}{@{}c@{}} (D)  $1.173\cdot10^{8}$ \\ (O) $5.019\cdot10^{6}$ \\ (T)  $1.224\cdot10^{8}$ \end{tabular} 
&\begin{tabular}{@{}c@{}} (D)  $1.176\cdot10^{8}$ \\ (O)  $1.054\cdot10^{7}$  \\ (T)  $1.281\cdot10^{8}$ \end{tabular} 
&\begin{tabular}{@{}c@{}} (D) $1.190\cdot10^{8}$ \\ (O) $1.796\cdot10^{5}$ \\ (T)  $\bold{1.192\cdot10^{8}}$ \end{tabular}   
&\begin{tabular}{@{}c@{}} $7.837 \cdot10^{5}$ \\  (O) $29.9\%$ \\ (T) $0.65\%$  \end{tabular}    & \\[32pt]   
\hline

\end{tabular}
\end{table}
\vspace{1em}

\section{Adaptive Entropy Coding}

As mentioned in previous sections, the major bottleneck in our suggested scheme is describing the permutation we applied to the receiver. In this section we suggest three additional strategies to tackle this problem, as presented in \citep{painsky2016largetrans}.\\

\noindent One possible solution is to transmit the permutation, which results in an additional redundancy of $n_0d$, where $n_0$ is the number of unique symbols that were sampled. As this redundancy may be too costly, we may consider a fixed sub-optimal order permutation according to the source's distribution. Assume both the encoder and the decoder know that the samples are drawn from a family of heavy-tailed distribution (for example, Zipf law with unknown parameters). Then, we may use a fixed order permutation, based on the expected order among the appearance of the symbols, followed by adaptive entropy coding for each of the components \citep{cleary1984data}. For example, assume we are to encode independent draws from an English dictionary. We know that the word ``The" is more frequent than the word ``Dictionary", even without knowing their exact probability of occurrence. This way we may apply a fixed order permutation, based only on the order of the frequency of appearance of the symbols. We denote this method as \textit {fixed permutation marginal encoding}. Notice that if we are lucky enough to have the empirical distribution ordered in the same manner as the source's distribution, this fixed permutation is optimal (identical to an order permutation of  the empirical distribution). However, notice that the order permutation does not necessarily minimize (\ref{eq:sum_ent_min_binary}). Therefore, the joint entropy of several components may be lower than the sum of these components' marginal entropies. This means we may apply the order permutation, followed by separating the resulting components into two groups (blocks, as previously described) and apply adaptive entropy coding on each of these blocks. This method is denoted as  \textit {fixed permutation block encoding}.\\

\noindent An additional approach for conveying the order permutation to the decoder is based on adaptation. Here, the samples are sequentially processed so that each sample is transformed according to an order permutation, based on the empirical distribution of the preceding samples. This way, the decoder receives each encoded sample, applies an inverse transformation and updates both the empirical distribution and the required inverse transformation for the next sample. We refer to an adaptive order permutation, followed by marginal adaptive entropy coding, as \textit {adaptive permutation marginal encoding}. In addition we have \textit {adaptive permutation block encoding}, in the same manner as above.\\

\subsection{Experiments}  
To illustrate our suggested methods, consider a source $\underline{X} \sim \underline{p}$ over an alphabet size $m$, which follows the Zipf's law distribution, as described throughout this chapter. Figure \ref{fig:adaptive_coding_1} presents the results we achieve, applying our suggested methods to independent samples of a Zipf distribution with $s=1$ and different alphabet sizes $m=2^d$. We compare our methods with traditional adaptive entropy coding techniques. 

\begin{figure}[!ht]
\centering
\includegraphics[width = 1.25\textwidth,bb= 0 180 790 625,clip]{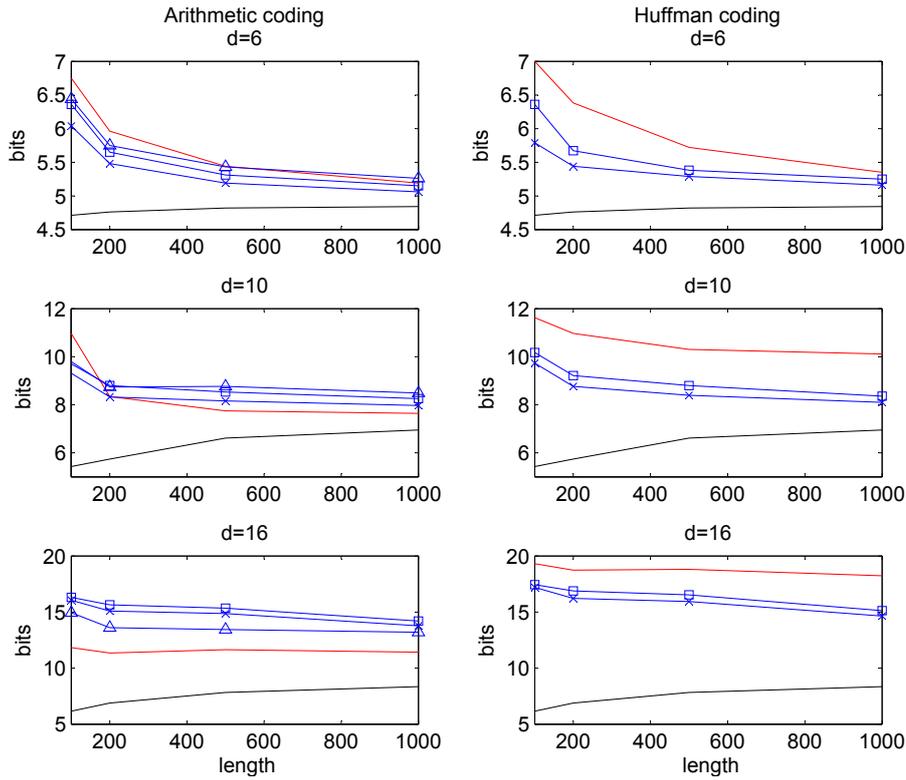}
\caption{Adaptive entropy coding of independent draws from a Zipf distribution with $s=1$. The charts on the left correspond to arithmetic coding with different alphabet sizes ($d=6,10$ and $16$) and different sequence lengths (horizontal axis of each chart). The charts on the right are Huffman coding. In each chart the black curve at the bottom is the empirical entropy, the red curve is the adaptive entropy coding, the curve with the triangles is \textit{adaptive permutation marginal encoding}, the curve with the squares is \textit{adaptive permutation block encoding} and the curve with the X's is  \textit{fixed permutation block encoding} }
\label{fig:adaptive_coding_1}
\end{figure}
\noindent As we can see, our suggested methods outperform the adaptive arithmetic coding for small sequence lengths of an alphabet size $d=6$ (upper chart of the left). As the length of the sequence increases, the alphabet is no longer considered ``large" (comparable or even smaller than the length of the sequence), and the gap between the schemes closes. As we increase the alphabet size (middle and bottom charts on the left) we notice our methods becomes less competitive. The reason is that the cost of transmitting the permutation (whether adaptively or as a fixed (and inaccurate) transformation) becomes too costly. As we examine our results with Huffman coding (charts on the right) we notice our suggested methods outperform the adaptive Huffman scheme for all alphabet sizes and sequence lengths. The reason is that the Huffman coding scheme performs quite poorly as the alphabet size increases. 
Comparing our three methods with each other we notice that the  \textit{fixed permutation block encoding} tends to perform better than the others. Notice that  the \textit{adaptive permutation marginal encoding} with a Huffman code results in a high code rate and it is therefore omitted from the charts. The reason is that the order permutation results in marginal probabilities which tend to have low entropies (degenerate component probabilities). This kind of components are specifically problematic for the Huffman coding scheme, as previously discussed above.    \\

\noindent Another aspect of our suggest approach is its low computational complexity, which results in a faster compression runtime. Figure \ref{fig:adaptive_coding_2} demonstrates  the total run-time of the adaptive arithmetic coding scheme, compared with the \textit{fixed permutation block encoding}, in the experiments above. Here we use a standard Matlab implementation of an adaptive arithmetic coder.  There exists a large body of work regarding more efficient and faster implementations (for example, \citep{fenwick1994new}). However, the emphasis here is to demonstrate that the order permutation is very simple and quick to apply, as it is simply a sorting algorithm followed by a small alphabet entropy coding.

\begin{figure}[!ht]
\centering
\includegraphics[width = 1.3\textwidth,bb= 50 290 730 505,clip]{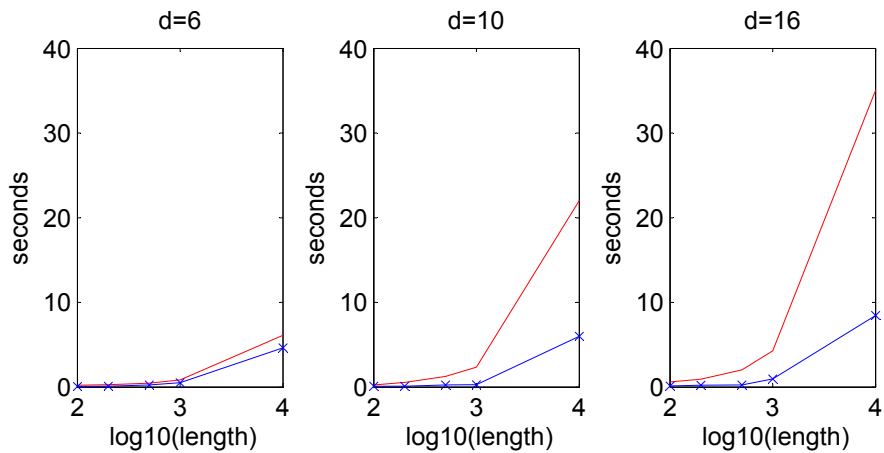}
\caption{Runtime of adaptive arithmetic coding and our suggested \textit{fixed permutation block encoding} scheme, in the experiments on the left charts in Figure \ref{fig:adaptive_coding_1}}
\label{fig:adaptive_coding_2}
\end{figure}
\noindent As mentioned above, the arithmetic coding scheme encodes an entire sequence of samples altogether. This results in a delay in decoding the samples, implementation synchronization issues  and a high sensitivity to errors in the coded message. To overcome these drawbacks we suggest a sliding window approach. This means that the encoder shall sequentially encode non-overlapping sub-sequences of length $l$ at each iteration, based of the empirical distribution of all preceding samples encoded in previous iterations. As before, we compare this scheme with adaptive order permutation based techniques. Here, at each iteration we apply an order permutation on a sub-sequence of length $l$, based on the empirical distribution of all preceding samples. Then we apply arithmetic encoding to each of the components/blocks of the transformed sub-sequences. For the purpose of this experiment we use independent draws from an English dictionary\footnote{\url{http://en.wiktionary.org/wiki/Wiktionary:Frequency_lists}}. Since the English dictionary holds almost a million different words, we choose a binary representation of $d=20$ bits. Figure \ref{fig:sequential_coding} summarizes the results we achieve for different sub-sequence lengths $l$. We first notice that the order-permutation based methods outperform the arithmetic coding as long as the alphabet size is considered large ($350 \times 10,000$ samples). This happens since the arithmetic coder exhibits a ``large alphabet" setup in each iteration ($d=20$, $l=100/1000/10,000$), even if it already learned the true distribution of the source. On the other hand, our \textit{adaptive permutation marginal encoding} method, for example, allows a ``small alphabet" compression ($d=2$) of each component at the small cost of $C(\underline{Y})$.

\begin{figure}[!ht]
\centering
\includegraphics[width = 0.95\textwidth,bb= 20 155 800 460,clip]{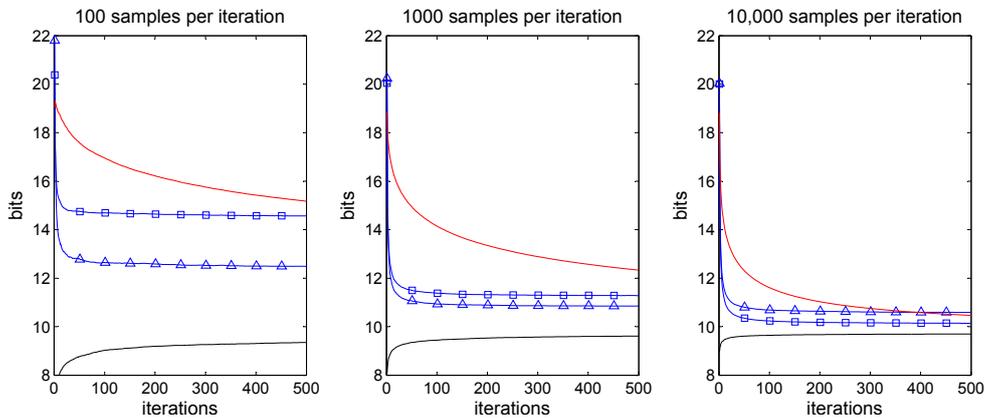}
\caption{Sliding window based adaptive arithmetic coding for window sizes $l=100/1000/10,000$.  In each chart the black curve on the bottom is the empirical entropy of the samples, the red curve is adaptive entropy coding, the curve with the triangles is \textit{adaptive permutation marginal encoding} and the curve with the squares is \textit{adaptive permutation block encoding}}
\label{fig:sequential_coding}
\end{figure}

\section{Vector Quantization}
Vector quantization refers to a lossy compression setup, in which a high-dimensional vector $\underline{X} \in \mathbb{R}^d$ is to be represented by a finite number of points. This means that the high dimensional observed samples are clustering into groups, where each group is represented by a representative point. For example, the famous $k$-means algorithm \citep{macqueen1967some} provides a method to determine the clusters and the representative points (centroids) for an Euclidean loss function. Then, these centroid points that represent the observed samples are compressed in a lossless manner. \\

\noindent As described above, in the lossy encoding setup one is usually interested in minimizing the amount of bits which represent the data for a given a distortion (or equivalently, minimizing the distortion for a given compressed data size).  The rate-distortion function defines the lower bound on this objective. In vector quantization, the representation is a deterministic mapping (defined as $P(\underline{Y}|\underline{X})$) from a source $\underline{X}$ to its quantized version $\underline{Y}$. Therefore we have that $H(\underline{Y}|\underline{X})=0$ and the rate distortion is simply   

\begin{equation}
\label{R(D)}
R\left(D\right)=\min_{P(\underline{Y}|\underline{X})}H(\underline{Y})\,\, s.t. \,\, \mathbb{E}\left\{D(\underline{X},\underline{Y})\right\} \leq D
\end{equation}
where $D(\underline{X},\underline{Y})$ is some distortion measure between $\underline{X}$ and $\underline{Y}$. 

\subsection{Entropy Constrained Vector Quantization}
The Entropy Constrained Vector Quantization (ECVQ) is an iterative method for clustering the observed samples into centroid points which are later represented by a minimal average codeword length. The ECVQ algorithm aims to find the minimizer of  
\begin{equation}
\label{ECVQ minimization}
J\left(D\right)=\min \mathbb{E}\left\{l(\underline{X})\right\} \,\, s.t. \,\, \mathbb{E}\left\{D(\underline{X},\underline{Y})\right\} \leq D
\end{equation}
where the minimization is over three terms: the vector quantizer (of $\underline{X}$), the entropy encoder (of the quantized version of $\underline{X}$) and the reconstruction module of $\underline{X}$ from its quantized version.\\
   
\noindent Let us use a similar notation to \cite{chou1989entropy}. Denote the vector quantizer $\alpha: \underline{x} \rightarrow \mathpzc{C}$ as a mapping from an observed sample to a cluster in $\mathpzc{C}$, where $\mathpzc{C}$ is a set of $m$ clusters.  Further, let $\gamma: \mathpzc{C} \rightarrow \mathpzc{c}$ be a mapping  from a cluster to a codeword. Therefore, the composition $\alpha \circ \gamma$ is the encoder. In the same manner, the decoder is a composition $\gamma^{-1} \circ \beta$, where $\gamma^{-1}$ is the inverse mapping from a codeword to a cluster and $\beta: \mathpzc{C} \rightarrow \underline{y}$ is the reconstruction of $\underline{x}$ from its quantized version. Therefore, the Lagrangian of the optimization problem (\ref{ECVQ minimization}) is 
\begin{equation}
\label{ECVQ}
L_{\lambda}(\alpha,\beta,\gamma) =\mathbb{E}\left\{D(\underline{X},\beta\left(\alpha\left(\underline{X}\right)\right)+\lambda\left|\gamma\left(\alpha\left(\underline{X}\right)\right)\right|\right\}
\end{equation}

\noindent The ECVQ objective is to find the coder $(\alpha,\beta,\gamma)$ which minimizes this functional. In their work, \cite{chou1989entropy} suggest an iterative descent algorithm similar to the
generalized \cite{lloyd1982least} algorithm. Their algorithm starts with an arbitrary initial coder. Then, for a fixed $\gamma$ and $\beta$ it finds a clustering $\alpha(\underline{X})$ as the minimizer of:

\begin{equation}
\label{ECVQ_a}
\alpha(\underline{X})=\argmin_{i \in \mathpzc{C}}\left\{D(\underline{X},\beta\left(i)\right)+\lambda\left|\gamma\left(i\right)\right| \right\}.
\end{equation}
Notice that for an Euclidean distortion, this problem is simply $k$-means clustering, with a ``bias" of $\lambda\left|\gamma\left(i\right)\right|$ on its objective function.\\

\noindent For a fixed  $\alpha$ and $\beta$, we notice that each cluster  $i \in \mathpzc{C}$ has an induced probability of occurrence $p_i$. Therefore,  the entropy encoder $\gamma$ is designed accordingly, so that $|\gamma(i)|$ is minimized. The Huffman algorithm could be incorporated into the design algorithm at this stage. However, for simplicity, we allow codewords to have non-integer lengths, and assign 

\begin{equation}
\label{ECVQ_b}
\left|\gamma\left(i\right)\right|=-\log(p_i).
\end{equation}
\noindent Finally, for a fixed $\alpha$ and $\gamma$, the reconstruction module $\beta$ is 
\begin{equation}
\label{ECVQ_c}
\beta(i)=\argmin_{\underline{y} \in \underline{Y}} \mathbb{E} \left\{ D\left(\underline{X},\underline{y}\right) | \alpha(\underline{X})=i \right\}.
\end{equation}
For example, for an euclidean distortion measure, $\beta(i)$'s are simply the centroids of the clusters $i \in \mathpzc{C}$.\\

\noindent Notice that the value objective (\ref{ECVQ}), when applying each of the three steps (\ref{ECVQ_a}-\ref{ECVQ_c}), is non-increasing. Therefore, as we apply these three steps repeatedly, the ECVQ algorithm is guarenteed to converge to a local minimum. 
Moreover, notice that for an Euclidean distortion measure, step (\ref{ECVQ_a}) of the ECVQ algorithm is a variant of the $k$-means algorithm. However, the $k$-means algorithms is known to be computationally difficult to execute as the number of observed samples increases. Hence, the ECVQ algorithm is also practically limited to a relatively small number of samples.\\

\noindent As in previous sections, we argue that when the alphabet size is large (corresponds to low distortion), it may be better to encode the source component-wise. This means, we would like to construct a vector quantizer such that the sum marginal entropies of $\underline{Y}$ is minimal, subject to the same distortion constraint as in (\ref{R(D)}). Specifically, 
\begin{equation}
\label{R(D)_ours}
\tilde{R}\left(D\right)=\min_{P(\underline{Y}|\underline{X})}\sum_{j=1}^d H(Y_j)\,\, s.t. \,\, \mathbb{E}\left\{D(\underline{X},\underline{Y})\right\} \leq D
\end{equation}
\noindent Notice that for a fixed distortion value, $R\left(D\right) \leq \tilde{R}\left(D\right)$ as sum of marginal entropies is bounded from below by the joint entropy. However, since encoding a source over a large alphabet may result in a large redundancy (as discussed in previous sections), the average codeword length of the ECVQ (\ref{ECVQ minimization}) is not necessarily lower than our suggested method (and usually even much larger). \\

\noindent Our suggested version of the ECVQ works as follows: we construct $\alpha$ and $\beta$ in the same manner as  ECVQ does, but replace the Huffman encoder (in $\gamma$) with our suggested linear relaxation to the BICA problem (Section \ref{Generalized_BICA}). This means that for a fixed $\alpha, \beta$, which induce a random vector over a finite alphabet size (with a finite probability distribution), we seek for a representation which makes its components ``as statistically independent as possible". The average codeword lengths are then achieved by arithmetic encoding on each of these components. \\

\noindent This scheme results not only with a different codebook, but also with a different quantizer than the ECVQ. This means that a quantizer which strives to construct a random vector (over a finite alphabet) with the lowest possible average codeword length (subject to a distortion constraint) is different than our quantizer, which seeks for a random vector with a minimal sum of marginal average codeword lengths (subject to the same distortion). \\

\noindent Our suggested scheme proves to converge to a local minimum in the same manner that ECVQ does. That is, for a fixed $\alpha, \beta$, our suggested relaxed BICA method finds a binary representation which minimizes the sum of marginal entropies. Therefore, we can always compare the representation it achieves in the current iteration with the representation it found in the previous iteration, and choose the one which minimizes the objective. This leads to a non-increasing objective each time it is applied. Moreover, notice that we do not have to use the complicated relaxed BICA scheme and apply the simpler order permutation (Section \ref{Order_Permutation}). This would only result in a possible worse encoder but local convergence is still guaranteed.\\

\noindent To illustrate the performance of our suggested method we conduct the following experiment:
We draw $1000$ independent samples from a six dimensional bivariate Gaussian mixture.  
We apply both the ECVQ algorithm, and our suggest BICA variation of the ECVQ, on these samples. Figure \ref{fig:ECVQ} demonstrates the average codeword length we achieve for different Euclidean (mean square error) distortion levels.

\begin{figure}[ht]
\centering
\includegraphics[width = 0.7\textwidth,bb= 0 190 600 590,clip]{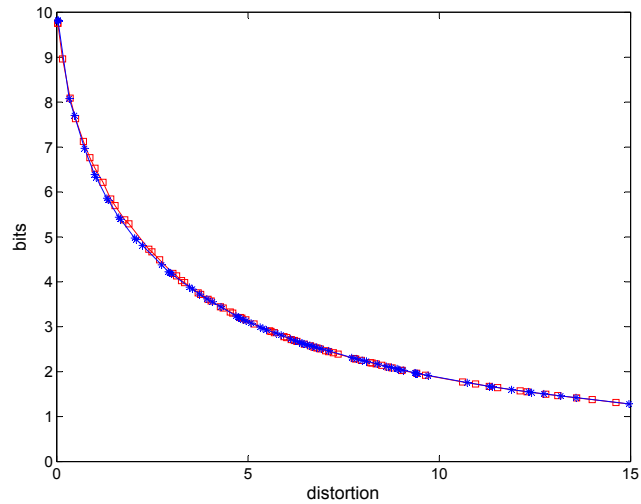}
\caption{ECVQ simulation. The curve with the squares corresponds to the average codeword length achieved by the classical ECVQ algorithm. The curve with the asterisks is the average codeword length achieved by our suggested BICA variant to the ECVQ algorithm} 
\label{fig:ECVQ}
\end{figure}
\noindent We first notice that both methods performs almost equally well. The reason is that $1000$ observations do not necessitate an alphabet size which is greater than $m=1000$ to a attain a zero distortion. In this ``small alphabet" regime, our suggested approach does not demonstrate its advantage over classical methods, as discussed in previous sections. However, we can still see it performs equally well. As we try to increase the number of observations (and henceforth the alphabet size) we encounter computational difficulties, which result from repeatedly performing a variant of the $k$-means algorithm (\ref{ECVQ_a}). This makes both ECVQ and our suggested method quite difficult to implement over a ``large alphabet size" (many observations and low distortion). \\

\noindent However, notice that if Gersho's conjecture is true \citep{gersho1979asymptotically}, and the best space-filling polytope is a lattice, then the optimum $d$-dimensional ECVQ at high resolution (low distortion) regime takes the form of a lattice \citep{zamir2014lattice}. This means that for this setup, $\gamma$ is simply a lattice quantizer. This idea is described in further detail in the next section.

\subsection{Vector Quantization with Fixed Lattices}
As demonstrated in the previous section, applying the ECVQ algorithm to a large number of observations $n$ with a low distortion constraint, is impractical. To overcome this problem we suggest using a predefined quantizer in the form of a lattice. This means that instead of seeking for a quantizer $\gamma$ that results in a random vector (over a finite alphabet) with a low average codeword length, we use a fixed quantizer, independent of the samples, and construct a codebook accordingly. Therefore, the performance of the codebook strongly depends on the empirical entropy of the quantized samples.\\

\noindent Since we are dealing with fixed lattices (vector quantizers), it is very likely that the empirical entropy of the quantized samples would be significantly different (lower) than the true entropy in low distortion regimes (large alphabet size). Therefore, the compressed data would consist of both the compressed samples themselves and a redundancy term, as explained in detail in Section \ref{universal source coding}. \\

\noindent 
Here again, we suggest that instead of encoding the quantized samples over a large alphabet size, we should first represent them in an ``as statistically independent  as possible" manner, and encode each component separately.\\

\noindent 
To demonstrate this scheme we turn to a classic quantizing problem, of a standard $d$-dimensional normal distribution.
Notice this quantizing problem is very well studied \citep{cover2012elements} and a lower bound for the average codeword length, for a given distortion value $D$, is given by
\begin{equation}
\label{normal R(D)}
R(D)=\max\left\{\frac{d}{2}\log\left(\frac{d}{D}\right),0\right\}.
\end{equation}
 \noindent In this experiment we draw $n$ samples from a standard $d$-dimensional multivariate normal distribution. Since the span of the normal distribution is infinite, we use a lattice which is only defined in a finite sphere. This means that each sample which falls outside this sphere is quantized to its nearest quantization point on the surface of the sphere. We define the radius of the sphere to be $5$ times the variance of the source (hence $r=5$). We first draw $n=10^5$ samples from $d=3,4$ and $8$ dimensional normal distributions. For $d=3$ we use a standard cubic lattice, while for $d=4$ we use an hexagonal lattice \citep{zamir2014lattice}. For $d=8$ we use an $8$-dimensional integer lattice \citep{zamir2014lattice}. The upper row of Figure \ref{fig:lattice1} demonstrates the results we achieve for the three cases respectively (left to right), where for each setup we compare the empirical joint entropy of the quantized samples (dashed line) with the sum of empirical marginal entropies, following our suggested approach (solid line). We further indicate the rate distortion lower bound (\ref{normal R(D)}) for each scenario, calculated according to the true distribution (line with x's). Notice the results are normalized according to the dimension $d$.
As we can see, the sum of empirical marginal entropies is very close to the empirical joint entropy for $d=3,4$. The rate distortion indeed bounds from below both of these curves. For $d=8$ the empirical joint entropy is significantly lower than the true entropy (especially in the low distortion regime). This is a result of an alphabet size which is larger than the number of samples $n$. However, in this case too, the sum of empirical marginal entropies is close to the joint empirical entropy. The behavior described above is maintained as we increase the number of samples to $n=10^6$, as indicated in the lower row of Figure \ref{fig:lattice1}. Notice again that the sum of marginal empirical entropies is very close to the joint empirical entropy, especially on the bounds (very high and very low distortion). The reason is that in both of these cases, where the joint probability is either almost uniform (low distortion) or almost degenerate (high distortion), there exists a representation which makes the components statistically independent. In other words, both the uniform and degenerate distributions can be shown to satisfy $\sum_{j=1}^d H(Y_j)=H(\underline{Y})$ under the order permutation. \\
      
\begin{figure}[ht]
\centering
\includegraphics[width = 0.9\textwidth,bb= 115 105 680 500,clip]{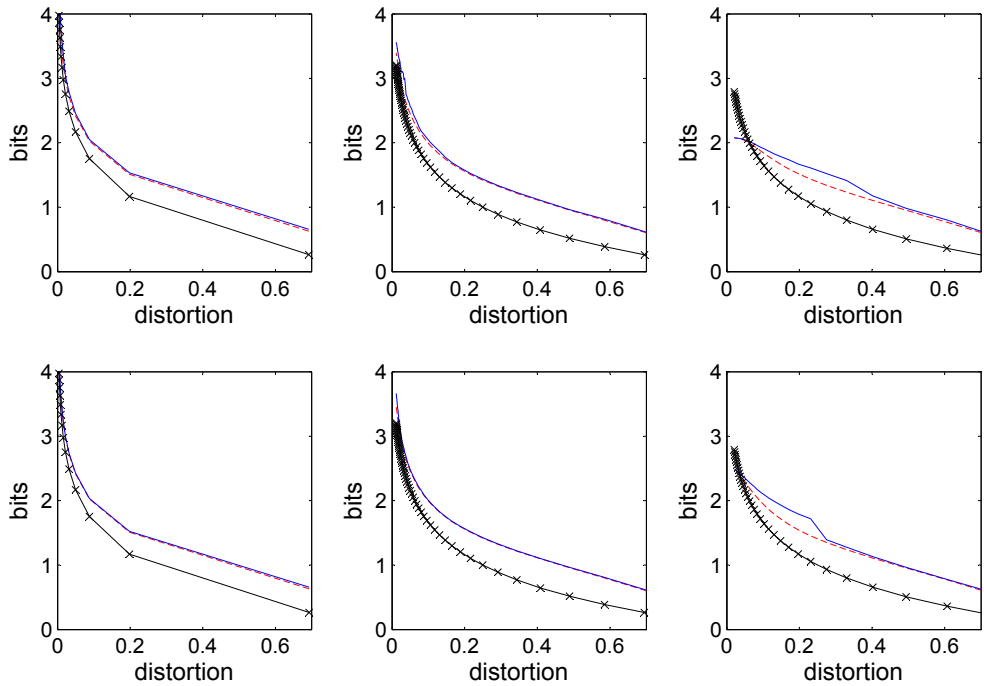}
\caption{Lattice quantization of $d$-dimensional standard normal distribution. The upper row corresponds to $n=10^5$ drawn samples while the lower row is $n=10^6$ samples. The columns correspond to the dimensions $d=3,4$ and $8$ respectively. In each setup, the dashed line is the joint empirical entropy while the solid line is the sum of marginal empirical entropies, following our suggested method. The line with the x's is the rate distortion (\ref{normal R(D)}), calculated according to the true distribution.} 
\label{fig:lattice1}
\end{figure}

\noindent We further present the total compression size of the quantized samples in this universal setting. Figure \ref{fig:lattice2} shows the amount of bits required for the quantized samples, in addition to the overhead redundancy, for both Huffman coding and our suggested scheme. As before, the rows correspond to $n=10^5$ and $n=10^6$ respectively, while the columns are $d=3,4$ and $8$, from left to right. We first notice that for $d=3,4$ both methods perform almost equally well. However, as $d$ increases, there exists a significant different between the classical coding scheme and our suggested method, for low distortion rate. The reason is that for larger dimensions, and low distortion rate, we need a very large number of quantization points, hence, a large alphabet size. This is exactly the regime where our suggested method demonstrates its enhanced capabilities, compared with standard methods.  

\begin{figure}[h]
\centering
\includegraphics[width = 0.95\textwidth,bb= 130 105 660 505,clip]{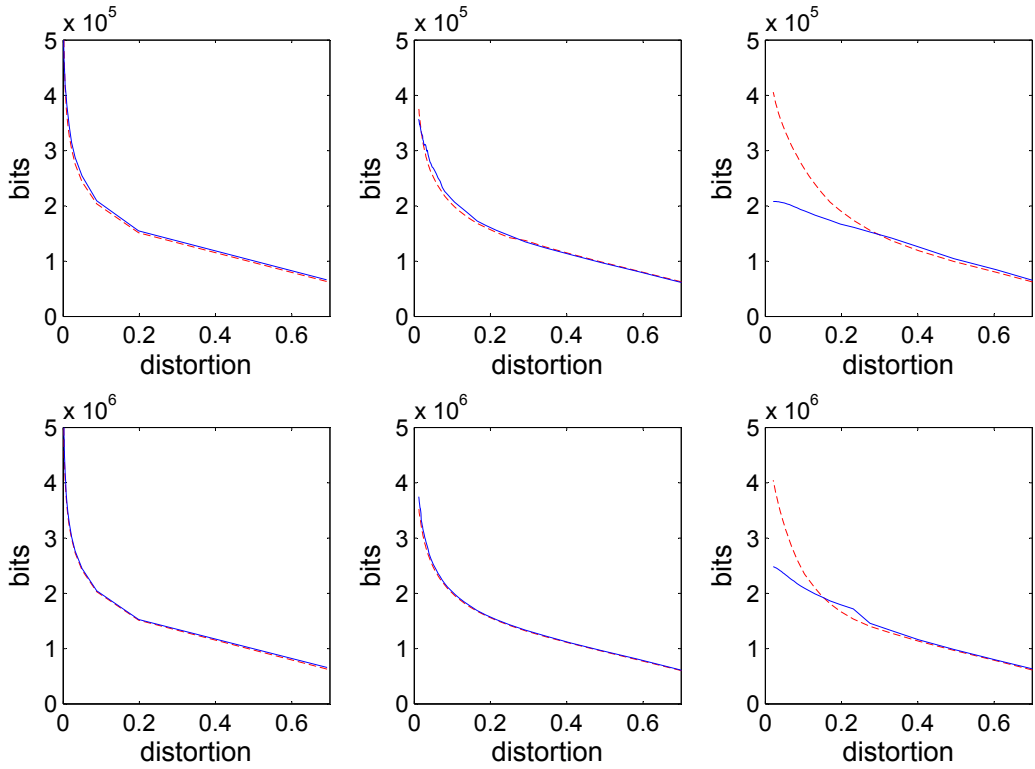}
\caption{Total compression size for lattice quantization of $d$-dimensional standard normal distribution. The upper row corresponds to $n=10^5$ drawn samples while the lower row is $n=10^6$ samples. The columns correspond to the dimensions $d=3,4$ and $8$, from left to right. In each setup, the dashed line is the total compression size through classical universal compression while the solid line is the total compression size using our suggested relaxed generalized BICA approach.} 
\label{fig:lattice2}
\end{figure}

\section{Discussion}
In this chapter we introduced a conceptual framework for large alphabet source coding. We suggest to decompose a large alphabet source into components which are ``as statistically independent as possible" and then encode each component separately. This way we overcome the well known difficulties of large alphabet source coding,  at the cost of: \begin{enumerate}  [(i)]
\item  Redundancy which results from encoding each component separately.   \label{a} 
\item  Computational difficulty of finding a transformation which decomposes the source. \label{b}
\end{enumerate} 
\vspace{-0.8 cm}
We propose two methods which focus on minimizing these costs. The first method is a piece-wise linear relaxation to the BICA (Chapter \ref{Generalized_BICA}). This method strives to decrease (\ref{a}) as much as possible, but its computationally complexity is quite involved. Our second method is the order permutation (Chapter \ref{Order_Permutation}) which is very simple to implement (hence, focuses on  (\ref{b})) but results in a larger redundancy as it is a greedy solution to (\ref{eq:sum_ent_min_binary}).\\

\noindent We demonstrated our suggested framework on three major large alphabet compression scenarios, which are the classic lossless source coding problem, universal source coding and vector quantization. We showed that in all of these cases, our suggested approach achieves a lower average codeword length than most commonly used methods.\\ 

\noindent All this together leads us to conclude that decomposing a large alphabet source into `` as statistically independent as possible" components, followed by entropy encoding of each components separately,  is both theoretically and practically beneficial.

%\chapter[Isotonic Modeling with Non-differentiable Loss Functions]{Isotonic Modeling with Non-differentiable Loss Functions}\label{Isotonic}
%\graphicspath{{Isotonic_Figures//}}
%\input{Chapters//Isotonic}

%\chapter[Cross-Validated Variable Selection in Tree-Based Methods]{Cross-Validated Variable Selection in Tree-Based Methods}\label{loo_trees}
%\chaptermark{CV Variable Selection in Tree-Based Methods}
%\graphicspath{{loo_trees_Figures//}}
%\input{Chapters//loo_trees}

%\chapter[Compressing Random Forests]{Compressing Random Forests}\label{trees_comp}
%\graphicspath{{trees_comp_Figures//}}
%\input{Chapters//trees_comp}

\iffalse
\newpage
\thispagestyle{empty}
\mbox{}
\thispagestyle{empty}
\fi

\begin{appendices}
  \chapter{}
  \label{unique_values_markov_model}
\begin{theorem}
Assume a binary random vector $\underline{X} \in \{0,1\}^d$ is generated from a first order stationary symmetric  Markov model. Then, the joint probability of $\underline{X}$,  $P_{\b{x}}=p_1, \dots, p_m$ only contains $d\cdot(d-1)+2$ unique (non-identical) elements of $p_1, \dots, p_m$.
\end{theorem}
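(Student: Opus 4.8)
The plan is to show that $P_{\b{x}}(\b{x})$ depends on the word $\b{x}$ only through a small number of integer statistics, and then to count how many values those statistics can take. Fix the model parameters by writing $P(X_{k+1}=0\mid X_k=0)=\alpha$, $P(X_{k+1}=1\mid X_k=1)=\beta$ and $P(X_1=0)=\pi$. The ``symmetric'' and ``stationary'' hypotheses only impose additional algebraic relations among $\alpha,\beta,\pi$, hence can only make some of the probabilities coincide; so it suffices to prove the bound $d(d-1)+2$ for an arbitrary first-order time-homogeneous chain on $\{0,1\}$, and it then holds a fortiori in the symmetric stationary case. For a word $\b{x}=(x_1,\dots,x_d)$ let $k_{ij}=\#\{1\le l\le d-1:\ x_l=i,\ x_{l+1}=j\}$ denote its bigram counts. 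Then $P_{\b{x}}(\b{x})$ equals $\pi$ (if $x_1=0$) or $1-\pi$ (if $x_1=1$), multiplied by $\alpha^{k_{00}}(1-\alpha)^{k_{01}}(1-\beta)^{k_{10}}\beta^{k_{11}}$, so $P_{\b{x}}(\b{x})$ is a function of the tuple $(x_1,k_{00},k_{01},k_{10},k_{11})$ alone.

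The second step reduces this tuple further by looking at the \emph{runs} of $\b{x}$. If $\b{x}$ splits into $R=R(\b{x})$ maximal constant runs, then it has exactly $R-1$ ``switch'' positions, these switches alternate between $0\!\to\!1$ and $1\!\to\!0$ starting according to the value of $x_1$, so $k_{01}$ and $k_{10}$ are determined by $x_1$ and $R$, while $k_{00}+k_{11}=(d-1)-(R-1)=d-R$. Consequently $P_{\b{x}}(\b{x})$ is determined by the triple $(x_1,R,k_{00})$, with $k_{11}=d-R-k_{00}$. It therefore remains to count the realizable triples.

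The third step is the counting. Plainly $1\le R\le d$ and $0\le k_{00}\le d-R$. The only extra restriction is the degenerate case $R=1$: a single run forces the word to be constant, giving the two triples $(0,1,d-1)$ (the word $00\cdots0$) and $(1,1,0)$ (the word $11\cdots1$). For every $R\ge 2$ there are runs of both symbols, and any value $k_{00}\in\{0,1,\dots,d-R\}$ is attained (put the $k_{00}$ repeated $0$'s inside one $0$-run and the $k_{11}=d-R-k_{00}$ repeated $1$'s inside one $1$-run), so $R$ contributes $d-R+1$ triples for each choice of $x_1$. Hence the number of realizable triples is
\[
\sum_{x_1\in\{0,1\}}\Bigl(1+\sum_{R=2}^{d}(d-R+1)\Bigr)=2\Bigl(1+\sum_{k=1}^{d-1}k\Bigr)=2+d(d-1).
\]
Since $P_{\b{x}}$ takes the same value on all words with the same triple, its $m=2^{d}$ entries $p_1,\dots,p_m$ assume at most $d(d-1)+2$ distinct values, which is the assertion. (For generic $\alpha,\beta,\pi$ distinct triples give distinct probabilities — the monomials are separated by their exponent patterns and by the power of $\pi$ versus $1-\pi$ — so the bound is tight; the symmetric/stationary specializations merely create coincidences and so attain strictly fewer values for $d\ge 3$.)

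The main obstacle is the realizability bookkeeping in the third step: one must argue carefully that the reduction to $(x_1,R,k_{00})$ loses nothing, that the ranges $1\le R\le d$ and $0\le k_{00}\le d-R$ are exactly right apart from the $R=1$ exception, and that no feasibility obstruction arises when spreading the repeated symbols over the individual runs (in particular when $R=d$ and all runs have length one). A secondary point of presentation is whether to phrase the result as this exact count or, as I would, as the sharp upper bound $d(d-1)+2$ — the latter is all that the complexity analysis needs and it holds verbatim for every first-order chain, symmetric or not.
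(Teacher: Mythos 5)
Your proposal is correct and follows essentially the same route as the paper's own proof: both reduce $P_{\underline{X}}(\underline{x})$ to a function of a few combinatorial statistics of the word (the paper uses the first bit, the number of ones and the number of transitions; your triple $(x_1,R,k_{00})$ is an equivalent reparametrization via runs) and then count the realizable combinations, getting $\tfrac{d(d-1)}{2}+1$ per value of the first bit and hence $d(d-1)+2$ in total. Your write-up is somewhat more careful about the realizability bookkeeping and observes that the bound holds for any first-order binary chain, but the underlying argument coincides with the paper's.
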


\begin{proof}
We first notice that for a binary, symmetric and stationary Markov model, the probability of each word is solely determined by 

\begin{itemize}

\item	The value of the first (most significant) bit 
\item	The number of elements equal $1$ (or equivalently $0$) 
\item	The number of transitions from $1$ to $0$ (and vice versa).

\end{itemize}
 
\noindent For example, for $d=4$ the probability of $0100$ equals the probability of $0010$, while it is not equal to the probability of $0001$.\\

\noindent First, assume the number of transitions, denoted in $r$, is even. Further, assume that the first (most significant) bit equals zero. Then, the number of words 
with a unique probability is 
\begin{equation}
\label{U1}
U_1=\sum_{\substack{r=2,\\ r\; \text{is even}}}^{d-2} \sum_{k=\frac{r}{2}}^{d-\frac{r}{2}}1=\sum_{\substack{r=2,\\ r\; \text{is even}}}^{d-2}d-r
\end{equation}
where the summation over $r$ corresponds to the number of transitions, while the summation over $k$ is with respect to the number of $1$ elements given $r$. For example, for $d=4$, $r=2$ and $k=1$ we have the words $0100, 0010$ (which have the same probability as discussed above), while for $k=2$ we have $0110$.
In the same manner, assuming that the most significant bit is $0$ but now $r$ is odd, we have 
\begin{equation}
\label{U2}
U_2=\sum_{\substack{r=2,\\ r\; \text{is odd}}}^{d-1} \sum_{k=\frac{r+1}{2}}^{d-\frac{r+1}{2}}1=\sum_{\substack{r=2,\\ r\; \text{is odd}}}^{d-1}d-r.
\end{equation}
Putting together (\ref{U1}) and (\ref{U2}) we have that number of words with a unique probability, assuming the most significant bit is $0$, equals 
\begin{equation}
U1+U_2=\sum_{r=1}^{d-1} d-r = \frac{d\cdot(d-1)}{2}+1.
\end{equation}
\noindent The same derivation holds for the case where the most significant bit is $1$, leading to a total of $d\cdot(d-1)+2$ words with a unique probability 

\end{proof}

  \chapter{}
  \begin{proposition}
\label{H(X)}
Let $\underline{X} \sim \underline{p} $ be a random vector of an alphabet size $m$ and a joint probability distribution $\underline{p}$. The expected joint entropy of $\underline{X}$, where the expectation is  over a uniform simplex of joint probability distributions $\underline{p}$ is
\begin{equation}\nonumber
\mathbb{E}_{\underline{\smash{p}}}\left\{H(\underline{X}) \right\}=\frac{1}{\log_e{2}}\left(\psi(m+1)-\psi(2)\right) 
\end{equation}
where $\psi$ is the \textit{digamma function}.
\end{proposition}

\label{proof_for_H(X)}
\begin{proof}
We first notice that a uniform distribution over a simplex of a size $m$ is equivalent to a Direchlet distribution with parameters $\alpha_i=1, i=1, \dots,m$. The Direchlet distribution can be generated through normalized independent random variables from a Gamma distribution. This means that  for statistically independent $Z_i \sim \Gamma(k_i=1, \theta_i=1), i=1, \dots,m$ we have that 
\begin{equation}
\label{uniform simplex}
\frac{1}{\sum_{k=1}^{m}Z_k}\left(Z_1, \dots Z_m\right) \sim Dir\left(\alpha_1=1, \dots, \alpha_m=1\right).
\end{equation}
\noindent We are interested in the expected joint entropy of draws from (\ref{uniform simplex}), 
\begin{align}
\label{joint entropy}
\mathbb{E}_{\underline{\smash{p}}}\left\{H(\underline{X}) \right\}=&-\sum_{i=1}^{m}\mathbb{E}\left\{ \frac{Z_i}{\sum_{k=1}^{m}Z_k} \log{\frac{Z_i}{\sum_{k=1}^{m}Z_k}}   \right\} =\\\nonumber
& -m\mathbb{E}\left\{ \frac{Z_i}{\sum_{k=1}^{m}Z_k} \log{\frac{Z_i}{\sum_{k=1}^{m}Z_k}}\right\}
\end{align}

\noindent It can be shown that for two independent Gamma distributed random variables $X_1\sim\Gamma(\alpha_1,\theta)$ and $X_2\sim\Gamma(\alpha_2,\theta)$, the ratio $\frac{X_1}{X_1+X_2}$ follows a Beta distribution with parameters $(\alpha_1,\alpha_2)$. Let us denote $\tilde{Z}_i \triangleq \frac{Z_i}{\sum_{k=1}^{m}Z_k}=\frac{Z_i}{Z_i+\sum_{k \neq i} Z_k}$. Notice that  $Z_i\sim\Gamma(1,1)$ and $\sum_{k \neq i} Z_i\sim\Gamma(m-1,1)$ are mutually independent. Therefore,  
\begin{equation}
f_{\tilde{Z}_i}(z) =Beta(1,m-1)=\frac{(1-z)^{(m-2)}}{B(1,m-1)}.
\end{equation}
\noindent This means that
\begin{align}
\label{derivation1}
&\mathbb{E}\left\{ \frac{Z_i}{\sum_{k=1}^{m}Z_k} \log{\frac{Z_i}{\sum_{k=1}^{m}Z_k}}\right\}=\mathbb{E}\left\{ \tilde{Z}_i \log \tilde{Z}_i\right\}=\\\nonumber
&\frac{1}{B(1,m-1)}\int_0^1 z\log{(z)} (1-z)^{(m-2)}dz=\\\nonumber
&\frac{B(2,m-1)}{B(1,m-1)}\frac{1}{\log_e{(2)}}\frac{1}{B(2,m-1)}\int_0^1 \log_e{(z)}z (1-z)^{(m-2)}dz=\\\nonumber
&\frac{1}{m\log_e{(2)}}\mathbb{E}\left(\log_e{(U)}\right)
\end{align}
\noindent where $U$ follows a Beta distribution with parameters $(2, m-1)$.  The expected natural logarithm of a Beta distributed random variable, $V\sim Beta(\alpha_1, \alpha_2)$, follows $\mathbb{E}\left(\log_e{(V)}\right)=\psi(\alpha_1)-\psi(\alpha_1+\alpha_2)$ where $\psi$ is the \textit{digamma function}.
Putting this together with (\ref{joint entropy}) and (\ref{derivation1}) we attain
\begin{equation}
\mathbb{E}_{\underline{\smash{p}}}\left\{H(\underline{X}) \right\} = -m\mathbb{E}\left\{ \frac{Z_i}{\sum_{k=1}^{m}Z_k} \log{\frac{Z_i}{\sum_{k=1}^{m}Z_k}}\right\}=\frac{1}{\log_e{(2)}}\left(\psi(m+1)-\psi(2)\right) 
\end{equation}

\end{proof}

  \chapter{}
  \label{on_the_uniquness}
In the generalized Gram-Schmidt method we suggested that for any process $X$ with a cumulative distribution function $F(X^k)$ we would like to sequentially construct $Y^k$ such that:
\begin{enumerate}
\item $F(Y^k)=\prod_{j=1}^k F(Y_j)$.
\item  $X^k$ can be uniquely recovered from $Y^k$ for any $k$.
\end{enumerate}
We presented a sequential framework for constructing such a memoryless process, given that the desired probability measure is non-atomic.  For simplicity of notation we reformulate our problem as follows: Assume a random variable $Y$ is to be constructed from a random variable $X$ given $X$'s past, denoted as $X_p$.  Therefore we would like to construct a memoryless random variable $Y=g(X,X_p)$ with a given $F_Y (y)$ such that
\begin{enumerate} [(i)]
\item  $Y$ is statistically independent in $X_p$
\item  $X$ can be uniquely recovered from $Y$ given $X_p$
\item $Y \sim F_Y (y)$
\end{enumerate}
Our goal is therefore to find such $Y=g(X,X_p)$ and discuss its uniqueness.

\section{The Uniform Distribution Case}
In this section we consider a special case where $Y$ is uniformly distributed, $F_Y (y)=y$ $\forall y \in [0,1]$. For $Y$ to be statistically independent of $X_p$ it must satisfy
\begin{equation}
F_{Y|X_p} (y | X_p=x_p)=F_Y (y).
\label{bla}
\end{equation}
Deriving the left hand side of (\ref{bla}) we have that for all $x_p$, 
$$ F_{Y|X_p} (y | X_p=x_p)=P(Y \leq y | X_p=x_p)=P(g(X,X_p)\leq y | X_p=x_p).	$$

\subsection{Uniqueness of Monotonically Increasing Transformations}
The second constraint suggests $X$ can be uniquely recovered from $Y$ and $X_p$, which implies $X=g_{X_p}^{-1} (Y)$. Assume $g(X,X_p)$ is monotonically increasing with respect to $X$. Then, we have that
\begin{align}
\label{5.4}
F_{Y|X_p} (y | X_p=x_p)=&P(g(X,X_p) \leq y| X_p=x_p)=\\\nonumber
&P(X \leq g_{X_p}^{-1} (y) | X_p=x_p)=F_{X|X_p} (g_{X_p}^{-1} (y)|X_p=x_p) 
\end{align}
where the second equality follows from the monotonically increasing behavior of $g(X,X_p)$ with respect to $X$. Therefore, we are looking for a monotonically increasing transformation $x=g_{X_p}^{-1} (y)$ such that
$$ F_{X|X_p} (g_{X_p}^{-1}(y)| X_p=x_p)=F_Y (y)=y.$$
The following lemmas discuss the uniqueness of monotonically increasing mappings when $X$ is a non-atomic (Lemma \ref{uniquness_1}) or atomic (Lemma \ref{uniquenss_2}) measure.

\begin{lemma}
\label{uniquness_1}
Assume $X$ is a non-atomic random variable with a strictly monotonically increasing commutative distribution function  $F_X (x)$ (that is, $X$ takes values on a continuous set). Suppose there exists a transformation on its domain, $x=h(y)$ such that
$$ F_X (x)|_{x=h(y)}=F_Y (y).$$
Then,
\begin{enumerate}[(1)]
\item 	$x=h(y)$ is unique \label{1}
\item 	 $h(y)$ is monotonically non decreasing (increasing, if $F_Y (y)$ is strictly increasing) \label{2}.
\end{enumerate}
\end{lemma}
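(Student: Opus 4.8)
The plan is to prove both claims by directly exploiting the strict monotonicity of $F_X$. First I would establish uniqueness (claim \ref{1}). Suppose two transformations $h_1$ and $h_2$ both satisfy $F_X(h_i(y)) = F_Y(y)$ for all $y$ in the domain of $Y$. Fix an arbitrary $y$. Then $F_X(h_1(y)) = F_X(h_2(y))$, and since $F_X$ is strictly monotonically increasing it is injective, so $h_1(y) = h_2(y)$. As $y$ was arbitrary, $h_1 \equiv h_2$. This is the easy part; the only subtlety is ensuring the argument is applied only at values $x = h(y)$ that actually lie in the support of $X$, which is guaranteed because $F_Y(y) \in [0,1]$ and $F_X$ (being the CDF of a non-atomic variable with strictly increasing CDF) attains every value in $[0,1]$, so $h(y) = F_X^{-1}(F_Y(y))$ is well-defined.

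Next I would prove monotonicity (claim \ref{2}). Take $y_1 < y_2$ in the domain of $Y$. Since $F_Y$ is a CDF, $F_Y(y_1) \leq F_Y(y_2)$, hence $F_X(h(y_1)) \leq F_X(h(y_2))$. Now I would argue by contradiction: if $h(y_1) > h(y_2)$, then monotonicity of $F_X$ gives $F_X(h(y_1)) \geq F_X(h(y_2))$, which combined with the previous inequality forces $F_X(h(y_1)) = F_X(h(y_2))$; but strict monotonicity of $F_X$ then yields $h(y_1) = h(y_2)$, contradicting $h(y_1) > h(y_2)$. Therefore $h(y_1) \leq h(y_2)$, i.e. $h$ is non-decreasing. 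For the strict version, assume additionally that $F_Y$ is strictly increasing, so $F_Y(y_1) < F_Y(y_2)$, hence $F_X(h(y_1)) < F_X(h(y_2))$; if $h(y_1) = h(y_2)$ the left side would equal the right side, a contradiction, so $h(y_1) < h(y_2)$.

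I do not anticipate a genuine obstacle here — the lemma is essentially the observation that a strictly increasing CDF is a bijection onto $[0,1]$, so $h = F_X^{-1} \circ F_Y$ is forced, and the composition of a (strictly) increasing function with a non-decreasing function is non-decreasing. The one place to be careful is the distinction between ``non-decreasing'' and ``strictly increasing'': the conclusion is only strict when $F_Y$ itself is strictly increasing, since flat portions of $F_Y$ (intervals of zero probability for $Y$) pull back to flat portions of $h$. I would make sure the statement and proof keep that hypothesis explicit, and I would also note in passing that this $h$ is exactly the quantile-coupling map $F_X^{-1}(F_Y(\cdot))$, which ties the lemma back to the generalized Gram--Schmidt construction via Theorem \ref{seq_theorem_1}.
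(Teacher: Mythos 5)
Your proof is correct and follows essentially the same route as the paper's: uniqueness from the injectivity of the strictly increasing $F_X$, and monotonicity of $h$ by pushing the inequality $F_Y(y_1)\leq F_Y(y_2)$ through $F_X(h(y_i))=F_Y(y_i)$ and invoking strict monotonicity of $F_X$, with strictness of $h$ inherited exactly when $F_Y$ is strictly increasing. The paper phrases both parts as explicit contradictions with a perturbation $\delta$, but the content is identical, and your closing remark that $h=F_X^{-1}\circ F_Y$ is forced (the quantile map of Theorem \ref{seq_theorem_1}) is consistent with how the lemma is used.
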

\begin{proof}
Let us begin with proving (\ref{1}). The transformation $x=h(y)$ satisfies
$$ F_X (x) |_{x=h(y)}=F_X (h(y))=P(X\leq h(y))=F_Y (y).$$
Suppose there is another transformation $x=g(y)$  that satisfies the conditions stated above. Then,
$$F_X (x) |_{x=g(y) }=F_X (g(y))=P(X\leq g(y))=F_Y (y).$$
Therefore, 
$$P(X\leq g(y))=P(X\leq h(y)) \quad \forall y.$$
Suppose $h(y)\neq g(y)$. This means that there exists at least a single $y=\tilde{y}$ where $g(\tilde{y})=h(\tilde{y})+\delta$ and $\delta \neq 0$. It follows that
$$ P(X \leq h(\tilde{y})+\delta)=P(X \leq  h(\tilde{y}))$$
or in other words
$$ F_X (h(\tilde{y}))=F_X (h(\tilde{y} )+\delta)$$
which contradicts the monotonically increasing behavior of $F_X (x)$ where the transformation is defined.\\
As for \noindent (\ref{2}), we have that $F_X (h(y))=F_Y (y) $ for all $y$. Therefore, 
$$F_X (h(y+\delta))=F_Y (y+\delta).$$
$F_Y (y)$ is a CDF which means that it satisfies $F_Y (y+\delta) \geq F_Y (y)$. Then, 
$$F_X (h(y+\delta))\geq F_X (h(y))$$
(strictly larger if $F_Y (y)$ is monotonically increasing).
Since $F_X (x)$ is monotonically increasing we have that $h(y+\delta)\geq h(y)$ (strictly larger if $F_Y (y)$ is monotonically increasing) 
\end{proof}

\begin{lemma}
\label{uniquenss_2}
Assume $X$ is a non-atomic random variable with a commutative distribution function  $F_X (x)$. Suppose there exists a transformation on its domain, $x=h(y)$ such that
$$ F_X (x)|_{x=h(y)}=F_Y (y).$$
Then,
\begin{enumerate}[(1)]
\item 	$x=h(y)$ is unique up to transformations in zero probability regions $X$'s domain \label{3}
\item 	 $h(y)$ is monotonically non decreasing (increasing, if $F_Y (y)$ is strictly increasing) \label{4}.
\end{enumerate}
\end{lemma}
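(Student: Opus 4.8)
The plan is to retrace the proof of Lemma~\ref{uniquness_1}, replacing strict monotonicity of $F_X$ by its general non-decreasing behaviour, so that $X$'s domain may now contain intervals of zero probability on which $F_X$ is constant; the whole content of the lemma is to track how much indeterminacy these flat regions introduce.

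For part~(1), I would start from the fact that every admissible transformation satisfies $F_X(h(y)) = F_Y(y)$ for all $y$. If $g$ is a second admissible transformation then $F_X(g(y)) = F_X(h(y))$ for all $y$, and for any $\tilde y$ with, say, $h(\tilde y) < g(\tilde y)$ we get $P\big(h(\tilde y) < X \le g(\tilde y)\big) = F_X(g(\tilde y)) - F_X(h(\tilde y)) = 0$, so the interval $\big(h(\tilde y), g(\tilde y)\big]$ lies in a zero-probability region of $X$'s domain. Hence two admissible transformations can disagree only inside such regions, which is precisely uniqueness ``up to transformations in zero probability regions''. I would also make the statement constructive by exhibiting the canonical representative $h_0(y) = F_X^{-1}\big(F_Y(y)\big)$ with the quantile function $F_X^{-1}(u) \triangleq \inf\{x : F_X(x) \ge u\}$; non-atomicity of $X$ gives $F_X\big(F_X^{-1}(u)\big) = u$, so $h_0$ is indeed admissible, and it selects the left endpoint of each flat region.

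For part~(2), I would reuse the computation of Lemma~\ref{uniquness_1} verbatim: from $F_X(h(y)) = F_Y(y)$ we get for $\delta > 0$ that $F_X\big(h(y+\delta)\big) = F_Y(y+\delta) \ge F_Y(y) = F_X\big(h(y)\big)$, with strict inequality when $F_Y$ is strictly increasing. A downward move $h(y+\delta) < h(y)$ would force $F_X\big(h(y+\delta)\big) = F_X\big(h(y)\big)$ by monotonicity of $F_X$; when $F_Y$ is strictly increasing this contradicts the strict inequality, giving strict monotonicity of $h$, and when $F_Y$ is only non-decreasing the same computation shows any such downward move must occur inside a flat region of $F_X$, so the canonical choice $h_0 = F_X^{-1}\circ F_Y$ (which is manifestly non-decreasing) realizes the claimed monotonicity. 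The main obstacle is purely one of careful phrasing: once $F_X$ has flat pieces, ``uniqueness'' must be read modulo zero-probability regions, and the monotonicity assertion should be attached to the canonical representative rather than to an arbitrary admissible $h$, since an arbitrary $h$ may be altered within a flat region so as to violate monotonicity. Everything beyond this is a routine transcription of the Lemma~\ref{uniquness_1} argument.
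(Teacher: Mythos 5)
Your proposal is correct for the lemma as literally stated, but you and the paper are in effect proving two different readings of it. The sentence preceding the two lemmas announces that Lemma \ref{uniquenss_2} is the \emph{atomic} counterpart of Lemma \ref{uniquness_1}, and the paper's own proof argues accordingly: it supposes $h(\tilde y)\neq g(\tilde y)$ and derives a contradiction from $P(X=x_1)>0$ and $P(X=x_2)>0$, so the word ``non-atomic'' in the statement is evidently a slip. You instead take the hypothesis at face value (a continuous $F_X$ that need not be strictly increasing) and run a flat-region analysis. The comparison favours your version on two counts. First, your uniqueness step, $P\bigl(h(\tilde y)<X\le g(\tilde y)\bigr)=F_X(g(\tilde y))-F_X(h(\tilde y))=0$, never actually uses non-atomicity, so it covers the paper's intended atomic setting as well, and it is tighter than the paper's argument, which appeals to two points $x_1,x_2$ that are never defined; the canonical representative $h_0=F_X^{-1}\circ F_Y$ is an addition the paper does not have, and is the only place where continuity of $F_X$ enters. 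Second, for part (2) the paper merely says the proof ``follows the same derivation as in Lemma \ref{uniquness_1}'', but the final step of that derivation passes from $F_X(h(y+\delta))\ge F_X(h(y))$ to $h(y+\delta)\ge h(y)$ by invoking strict monotonicity of $F_X$ --- exactly the hypothesis dropped in this lemma. Your observation that an arbitrary admissible $h$ can be rearranged inside a flat (zero-probability) region without changing $F_X\circ h$, so that monotonicity can only be asserted modulo such regions or for the canonical representative, is the correct repair of that glossed step, and your direct argument that every admissible $h$ is strictly increasing when $F_Y$ is strictly increasing is also sound.
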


\begin{proof}
(\ref{1}) As in Lemma \ref{uniquness_1}, 	let us assume that there exists another transformation $x= g(y)$  that satisfies the desired conditions. Therefore we have that
$$ P(X\leq g(y))=P(X\leq h(y))\quad \forall y.$$
Assuming $h(y)\neq g(y)$ we conclude that there exists at least a single value $y=\tilde{y}$ such that $g(\tilde{y})=h(\tilde{y})+\delta$ and $\delta \neq 0$. 
If both $h(\tilde{y})$ and $g(\tilde{y} )$ are valid values in $X$'s domain (positive probability) then we have $P(X\leq x_1 )=P(X\leq x_2)$.
This contradicts  $P(X=x_1 )>0$ and $P(X=x_2 )>0$ unless $x_1=x_2$.\\
\noindent  Moreover, if $g(\tilde{y} )\in [x_1,x_2 ]$ and $h(\tilde{y}) \notin [x_1,x_2 ]$ then again it contradicts $P(X=x_1 )>0$ and $P(X=x_2 )>0$ unless $x_1=x_2$. The only case in which we are not facing a contradiction is where  $g(\tilde{y}),h(\tilde{y}) \in[x_1,x_2 ]$. In other words, $x= g(y)$  is unique up to transformations in zero probability regions of $X$'s domain (regions which satisfy $P(X=g(\tilde{y} ))=0)$.\\

\noindent (\ref{4}) The monotonicity proof follows the same derivation as in Lemma \ref{uniquness_1}.    

\end{proof}

\noindent Therefore, assuming that there exists a transformation $x=g_{X_p}^{-1} (y)$ such that 
$$F_{X|X_p} (g_{X_p}^{-1} (y)|X=x_p)=F_Y (y)=y,$$
then it is unique and monotonically increasing. In this case we have that
\begin{align}
F_Y (y)=&F_{X|X_p} (g_{X_p}^{-1} (y)|X=x_p)=P(X\leq g_{X_p}^{-1} (y)|X=x_p)=\\\nonumber
&P(g(X,X_p)\leq y | X=x_p)  =F_{Y|X_p}(y|X_p=x_p)  
\end{align}
which means $Y$ is statistically independent of $X_p$.
Equivalently, if we find a monotonically increasing transformation $Y=g(X,X_p)$ that satisfies conditions (i), (ii) and (iii) then it is unique.

\subsection{Non Monotonically Increasing Transformations}
In the previous section we discussed the case in which we limit ourselves to functions $g(X,X_p)$ which are monotone in $X$. For this set of functions equation (\ref{5.4}) is a sufficient condition for satisfying (i) and (ii). 
However, we may find non monotonically increasing transformations $Y=h(X,X_p)$ which satisfy conditions (i), (ii) and (ii) but do not satisfy (\ref{5.4}). For example: $h(X,X_p)=1-g(X,X_p)$. Notice these transformations are necessarily measurable, as they map one distribution to another, and reversible with respect to $X$ given $X_p$ (condition ii). In this case, the following properties hold:
\begin{lemma}
\label{uniqueness_3}
Assume h(X,Y) satisfies the three conditions  mentioned above but does not satisfy equation (5.4). Then:
\begin{enumerate}[(1)]
\item $h(X,X_p)$ is not monotonically increasing in $X$ \label{5}
\item $h(X,X_p)$ is necessarily a ``reordering" of $g(X,X_p)$ \label{6}
\end{enumerate}
\end{lemma}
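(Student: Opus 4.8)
The plan is to analyze the structure forced by the three constraints (i)--(iii) when equation (5.4) fails, and to show that the only freedom available is a measure-preserving "reshuffling" of the values produced by the canonical monotone map $g(X,X_p)$. First I would fix a realization $X_p = x_p$ of the past and work with the conditional law of $X$ given $X_p=x_p$. By Lemma~\ref{uniquness_1} (and Lemma~\ref{uniquenss_2} in the atomic case), the monotone solution $g(\cdot,x_p)$ of $F_{X|X_p}(g^{-1}_{x_p}(y)\mid x_p)=F_Y(y)$ exists and is unique up to null sets; call the induced random variable $Y_g = g(X,X_p)$, which by construction is independent of $X_p$ and reversible in $X$. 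Now suppose $h(\cdot,x_p)$ also satisfies (i)--(iii) but violates (5.4).

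For part~(\ref{5}): if $h(\cdot,x_p)$ were monotonically increasing in $X$, then repeating the derivation of (\ref{5.4}) verbatim --- namely $F_{Y\mid X_p}(y\mid x_p)=P(h(X,X_p)\le y\mid x_p)=P(X\le h^{-1}_{x_p}(y)\mid x_p)=F_{X\mid X_p}(h^{-1}_{x_p}(y)\mid x_p)$ --- shows that $h$ \emph{does} satisfy (5.4), a contradiction. Hence $h(\cdot,x_p)$ cannot be monotone in $X$. This is the short, clean half of the argument.

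For part~(\ref{6}): here "reordering of $g$" should be read as: there exists a measurable bijection $\sigma_{x_p}$ of the range of $Y$ onto itself that \emph{preserves the target measure $F_Y$} and satisfies $h(X,x_p) = \sigma_{x_p}\!\bigl(g(X,x_p)\bigr)$ (for almost every $X$, for each fixed $x_p$). I would establish this by composing the two reversible maps: since both $g(\cdot,x_p)$ and $h(\cdot,x_p)$ are invertible in $X$ given $x_p$, define $\sigma_{x_p} \triangleq h(\cdot,x_p)\circ g^{-1}_{x_p}(\cdot)$ on the range of $Y$. Then $\sigma_{x_p}$ is measurable, invertible, and $h(X,x_p)=\sigma_{x_p}(g(X,x_p))$ by construction. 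It remains to check that $\sigma_{x_p}$ pushes $F_Y$ forward to $F_Y$: this is exactly condition (iii) for $h$, because $Y_g\sim F_Y$ (condition (iii) for $g$) and $\sigma_{x_p}$ transports the law of $Y_g$ to the law of $h(X,x_p)$, which must again be $F_Y$. Thus $\sigma_{x_p}$ is a measure-preserving rearrangement, and $h$ is precisely such a rearrangement of $g$ applied after the canonical map. In the non-atomic uniform case one can be even more concrete: measure-preserving bijections of $[0,1]$ with Lebesgue measure are exactly the "interval-reshuffling" maps, and $\sigma$ being non-identity (forced by part~(\ref{5})) is what makes $h$ non-monotone while still valid.

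The main obstacle I anticipate is pinning down the precise sense of "reordering" so that the statement is both true and non-vacuous: one must be careful that $\sigma_{x_p}$ genuinely preserves $F_Y$ (not merely that it is a bijection of the support), and one must handle the atomic / zero-probability regions where, as in Lemma~\ref{uniquenss_2}, $g$ itself is only unique up to null modifications --- so the claimed factorization $h=\sigma\circ g$ can only be asserted almost everywhere and up to the same null ambiguity. A secondary technical point is measurability of $y\mapsto\sigma_{x_p}(y)$ and joint measurability in $(x_p,y)$; this follows from measurability of $g$ and $h$ together with the reversibility assumption (ii), but it should be stated explicitly rather than glossed over.
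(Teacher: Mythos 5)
Your proposal is correct, and part (\ref{5}) is essentially the paper's own argument: both amount to observing that a monotone, invertible $h$ would reproduce the chain of equalities leading to (\ref{5.4}), contradicting the hypothesis (the paper phrases this through the uniqueness of $g$, you phrase it as a direct contrapositive, but the substance is identical).

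For part (\ref{6}) your route differs slightly from the paper's, in a way worth noting. The paper argues in the other direction: since $X$ is recoverable from $h(X,X_p)$ and $X_p$, one can compose $h$ with a reversible map $S$ that renders $S(h(X,X_p))$ monotonically increasing in $X$; this composition still satisfies (i)--(iii), so by the uniqueness of the monotone solution $S(h(X,X_p))=g(X,X_p)$, whence $h=S^{-1}(g)$. You instead define $\sigma_{x_p}=h(\cdot,x_p)\circ g^{-1}_{x_p}$ directly and verify that it is a measurable, invertible map pushing $F_Y$ forward to $F_Y$. The two factorizations are the same ($\sigma=S^{-1}$), but your version buys a precise meaning of ``reordering'' (a measure-preserving bijection of the range of $Y$, defined up to null sets) and sidesteps the paper's somewhat informal step of asserting that a monotonizing $S$ ``can always be generated''; the paper's version buys a direct appeal to Lemmas \ref{uniquness_1} and \ref{uniquenss_2}, identifying the reordering explicitly as the inverse of the monotone rearrangement of $h$. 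Your cautionary remarks about null-set ambiguity and measurability in $(x_p,y)$ are appropriate and consistent with the caveats already present in Lemma \ref{uniquenss_2}; the paper glosses over them, so no gap is introduced by either treatment.
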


\begin{proof}
(\ref{5}) Assume there exists a transformation $Y=h(X,X_p)$ which satisfy the three conditions (i), (ii) and (iii). Moreover assume $h(X,X_p) \neq g(X,X_p)$.  We know that 
$$F_{Y|X_p} (y|X_p=x_p)=P(h(X,X_p) \leq y | X_p=x_p)=F_Y (y) $$   
but on the other hand, $h(X,X_p) \neq g(X,X_p)$  which implies
$$ F_{X|X_p} (h_{X_p}^{-1} (y)|X_p=x_p) \neq F_Y (y)$$					
since $g(X,X_p)$ is unique. Therefore,
$$ P(h(X,X_p) \leq y | X_p=x_p) \neq P (X \leq h_{X_p}^{-1} (y) | X_p=x_p)$$
which means $h(X,X_p)$ cannot be monotonically increasing.\\

\noindent (\ref{6}) Notice we can always generate a (reversible) transformation of $h(X,X_p)$ that will make it monotonically increasing with respect to $X$, since $X$ is uniquely recoverable from $h(X,X_p)$ and $X_p$. Consider this transformation as $S(h(X,X_p))$. Therefore, we found $Y=S(h(X,X_p))$ such that $y$ is monotonically increasing, independent of $X_p$ and $X$ is uniquely recoverable from $Y$ and $X_p$. This contradicts the uniqueness of $g(X,X_p)$ unless $S(h(X,X_p))=g(X,X_p)$, which means 
$h(X,X_p)=S^{-1} (g(X,X_p))$.
\end{proof}

\subsection{The Existence of a Monotonically Increasing Transformation}
Following the properties we presented in the previous sections, it is enough to find $Y=g(X,X_p)$ which is invertible and monotonically increasing with respect to $X$ given $X_p=x_p$, and satisfies
$$ F_{Y|X_p} (y│X_p=x_p)=F_{Y|X_p} (g_{X_p}^{-1} (y)|X_p=x_p)=F_Y (y)=y.$$ 	
If such $Y=g(X,X_p)$ exists then
\begin{enumerate}
\item  If $F_{X|X_p} (x│X_p=x_p)$ is monotonically increasing, then $Y=g(X,X_p)$ is unique according to Lemma \ref{uniquness_1}
\item  If $X|X_p$ takes on discrete values, then again $Y=g(X,X_p)$ is unique, up to different transformations in zero probability regions of the $X|X_p$
\item  Any other transformations $h(X,X_p)$ that may satisfy conditions (i),(ii) and (iii) is necessarily a function of $g(X,X_p)$ (and not monotonically increasing).
\end{enumerate}

\noindent Following lemma \ref{uniquness_1} we define $Y=F_{X|X_p} (x|x_p)-\Theta \cdot P_{X|X_p} (x│x_p)$, where $\Theta \sim \text{Unif}[0,1]$ is statistically independent of $X$ and $X_p$. Therefore we have that
\begin{align}
F_{Y|X_p} (y|x_p)=& P(F_{X|X_p} (x|x_p)-\Theta \cdot P_{X|X_p} (x│_p) \leq y | X_p=x_p)=\\\nonumber
&P(F_{X|X_p} (x|x_p)-\Theta \cdot P_{X|X_p} (x│x_p) \leq h^{-1} (y))=y=F_Y(y)
\end{align}
where the first equality follows from the fact that all the terms in $F_{X|X_p} (x|x_p)-\Theta \cdot P_{X|X_p} (x│x_p) \leq h^{-1} (y)$ are already conditioned on $X_p$, or statistically independent of $X_p$, and the second equality follows from $	F_{X|X_p} (x|x_p)-\Theta \cdot P_{X|X_p} (x│x_p) \sim \text{Unif}[0,1]$, according to lemma \ref{uniquness_1}. The third condition is remaining requirement. However, it is easy to see that $Y=F_{X|X_p} (x|x_p)-\Theta \cdot P_{X|X_p} (x│x_p)$ is reversible with respect to $X$ given $X_p=x_p$. Therefore, we found a monotonically increasing transformation $Y=g(X,X_p)$ that satisfies 
$$ F_{X|X_p} (x|x_p)=F_{X|X_p} (g_{X_p}^{-1} (y)|X_p=x_p)=F_Y (y)=y$$

\section{The Non-Uniform Case}

Going back to our original task, we are interested in finding such $Y=g(X,X_p)$ such that there exists a random variable $Y$ that satisfies conditions (i), (ii) and (iii). \\

\noindent Throughout the previous sections we discussed the uniqueness of the case in which $Y$ is uniformly distributed. Assume we are now interested in a non-uniformly distributed $Y$. Lemma \ref{uniquness_1} shows us that we can always reshape a uniform distribution to any probability measure by applying the inverse of the desired CDF on it. Moreover, if the desired probability measure is non-atomic, this transformation is reversible. Is this mapping unique? This question was already answered by Lemmas \ref{uniquenss_2} and \ref{uniqueness_3}; if we limit ourselves to monotonically increasing transformation, then the solution we found is unique.\\

\noindent However, assume we do not limit ourselves to monotonically increasing transformations and we have a transformation  $V=G(Y)$ that satisfies $V \sim F_V (v)$. Since $Y$  is uniformly distributed we can always shift between local transformations on sets of the same lengths while maintaining the transformation measurable. Then we can always find $S(G(Y))$ which makes it monotonically increasing with respect to $Y$. This contradicts the uniqueness of the monotonically increasing set unless $S(G(Y))$ equals the single unique transformation we found.  \\

\noindent Putting it all together we have a two stage process in which we first generate a uniform transformation and then shape it to a desired distribution $V$ through the inverse of the desired CDF. We show that in both stages, if we limit ourselves to monotonically increasing transformations the solution presented in \citep{shayevitz2011optimal} is unique. However, if we allow ourselves a broader family of functions we necessarily end up with either the same solution, or a ``reordering" of it which is not monotonically increasing.

  \chapter{}
  \label{Appendix_mutual_inf_max_points}
We analyze the three different regions of $\beta$, compared with the parameteres of the Markov process, $\alpha_1 \leq \alpha_2$.
\begin{proposition}
For $\beta<\alpha_1<\alpha_2<\frac{1}{2}$,  the maximal mutual information, $ I_{max}\left(X_k;Y_k|X^{k-1} \right)$, is monotonically increasing in $\beta$
\end{proposition}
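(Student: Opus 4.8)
The claim is that for $\beta < \alpha_1 < \alpha_2 < \tfrac12$, the maximal conditional mutual information
\[
I_{\max}\left(X_k;Y_k\mid X^{k-1}\right) = \gamma_{k-1}\,I_{\max}^{\beta<\alpha_1}(X;Y) + (1-\gamma_{k-1})\,I_{\max}^{\beta<\alpha_2}(X;Y)
\]
is monotonically increasing in $\beta$. By equation (\ref{beta<alpha}), each term has the form $I_{\max}^{\beta<\alpha}(X;Y) = h_b(\beta) - \alpha\, h_b\!\left(\frac{\beta}{\alpha}\right)$. Since $\gamma_{k-1}$ and $1-\gamma_{k-1}$ are nonnegative constants (they do not depend on $\beta$), it suffices to show that for each fixed $\alpha$ with $\beta < \alpha < \tfrac12$ the single-term function
\[
\varphi_\alpha(\beta) \coloneqq h_b(\beta) - \alpha\, h_b\!\left(\frac{\beta}{\alpha}\right)
\]
is strictly increasing on $\beta \in (0,\alpha)$; the convex combination of two such functions is then increasing as well.

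\textbf{Key steps.} First I would differentiate $\varphi_\alpha$ with respect to $\beta$. Using $h_b'(x) = \log\frac{1-x}{x}$ (logarithm base $2$), the chain rule on the second term gives
\[
\varphi_\alpha'(\beta) = \log\frac{1-\beta}{\beta} - \log\frac{1 - \beta/\alpha}{\beta/\alpha} = \log\frac{1-\beta}{\beta} - \log\frac{\alpha-\beta}{\beta} = \log\frac{1-\beta}{\alpha-\beta}.
\]
The next step is to observe that since $\alpha < 1$ we have $1 - \beta > \alpha - \beta > 0$ on the whole interval $0 < \beta < \alpha$, hence the ratio $\frac{1-\beta}{\alpha-\beta}$ exceeds $1$ and its logarithm is strictly positive. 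Therefore $\varphi_\alpha'(\beta) > 0$ for all $\beta \in (0,\alpha)$, so $\varphi_\alpha$ is strictly increasing there. Applying this with $\alpha = \alpha_1$ and $\alpha = \alpha_2$ (both of which satisfy $\alpha_i < \tfrac12 < 1$ and $\beta < \alpha_i$), and taking the nonnegative convex combination with weights $\gamma_{k-1}$ and $1-\gamma_{k-1}$, yields that $I_{\max}(X_k;Y_k\mid X^{k-1})$ is monotonically increasing in $\beta$ on the region $\beta < \alpha_1 < \alpha_2$.

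\textbf{Main obstacle.} The computation itself is routine; the only place requiring care is the boundary behaviour of $h_b'$ as $\beta \to 0^+$, where $h_b'(\beta) \to +\infty$ and $h_b'(\beta/\alpha) \to +\infty$, so the derivative is an $\infty - \infty$ form. The cancellation above shows the difference is in fact bounded and equals $\log\frac{1-\beta}{\alpha-\beta}$, which tends to $\log(1/\alpha) > 0$ as $\beta \to 0^+$ and stays positive up to $\beta \to \alpha^-$ (where it tends to $+\infty$); so monotonicity extends to the closure. A secondary point worth stating explicitly is that $\beta$ is restricted to $(0,\alpha_1)$ in this region, which is exactly what guarantees both summands are in the $\beta<\alpha_i$ branch simultaneously, so no piecewise issue arises within the region under consideration.
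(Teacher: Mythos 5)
Your proof is correct and takes essentially the same route as the paper: differentiate with respect to $\beta$ using $h_b'(x)=\log\frac{1-x}{x}$ and the chain rule (which turns $\alpha\,h_b\!\left(\frac{\beta}{\alpha}\right)$ into $\log\frac{\alpha-\beta}{\beta}$), then verify positivity on $\beta\in(0,\alpha_1)$. The only difference is organizational: you differentiate each summand separately, so the total derivative appears directly as $\gamma_{k-1}\log\frac{1-\beta}{\alpha_1-\beta}+(1-\gamma_{k-1})\log\frac{1-\beta}{\alpha_2-\beta}$ with each term manifestly positive, which dispenses with the paper's intermediate step of comparing the $\alpha_1$ and $\alpha_2$ log terms.
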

\begin{proof}
Let us derive the maximal mutual information with respect to $\beta$:
\begin{align}
\frac{\partial}{\partial \beta} I_{max} \left(X_k;Y_k|X^{k-1} \right)& = \gamma \left(\log \frac{1-\beta}{\beta}-\alpha_1\left(\log\left(1-\frac{\beta}{\alpha_1}\right)-\log\frac{\beta}{\alpha_1}\right)\frac{1}{\alpha_1} \right)+\\\nonumber
&(1-\gamma) \left(\log \frac{1-\beta}{\beta}-\alpha_2\left(\log\left( 1-\frac{\beta}{\alpha_2} \right)-\log \frac{\beta}{\alpha_2}   \right) \frac{1}{\alpha_2}\right)=\\\nonumber
&\log \frac{1-\beta}{\beta}-\gamma \log \frac{\alpha_1-\beta}{\beta}-(1-\gamma) \log \frac{\alpha_2-\beta}{\beta}>\\\nonumber
&\log \frac {1-\beta}{\beta} - \gamma \log \frac{\alpha_1-\beta}{\beta}-(1-\gamma)\log \frac{\alpha_1-\beta}{\beta}=\\\nonumber
&\log \frac{1-\beta}{\alpha_1-\beta}>0 
\end{align}
where the first inequality follows from $\frac{\alpha_2-\beta}{\beta}>\frac{\alpha_1-\beta}{\beta}$ and the second inequality results from  $ \alpha_1<1  \Rightarrow \frac{1-\beta}{\alpha_1-\beta}>1$.
\end{proof}

\begin{proposition}
For $\alpha_1<\alpha_2<\beta<\frac{1}{2}$,  the maximal mutual information, $ I_{max}\left(X_k;Y_k|X^{k-1} \right)$, is monotonically decreasing in $\beta$
\end{proposition}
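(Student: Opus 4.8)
The plan is to mirror the proof of the previous proposition almost verbatim, since the statement to be proved --- that for $\alpha_1<\alpha_2<\beta<\frac{1}{2}$ the maximal mutual information $I_{\max}\left(X_k;Y_k|X^{k-1}\right)$ is monotonically decreasing in $\beta$ --- is the symmetric counterpart of the case $\beta<\alpha_1<\alpha_2$. First I would recall that in this regime we have $\beta>\alpha_2>\alpha_1$, so both conditional contributions use the $\beta>\alpha$ branch of the maximal mutual information, namely $I_{\max}^{\beta>\alpha_i}(X;Y)=h_b(\beta)-(1-\alpha_i)h_b\!\left(\frac{\beta-\alpha_i}{1-\alpha_i}\right)$ for $i=1,2$. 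Hence $I_{\max}\left(X_k;Y_k|X^{k-1}\right)=\gamma\, I_{\max}^{\beta>\alpha_1}(X;Y)+(1-\gamma)\,I_{\max}^{\beta>\alpha_2}(X;Y)$, a convex combination of two smooth functions of $\beta$.

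Next I would differentiate with respect to $\beta$. Using $\frac{d}{d\beta}h_b(\beta)=\log\frac{1-\beta}{\beta}$ and the chain rule on each term $-(1-\alpha_i)h_b\!\left(\frac{\beta-\alpha_i}{1-\alpha_i}\right)$, whose derivative is $-\log\frac{1-\beta}{\beta-\alpha_i}$, I obtain
\begin{align}
\frac{\partial}{\partial\beta}I_{\max}\left(X_k;Y_k|X^{k-1}\right)
&=\gamma\left(\log\frac{1-\beta}{\beta}-\log\frac{1-\beta}{\beta-\alpha_1}\right)
+(1-\gamma)\left(\log\frac{1-\beta}{\beta}-\log\frac{1-\beta}{\beta-\alpha_2}\right)\nonumber\\
&=\log\frac{1-\beta}{\beta}-\gamma\log\frac{1-\beta}{\beta-\alpha_1}-(1-\gamma)\log\frac{1-\beta}{\beta-\alpha_2}.\nonumber
\end{align}
Then I would bound this from above: since $\beta-\alpha_2<\beta-\alpha_1<\beta$, we have $\frac{1-\beta}{\beta-\alpha_2}>\frac{1-\beta}{\beta-\alpha_1}>\frac{1-\beta}{\beta}>1$, so replacing the $\alpha_2$-term by the $\alpha_1$-term only increases the subtracted quantity, giving
$$\frac{\partial}{\partial\beta}I_{\max}\left(X_k;Y_k|X^{k-1}\right)<\log\frac{1-\beta}{\beta}-\log\frac{1-\beta}{\beta-\alpha_1}=\log\frac{\beta-\alpha_1}{\beta}<0,$$
where the last inequality holds because $\alpha_1>0$ forces $\frac{\beta-\alpha_1}{\beta}<1$. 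This establishes strict monotone decrease.

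The only mild subtlety --- and the step I would be most careful about --- is correctly tracking the branch of $h_b$ used in each conditional term and the signs coming from the chain rule, since the argument $\frac{\beta-\alpha_i}{1-\alpha_i}$ must lie in $(0,\tfrac12)$ for the formula $I_{\max}^{\beta>\alpha_i}$ to be the genuine maximum; this is guaranteed by $\alpha_i<\beta<\tfrac12$. Everything else is a routine symmetric analogue of the preceding proposition's computation, so no genuine obstacle is expected; I would simply present the derivative, the two-step chain of inequalities, and conclude. If desired, one could also note that the continuity of $I\left(X_k;Y_k|X^{k-1}\right)$ in $\beta$ (already observed in the main text) together with this monotonicity and the corresponding increasing behavior for $\beta<\alpha_1$ immediately yields that any interior stationary point in $[\alpha_1,\alpha_2]$ is a local minimum, consistent with the claim that the maximum is attained at $\beta=\alpha_1$ or $\beta=\alpha_2$.
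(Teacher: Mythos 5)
Your proof is correct and follows essentially the same route as the paper's: the same use of the $\beta>\alpha_i$ branch for both terms, the same derivative $\log\frac{1-\beta}{\beta}-\gamma\log\frac{1-\beta}{\beta-\alpha_1}-(1-\gamma)\log\frac{1-\beta}{\beta-\alpha_2}$, the same replacement of the $\alpha_2$-term by the $\alpha_1$-term, and the same conclusion $\log\frac{\beta-\alpha_1}{\beta}<0$. One wording quibble only: that replacement \emph{decreases} the subtracted quantity (and hence increases the whole expression, which is what yields the upper bound); your displayed inequality already has the correct direction, so the argument itself is fine.
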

\begin{proof}
Let us again derive the maximal mutual information with respect to $\beta$:
\begin{align}
&\frac{\partial}{\partial \beta} I_{max} \left(X_k;Y_k|X^{k-1} \right) =\\\nonumber
& \gamma \left(\log \frac{1-\beta}{\beta}-(1-\alpha_1)\left(\log\left(1-\frac{\beta-\alpha_1}{1-\alpha_1}\right)-\log\frac{\beta-\alpha_1}{1-\alpha_1}\right)\frac{1}{1-\alpha_1} \right)+\\\nonumber
&(1-\gamma) \left(\log \frac{1-\beta}{\beta}-(1-\alpha_2)\left(\log\left( 1-\frac{\beta-\alpha_2}{1-\alpha_2} \right)-\log \frac{\beta-\alpha_2}{1-\alpha_2}   \right) \frac{1}{1-\alpha_2}\right)=\\\nonumber
&\log \frac{1-\beta}{\beta}-\gamma \log \frac{1-\beta}{\beta-\alpha_1}-(1-\gamma) \log \frac{1-\beta}{\beta-\alpha_2}<\\\nonumber
&\log \frac {1-\beta}{\beta} - \gamma \log \frac{1-\beta}{\beta-\alpha_1}-(1-\gamma)\log \frac{1-\beta}{\beta-\alpha_1}=\\\nonumber
&\log \frac{\beta-\alpha_1}{\beta}<0 
\end{align}
where the first inequality follows from $\frac{1-\beta}{\beta-\alpha_1}<\frac{1-\beta}{\beta-\alpha_2}$.
\end{proof}

\begin{proposition}
All optimum points in the range of $\alpha_1<\beta<\alpha_2$ are local minimums
\end{proposition}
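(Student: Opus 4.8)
The plan is to reduce the claim to a single second‑derivative computation on the interval $\alpha_1\le\beta<\alpha_2$. Write $\gamma\triangleq\gamma_{k-1}$ and, in this range, set
\[
f(\beta)\triangleq I\!\left(X_k;Y_k\,\big|\,X^{k-1}\right)
=\gamma\!\left[h_b(\beta)-(1-\alpha_1)h_b\!\left(\tfrac{\beta-\alpha_1}{1-\alpha_1}\right)\right]
+(1-\gamma)\!\left[h_b(\beta)-\alpha_2 h_b\!\left(\tfrac{\beta}{\alpha_2}\right)\right],
\]
which I would first regroup as $f(\beta)=h_b(\beta)-\gamma(1-\alpha_1)h_b\!\big(\tfrac{\beta-\alpha_1}{1-\alpha_1}\big)-(1-\gamma)\alpha_2 h_b\!\big(\tfrac{\beta}{\alpha_2}\big)$. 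Using $h_b'(x)=\log\tfrac{1-x}{x}$ and noting that the chain‑rule factors $\tfrac{1}{1-\alpha_1}$ and $\tfrac{1}{\alpha_2}$ cancel the prefactors $(1-\alpha_1)$ and $\alpha_2$, differentiation should collapse to the compact form
\[
f'(\beta)=\gamma\log\frac{\beta-\alpha_1}{\beta}+(1-\gamma)\log\frac{1-\beta}{\alpha_2-\beta},
\]
which is exactly (a rearrangement of) the derivative expressions already used for the two monotonicity propositions above in this appendix.

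Next I would differentiate once more. Each of the two logarithmic terms has an elementary derivative, and I expect
\[
f''(\beta)=\frac{1}{\log_e 2}\left(\frac{\gamma\,\alpha_1}{\beta(\beta-\alpha_1)}+\frac{(1-\gamma)(1-\alpha_2)}{(\alpha_2-\beta)(1-\beta)}\right).
\]
On the relevant range we have $0\le\alpha_1<\beta<\alpha_2<\tfrac12<1$ and, in the non‑degenerate case, $0<\gamma<1$; hence $\beta>0$, $\beta-\alpha_1>0$, $\alpha_2-\beta>0$, $1-\beta>0$, $1-\alpha_2>\tfrac12$, so the first summand is nonnegative and the second is strictly positive, giving $f''(\beta)>0$ throughout the open interval $(\alpha_1,\alpha_2)$. (The boundary cases $\gamma\in\{0,1\}$ collapse the mixture to a single branch and make the statement vacuous or weakly true; they can be dismissed in a line, consistent with the non‑degeneracy assumptions made around equation \eqref{markov_model}.)

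From $f''>0$ on all of $(\alpha_1,\alpha_2)$ the conclusion is immediate: $f'$ is strictly increasing there, so it vanishes at most once, and at any interior critical point $\beta_0$ the second‑derivative test gives $f''(\beta_0)>0$, i.e. $\beta_0$ is a strict local minimum of $f$ — which is precisely the proposition. As a free byproduct, since $f'(\beta)\to-\infty$ as $\beta\downarrow\alpha_1$ and $f'(\beta)\to+\infty$ as $\beta\uparrow\alpha_2$, there is in fact exactly one interior critical point and it is the minimizer of $f$ over $[\alpha_1,\alpha_2]$, so together with the two preceding monotonicity propositions the maximum of $f$ over $[\alpha_1,\alpha_2]$ is attained at an endpoint $\beta=\alpha_1$ or $\beta=\alpha_2$.

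I do not anticipate a conceptual obstacle here; the only real work is the bookkeeping in the two differentiations — in particular verifying the clean cancellation of $(1-\alpha_1)$ and $\alpha_2$ against the chain‑rule derivatives of $h_b\!\big(\tfrac{\beta-\alpha_1}{1-\alpha_1}\big)$ and $h_b\!\big(\tfrac{\beta}{\alpha_2}\big)$ — and then reading off the sign of $f''$ using nothing beyond the hypothesis that all parameters lie in $[0,\tfrac12)$. The one point to state carefully is the degenerate‑parameter caveat, where strict convexity may weaken to weak convexity but the statement still holds.
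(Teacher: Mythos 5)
Your proof is correct and takes essentially the same route as the paper's own proof: write the objective on $(\alpha_1,\alpha_2)$ as the $\gamma$-mixture of the two branches of the maximal mutual information, differentiate once, differentiate again, and conclude from strict positivity of the second derivative that every interior critical point is a local minimum (with the endpoint/monotonicity propositions then forcing the maximizer to $\beta=\alpha_1$ or $\beta=\alpha_2$). Incidentally, your expression $f''(\beta)=\frac{1}{\log_e 2}\left(\frac{\gamma\,\alpha_1}{\beta(\beta-\alpha_1)}+\frac{(1-\gamma)(1-\alpha_2)}{(\alpha_2-\beta)(1-\beta)}\right)$ is the correct one, whereas the paper's displayed second derivative inherits a small slip from its rearranged first derivative (it writes $1-\alpha_1$ where $\beta-\alpha_1$ belongs); since both expressions are strictly positive on the interval, the conclusion is unaffected.
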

\begin{proof}
In the same manner, we derive the maximal mutual information with respect to $\beta$:
\begin{align}
\frac{\partial}{\partial \beta} I_{max} \left(X_k;Y_k|X^{k-1} \right) =&\log \frac {1-\beta}{\beta} - \gamma \log \frac{1-\beta}{\beta-\alpha_1}-(1-\gamma)\log \frac{\alpha_2-\beta}{\beta}=\\\nonumber
&\log \frac{1-\beta}{\alpha_2-\beta}-\gamma\left(\log \frac {1-\beta}{1-\alpha_1}  -\log \frac {\alpha_2-\beta}{\beta}\right) 
\end{align}
\begin{align}
\frac{\partial^2}{\partial^2 \beta} I_{max} \left(X_k;Y_k|X^{k-1} \right) =& \frac{\alpha_2-\beta}{1-\beta}\cdot \frac{1-\alpha_2}{(\alpha_2-\beta)^2}-\\\nonumber
&\gamma\left(  \frac{1-\alpha_1}{1-\beta}\cdot \frac{-1}{1-\alpha_1}-\frac{\beta}{\alpha_2-\beta}\cdot\frac{\alpha_2}{\beta^2}\right)=\\\nonumber
&\frac{1-\alpha_2}{(\alpha_2-\beta)(1-\beta)}+\gamma\left(\frac{1}{1-\beta}+\frac{\alpha_2}{(\alpha_2-\beta)\beta}   \right)>0
\end{align}
\end{proof}

  \chapter{}
  \label{Appendix_stationary_case_a_1_less_a_2}
We would like to show that for $\alpha_1<\alpha_2<\frac{1}{2}$ the following applies:

\begin{equation}
\frac{\alpha_2}{1-\alpha_1-\alpha_2}<\frac{h_b(\alpha_2)-h_b(\alpha_1)+\alpha_2 h_b\left( \frac{\alpha_1}{\alpha_2}\right)}{\alpha_2 h_b\left( \frac{\alpha_1}{\alpha_2}\right)+(1-\alpha_1)h_b\left( \frac{\alpha_2-\alpha_1}{1-\alpha_1}\right)}
\end{equation}
\begin{proof}
\noindent Let us first cross multiply both sides of the inequality
\begin{align}
&\alpha^2_2 h_b\left( \frac{\alpha_1}{\alpha_2}\right)+\alpha_2(1-\alpha_1)h_b\left( \frac{\alpha_2-\alpha_1}{1-\alpha_1}\right)<\\\nonumber
&(1-\alpha_1-\alpha_2)(h_b(\alpha_2)-h_b(\alpha_1))+(1-\alpha_1)\alpha_2 h_b\left( \frac{\alpha_1}{\alpha_2}\right)+\alpha^2_2 h_b\left( \frac{\alpha_1}{\alpha_2}\right)
\end{align}
which leads to
\begin{equation}
\nonumber
(1-\alpha_1-\alpha_2)(h_b(\alpha_2)-h_b(\alpha_1))+(1-\alpha_1)\alpha_2 h_b\left( \frac{\alpha_1}{\alpha_2}\right)-\alpha_2(1-\alpha_1)h_b\left( \frac{\alpha_2-\alpha_1}{1-\alpha_1}\right)>0.
\end{equation}
\\
Since $h_b(\alpha_2)-h_b(\alpha_1)>0$ and $1-\alpha_1-\alpha_2>(1-\alpha_1)\alpha_2$ we have that
\begin{align}
\nonumber
&(1-\alpha_1-\alpha_2)(h_b(\alpha_2)-h_b(\alpha_1))+(1-\alpha_1)\alpha_2 h_b\left( \frac{\alpha_1}{\alpha_2}\right)-\alpha_2(1-\alpha_1)h_b\left( \frac{\alpha_2-\alpha_1}{1-\alpha_1}\right)>\\\nonumber
&(1-\alpha_1)\alpha_2\left[h_b(\alpha_2)-h_b(\alpha_1)+h_b\left( \frac{\alpha_1}{\alpha_2}\right)- h_b\left( \frac{\alpha_2-\alpha_1}{1-\alpha_1}\right)  \right].
\end{align}
Therefore, it is enough to show that $h_b(\alpha_2)-h_b(\alpha_1)+h_b\left( \frac{\alpha_1}{\alpha_2}\right)- h_b\left( \frac{\alpha_2-\alpha_1}{1-\alpha_1}\right) >0$. Since $h_b\left( \frac{\alpha_2-\alpha_1}{1-\alpha_1}\right)=h_b\left( \frac{1-\alpha_2}{1-\alpha_1}\right)$ we can rewrite the inequality as
\begin{align}
\nonumber
h_b(\alpha_2)-h_b(\alpha_1)>h_b\left( \frac{1-\alpha_2}{1-\alpha_1}\right)-h_b\left( \frac{\alpha_1}{\alpha_2}\right).
\end{align}
Notice that  $\alpha_1<\alpha_2<\frac{1}{2}$ follows that $\frac{1-\alpha_2}{1-\alpha_1}>\frac{1}{2}$.\\

\noindent Let us first consider the case where $\frac{\alpha_1}{\alpha_2}\geq\frac{1}{2}$. We have that
\begin{align}
 \frac{1-\alpha_2}{1-\alpha_1}-\frac{\alpha_1}{\alpha_2}=\frac{(\alpha_2-\alpha_1)(1-\alpha_1-\alpha_2)}{(1-\alpha_1)\alpha_2}>0.
\end{align}
Since $\frac{1-\alpha_2}{1-\alpha_1}-\frac{\alpha_1}{\alpha_2}>\frac{1}{2}$ and $h_b(p)$ is monotonically decreasing for $p\geq\frac{1}{2}$, we have that 
\begin{equation}
h_b\left( \frac{1-\alpha_2}{1-\alpha_1}\right)-h_b\left( \frac{\alpha_1}{\alpha_2}\right)<0<h_b(\alpha_2)-h_b(\alpha_1).
\end{equation}
Now consider the case where $\frac{\alpha_1}{\alpha_2}\geq\frac{1}{2}$. We notice that:
\begin{equation}
h_b\left( \frac{1-\alpha_2}{1-\alpha_1}\right)=h_b\left(1- \frac{1-\alpha_2}{1-\alpha_1}\right)=h_b\left( \frac{\alpha_2-\alpha_1}{1-\alpha_1}\right)
\end{equation}
where $\frac{\alpha_2-\alpha_1}{1-\alpha_1}<\frac{1}{2}.$
In addition, 
\begin{equation}
\frac{\alpha_2-\alpha_1}{1-\alpha_1}-\frac{\alpha_1}{\alpha_2}=\frac{(\alpha_2-\alpha_1)^2+\alpha_1(1-\alpha_2)}{(1-\alpha_1)\alpha_2}>0.
\end{equation}
Therefore, we would like to show that 
$$h_b(\alpha_2)-h_b(\alpha_1)>h_b\left( \frac{\alpha_2-\alpha_1}{1-\alpha_1}\right)-h_b\left( \frac{\alpha_1}{\alpha21}\right)$$
where all the binary entropy arguments are smaller than $\frac{1}{2}$ and both sides of the inequality are non-negative. In order to prove this inequality we remember that $h_b(p)$ is monotonically increasing with a decreasing slope, $\frac{\partial}{\partial p}h_b(p)=\log\frac{1-p}{p}$, for $p<\frac{1}{2}$. Then, it is enough to show that $\alpha_1<\frac{\alpha_1}{\alpha_2}$ (immediate result) and 
$$\alpha_2-\alpha_1>\frac{\alpha_2-\alpha_1}{1-\alpha_1}- \frac{\alpha_1}{\alpha_2}.$$
Looking at the difference between the two sides of the inequality we obtain:
\begin{align}
\frac{\alpha_2-\alpha_1}{1-\alpha_1}- \frac{\alpha_1}{\alpha_2}-(\alpha_2-\alpha_1)=&
(\alpha_2-\alpha_1)\frac{\alpha_1}{1-\alpha_1}-\frac{\alpha_1}{\alpha_2}<\\\nonumber
&\frac{1}{2}(1-\alpha_1)\frac{\alpha_1}{1-\alpha_1}-\frac{\alpha_1}{\alpha_2}=\alpha_1\left(\frac{\alpha_2-2}{2\alpha_2} \right)<0
\end{align}
where the inequality follows from $\frac{\alpha_2-\alpha_1}{1-\alpha_1}<\frac{1}{2} \Rightarrow \alpha_2-\alpha_1<\frac{1}{2}(1-\alpha_1)$.
\end{proof}

\end{appendices}

%\phantomsection
%\addcontentsline{toc}{chapter}{Appendix}
%\appendix
%\chapter[Appendix]{}\label{Appendix}
%\phantomsection
%\addcontentsline{toc}{chapter}{Appendix}
%\input{Chapters//Appendix}

%\newpage
%\thispagestyle{empty}
%\mbox{}

\backmatter

\bibliographystyle{authordate1}
\phantomsection
\addcontentsline{toc}{chapter}{Bibliography}
\bibliography{Chapters//References}

\end{document}